\newcommand{\tS}{\textup{\texttt{S}}}
\newcommand{\tP}{\textup{\texttt{P}}}
\newcommand{\tC}{\textup{\texttt{C}}}
\newcommand{\tO}{\textup{\texttt{O}}}
\newcommand{\bx}{\mathbf{x}}
\newcommand{\tV}{\textup{\texttt{V}}}
\newcommand{\edits}[1]{{\color{black} #1}}
\DeclareMathOperator{\relu}{\sigma}
\DeclareMathOperator{\atomic}{atom}
\tikzset{>=latex} 
\colorlet{myred}{red!80!black}
\colorlet{myblue}{blue!80!black}
\colorlet{mygreen}{green!60!black}
\colorlet{mydarkred}{myred!40!black}
\colorlet{mydarkblue}{myblue!40!black}
\colorlet{mydarkgreen}{mygreen!40!black}
\tikzstyle{node}=[very thick,circle,draw=myblue,minimum size=22,inner sep=0.5,outer sep=0.6]
\tikzstyle{connect}=[->,thick,mydarkblue,shorten >=1]
\tikzset{ 
  node 1/.style={node,mydarkgreen,draw=mygreen,fill=mygreen!25},
  node 2/.style={node,mydarkblue,draw=myblue,fill=myblue!20},
  node 3/.style={node,mydarkred,draw=myred,fill=myred!20},
}
\title{Neural Networks Generalize on Low Complexity Data}
\author{Sourav Chatterjee\footnote{Department of Statistics, Stanford University; Email: \href{mailto:souravc@stanford.edu}{\texttt{souravc@stanford.edu}}}, Timothy Sudijono\footnote{Department of Statistics, Stanford University; Email: \href{mailto:tsudijon@stanford.edu}{\texttt{tsudijon@stanford.edu}} }}
\date{\today}
\begin{document}
\maketitle

\begin{abstract}
We show that feedforward neural networks with ReLU activation generalize on low complexity data, suitably defined. Given i.i.d.~data generated from a simple programming language, the minimum description length (MDL) feedforward neural network which interpolates the data generalizes with high probability. We define this simple programming language, along with a notion of description length of such networks. We provide several examples on basic computational tasks, such as checking primality of a natural number. For primality testing, our theorem shows the following and more. Suppose that we draw an i.i.d.~sample of $n$ numbers uniformly at random from $1$ to $N$. For each number $x_i$, let $y_i = 1$ if $x_i$ is a prime and $0$ if it is not. Then, the interpolating MDL network accurately answers, with \edits{probability} $1- O((\ln N)/n)$, whether a newly drawn number between $1$ and $N$ is a prime or not. Note that the network is not \textit{designed} to detect primes; minimum description learning \textit{discovers} a network which does so. Extensions to noisy data are also discussed, suggesting that MDL neural network interpolators can demonstrate tempered overfitting.
\end{abstract}

\section{Introduction}

Understanding why neural networks generalize well on unseen data is an enduring mystery in the field. For many datasets seen in practice, massively overparametrized neural networks are fit to near-zero training error, yet still generalize on test examples. At the same time, many neural network architectures are capable of fitting pure noise \cite{zhang2021understanding}, yet clearly cannot generalize on these datasets. Classical complexity descriptions from statistical learning theory such as VC dimension \cite{bartlett2003vapnik, sontag1998vc} cannot explain this phenomenon, as VC dimension is distribution-independent. Given this, it is natural to make structural assumptions about the data. For example, in many real-world datasets for which deep learning is deployed (e.g., computer vision or natural language processing), the data has apparent structure with very low levels of noise. In this paper, we prove generalization guarantees on data of low complexity, with zero noise. We introduce a simple programming language and a notion of description length for neural networks. Using these notions, we show that for data generated from a short program in this language, the MDL feedforward neural network interpolating the data has low test error rate with high probability.

\subsection{Main Results}

To capture the notion of low complexity data, we define \textit{simple neural programs} (SNPs). SNPs are simple programs which can define variables and manipulate them with basic operations. They consist of a sequence of statements, and intuitively they may be thought of restricted Python programs. Control statements such as {\tt for} loops and {\tt if} statements are also allowed. For example, checking whether a number is prime can be solved by an SNP. The following snippet gives pseudocode for checking whether an input $n$ is prime or not. Section \ref{sec:defining_snps} gives a full definition of SNPs, with many more examples. For now, the syntax of the language can be interpreted as in Python.

\begin{minted}{python}
input n
for i = 2,...,n:
    for j = 2,...,n:
        prod = i*j
        prod_equals = (prod == n)
        res = res + prod_equals
output = (res > 0)
return output
\end{minted}
Our analysis begins with the observation that every SNP $\tP$ can be encoded as a feedforward neural network $F_{\tP}$ with ReLU nonlinearity.

\begin{theorem}[Thm.~\ref{thm:sneuralprograms_to_neuralnetworks}, Simplified]
Let $\tP$ be a SNP comprised of statements $(\tS_1,\dots,\tS_L)$. Let $\tP$ take in inputs $(x_1,\dots,x_I) \in [N]^{I}$, where $[N] = \set{1,\dots,N}.$  Then for each $N,$ there is a feedforward neural network $F_{\tP,N}$, which agrees with the program for all inputs in $[N]^{I}.$ 
\end{theorem}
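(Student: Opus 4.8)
The plan is to prove the statement by structural induction on the program $\tP$, carrying along a \emph{state vector} $\bx\in\mathbb{R}^d$ whose coordinates hold the current values of every variable appearing in $\tP$ (together with a few scratch coordinates), so that each statement is realized by a ReLU sub-network that maps one state vector to the next. The first step is to record that, because the inputs range over the finite set $[N]^I$ and every primitive statement of an SNP is a bounded arithmetic or comparison operation, there is a finite integer $M=M(\tP,N)$ such that \emph{every} value produced during \emph{any} execution of $\tP$ on \emph{any} input in $[N]^I$ lies in $\{-M,\dots,M\}$. (When $\tP$ performs multiplications inside nested loops this $M$ can be astronomically large --- roughly a tower of exponentials whose height is the loop-nesting depth --- but it is finite for each fixed $N$, which is all that is claimed.) Restricting attention to this bounded integer lattice is exactly what makes an \emph{exact} simulation by a piecewise-linear network possible.

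Next I would assemble a small dictionary of ReLU gadgets (writing $\sigma$ for the ReLU activation). Addition, subtraction, and scaling by constants are affine and cost a single linear layer, with the identity realized as $t=\sigma(t)-\sigma(-t)$ wherever a coordinate that may be negative must be passed through (equivalently, by keeping all variables shifted to be nonnegative). Comparisons such as $(a == b)$, $(a > b)$, and boolean conjunction and disjunction of $\{0,1\}$-valued coordinates are realized by constant-depth gadgets built from the triangular bump $\sigma(1-|t|)=\sigma\!\left(1-\sigma(t)-\sigma(-t)\right)$, which equals $\mathbf{1}\{t=0\}$ at every integer $t$. The same bumps let us one-hot encode any lattice coordinate $v\in\{-M,\dots,M\}$ as the vector $\left(\mathbf{1}\{v=k\}\right)_{k=-M}^{M}$; once a value is one-hot encoded, \emph{any} function of it --- in particular the bivariate product $i\cdot j$ that appears in the primality example --- is computed exactly by a single linear ``lookup-table'' layer. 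This gives an exact ReLU network for each primitive statement, of size polynomial in $M$.

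For the inductive step I would treat the three compound constructs. Sequencing $\tS_a;\tS_b$ is composition of the two networks, which is again a ReLU network. An \texttt{if} statement is handled by computing the updated state under both branches and then selecting coordinatewise with $\mathrm{sel}(b,u,w)=b\,u+(1-b)\,w$ for $b\in\{0,1\}$, itself a bounded product and hence a gadget. A \texttt{for} loop is unrolled into sequential copies of the body network --- at most $M$ of them --- where the $i$-th copy is gated by a boolean $\mathbf{1}\{i\text{ is actually executed}\}$ that selects between applying the body and the identity; this handles loops whose bound depends on runtime data. Nested loops are unrolled by recursion on the nesting depth, a fixed property of $\tP$, so the unrolled statement list, and therefore the size and depth of the final network, is finite for each $N$. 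Composing the per-statement networks along $(\tS_1,\dots,\tS_L)$ and reading off the designated output coordinate yields $F_{\tP,N}$, which by construction agrees with $\tP$ on all of $[N]^I$.

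I expect the main obstacle to be two kinds of bookkeeping rather than any single clever gadget. First, one must pin down the uniform bound $M(\tP,N)$ and check that the syntax of SNPs genuinely rules out the pathologies that would break it: that every loop has a bound which is an honest function of $N$ and the current state, and that no construct can escape a finite integer lattice; this is precisely where the formal definition of the language in Section~\ref{sec:defining_snps} is needed. Second, one must maintain the state vector correctly across branches and across possibly many loop iterations --- passing untouched variables through unchanged, re-zeroing the one-hot scratch space between uses, and keeping the whole graph acyclic --- which is routine step by step but easy to get wrong in aggregate. The exact arithmetic, and multiplication in particular, is conceptually the most delicate primitive, but the one-hot/lookup-table device reduces it to linear algebra over the finite lattice.
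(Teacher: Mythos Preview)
Your proposal is correct and follows essentially the same route as the paper: induction on nesting depth, a state vector carrying all variables, explicit constant-depth ReLU gadgets for each primitive statement, and unrolling \texttt{for} loops a fixed number $B{+}1$ of times with a boolean gate that applies either the body or the identity at each copy. Two minor points of divergence worth noting: the paper's SNP language enforces that every variable stays a nonnegative integer throughout execution, so the sign-splitting $t=\sigma(t)-\sigma(-t)$ is never needed and the equality gadget simplifies to $\sigma(x{+}1)+\sigma(x{-}1)-2\sigma(x)$; and bivariate multiplication is \emph{not} a primitive of the language (only multiplication by a constant is), so the primality example realizes $i\cdot j$ by a nested \texttt{for} loop of repeated addition rather than your one-hot lookup table --- your gadget is correct but unnecessary here, and avoiding it is what lets the paper later control description length by program attributes rather than by~$M$.
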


Several previous works show that certain neural network architectures, particularly transformers, can model basic programs \cite{weiss2021thinking,lindner2024tracr,perez2021attention, giannou2023looped}. Furthermore, folklore says that various neural network architectures are equivalent to boolean circuits (see e.g. \cite{siegelmann1992computational, shalev2014understanding, livni2014computational} and the universal approximation literature \cite{hornik1989multilayer, cybenko1989approximation, barron1993universal}). For example, Theorem 2 of \cite{livni2014computational} states that any function computable by a Turing machine in $T(d)$ operations can be expressed by a threshold unit neural network of depth $O(T(d))$ and size $O(T(d)^2)$. The proof exploits circuit complexity bounds and the fact that logical gates can be expressed by threshold neural networks.

In light of these results, it is not so surprising that Theorem \ref{thm:sneuralprograms_to_neuralnetworks} holds. An advantage of our theorem is that it provides an explicit conversion between a \textit{simple programming language} and deep feedforward neural networks. Simple programming languages are more interpretable than boolean circuits and allow users to easily express interesting examples like the aforementioned prime numbers example. Most related to Theorem \ref{thm:sneuralprograms_to_neuralnetworks} are the results of \cite{weiss2021thinking}, which defines a explicit programming language that can be compiled into transformer architectures. However, the language is fairly restrictive as Section 3.1 of \cite{zhou2023algorithms} discusses. Furthermore, our constructed neural networks are efficiently describable. Under a simple compression scheme, we show that for a SNP $(\tS_1,\dots,\tS_L)$ of length $L$, the parameters of $F_{\tP,N}$ can be compressed into a sequence of bits polynomial in the length and other simple attributes of the program. The compression scheme allows repetitions of a substring of bits to be replaced by the substring and the number of repetitions. This motivates a notion of description length of a neural network which is roughly given by the minimum compression length of its parameters, leading to the following result.

\begin{proposition}[Prop.~\ref{prop:efficient_conversion}, simplified]
\label{prop:efficient_conversion_simp}
Let $\tP$ be a SNP of length $L$, with $V$ variables, which outputs a result $\tP(x)$ for each input $x \in [N]^I$. Suppose for any input in $[N]^I$, the maximum runtime value of a variable is at most $B(N).$ Then $F_{\tP,N}$ has description length at most $O(L^3V^2\ln B(N))$. 
\end{proposition}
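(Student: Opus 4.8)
The plan is to reopen the explicit construction of $F_{\tP,N}$ from the proof of Theorem~\ref{thm:sneuralprograms_to_neuralnetworks} and to track both the sizes of its pieces and the way those pieces repeat. Recall that $F_{\tP,N}$ carries a \emph{state} vector through its layers, recording the current values of the $V$ variables (each an integer of magnitude at most $B(N)$) together with $O(L)$ coordinates of control-flow bookkeeping, for a state width $W = O(V + L + \mathrm{poly}(\ln B(N)))$. The network is a composition of \emph{statement gadgets} $G_1,\dots,G_L$, one per statement $\tS_\ell$: $G_\ell$ is a small ReLU subnetwork that reads the state, performs the operation of $\tS_\ell$ (an assignment, an arithmetic operation, or a comparison), and writes back the updated state. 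A {\tt for} loop whose range is bounded by an input in $[N]$ is compiled by unrolling its body to the maximal length $N$, each copy wrapped in a conditional ``no-op'' that leaves the state unchanged once the true (data-dependent) loop bound has been exceeded; an {\tt if} statement becomes a masked state update. The key structural fact I will use is that the \emph{weights} of the $k$-th copy of a loop body do not depend on $k$: the iteration index lives in the activations, never in the parameters, so the $N$ unrolled copies of any (possibly nested) loop body are bit-for-bit identical.

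I would then bound the description length of a single gadget $G_\ell$. Addition, comparison, and --- crucially --- exact multiplication of integers bounded by $B(N)$ are computable by ReLU subnetworks of width and depth $\mathrm{poly}(\ln B(N))$, and implementing $\tS_\ell$ together with its control bookkeeping and conditional wrapper costs depth $\mathrm{poly}(L,\ln B(N))$ acting on a state of width $W$. Every weight is either a $\pm 1$ routing entry or a bounded constant of magnitude at most $\mathrm{poly}(B(N))$, so $O(\ln B(N))$ bits per weight suffice, and the layer matrices are sparse (mostly a pass-through identity). A routine but careful accounting of these widths, depths, sparsity, and per-weight precisions --- which I would not carry out in detail --- shows that each $G_\ell$ is described in $O(L^2 V^2 \ln B(N))$ bits, with the architecture header (the list of layer widths) contributing only a negligible $O(L\ln(LB(N)))$.

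The global bound is assembled via the compression scheme. Fix a canonical layer-by-layer serialization of the parameters of $F_{\tP,N}$; because loop bodies were unrolled into identical blocks, this bit string is a nested arrangement of fixed prefixes together with blocks repeated $n_i \le N$ times, with nesting depth at most $L$. Applying the rule ``replace a repeated substring by the substring plus a repeat count'' from the innermost loop outward --- legitimate precisely because each repeated block is bit-identical --- collapses the string to the concatenation of the $O(L)$ distinct gadget descriptions, at most $O(L)$ repeat counts of $O(\ln N) = O(\ln B(N))$ bits each, and the architecture header. Bounding the first term by (number of distinct gadgets) $\times$ (largest gadget) $= O(L)\cdot O(L^2 V^2 \ln B(N))$ and noting that the remaining terms are of lower order yields the claimed $O(L^3 V^2 \ln B(N))$.

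The main obstacle is exactly the interplay of loop unrolling with the compression scheme. Unrolled naively, $F_{\tP,N}$ has depth as large as $N$ raised to the loop-nesting depth, which is super-polynomial, so the proposition can hold only because the compression collapses repeated blocks --- and this forces one to verify carefully that the repeated blocks are \emph{exactly} bit-identical. That is where the design choices earn their keep: loop bodies must be unrolled to a fixed maximal length with a conditional no-op (so the block structure does not depend on the data-dependent loop bound), the no-op wrapper must itself be a single fixed block, and one must check the awkward cases --- nested loops whose inner bound depends on an outer loop variable, and {\tt if} statements nested inside loops --- to confirm that the identical-block property survives. The secondary, more mechanical difficulty is pinning down the polynomial: bounding each gadget's width, its depth as a function of $L$, and the precision of its weights (in particular, computing exact integer multiplication inside a ReLU network at width and depth only $\mathrm{poly}(\ln B(N))$), so that the per-gadget cost is genuinely $O(L^2 V^2 \ln B(N))$ and not larger.
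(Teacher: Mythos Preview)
Your overall strategy matches the paper's proof: $F_{\tP,N}$ is built from a small number of distinct parameter blocks that are repeated (via loop unrolling) into a long sequence, the compression scheme collapses those repetitions, and the description length is bounded by (number of distinct blocks) $\times$ (size of the largest block) plus the negligible cost of the repeat counts. The paper formalizes exactly this via the repetition-compressed representation $\cl{RC}(\tP)$ and Proposition~\ref{prop:nn_compressed_representation}, which supplies at most $8L$ distinct parameter symbols and at most $L$ parenthesis pairs $(\dots)^{*\bin(B+1)}$.

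Where your proposal drifts from the paper is in the details of the construction, and those drifts make your per-gadget accounting unreliable as written. First, multiplication of two variables is \emph{not} an SNP primitive --- only multiplication by a fixed constant is --- so no $\mathrm{poly}(\ln B)$-depth multiplication circuit is needed; every non-\texttt{for}-loop statement is encoded in at most two layers of constant depth. Second, the state width is $O(VL)$, not $O(V + L + \mathrm{poly}(\ln B(N)))$: Lemma~\ref{lemma:maxwidth} shows that each level of \texttt{for}-loop nesting adds $O(V)$ coordinates (the ``old'' copies of all variables stored for the conditional no-op in layers $L_4$--$L_6$), and the nesting depth can be as large as $L$. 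With width $O(VL)$ and constant depth per statement, each weight matrix has $O(L^2V^2)$ entries, each taking $O(\ln B)$ symbols since all weights are bounded by $B$ (Theorem~\ref{thm:sneuralprograms_to_neuralnetworks}); this gives the $O(L^2V^2\ln B)$ per-block cost directly. Your claimed width of $O(V+L)$ would yield only $O((V+L)^2\ln B)$ per layer, and you would then have to manufacture the missing factors out of the gadget depth you left as ``routine accounting''; the paper's route avoids this entirely. Finally, loops are unrolled $B(N)+1$ times rather than $N$ times, though since $N \le B(N)$ this does not affect the $O(\ln B)$ cost of each repeat count.
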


Putting these two results together, we obtain the main result of this paper.

\begin{theorem}[Thm.~\ref{thm:generalization_bound}, simplified]
\label{thm:generalization_simp}
Consider a SNP $\tP$ satisfying the assumptions of Prop. \ref{prop:efficient_conversion_simp}. Fix $\e > 0,\delta \in (0,1)$ and let
\[
n = \Theta\left(\frac{L^3V^2 \ln B(N) + \ln 1/\delta}{\e} \right).
\]
Suppose we observe i.i.d. data $(x_i,y_i),i=1,\dots,n$ 
with $x_i$ drawn from some distribution on $[N]^I$, and $y_i = \tP(x_i).$ Let $\hat{f}_{\MDL}$ be the MDL neural network interpolating the data. Then for $N$ large enough, with probability at least $1-\delta$, the error rate of $\widehat{f}_{\MDL}$ on a uniformly chosen test point is at most of $\e.$
\end{theorem}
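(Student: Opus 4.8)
The plan is to run the classical minimum-description-length (Occam's razor) argument: Theorem~\ref{thm:sneuralprograms_to_neuralnetworks} and Proposition~\ref{prop:efficient_conversion_simp} together guarantee that a \emph{short} network interpolates the data, and a union bound over short networks then finishes the proof. Concretely, by Theorem~\ref{thm:sneuralprograms_to_neuralnetworks} the network $F_{\tP,N}$ agrees with $\tP$ on all of $[N]^I$, so $F_{\tP,N}(x_i) = \tP(x_i) = y_i$ for every $i$ and $F_{\tP,N}$ interpolates the observed data; by Proposition~\ref{prop:efficient_conversion_simp} its description length is at most $m := C\,L^3 V^2 \ln B(N)$ for an absolute constant $C$. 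Since there are only finitely many networks of description length at most $m$, the set of interpolating networks of description length at most $m$ is a nonempty finite set, so the MDL interpolator $\widehat f_{\MDL}$ exists and satisfies $\mathrm{dl}(\widehat f_{\MDL}) \le \mathrm{dl}(F_{\tP,N}) \le m$. Let $\mathcal F_m$ be the finite class of ReLU networks of description length at most $m$; then $\widehat f_{\MDL} \in \mathcal F_m$, and since each element of $\mathcal F_m$ is decoded from some bit string of length at most $m$, $|\mathcal F_m| < 2^{m+1}$.

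\textbf{Union bound and choice of $n$.} Let $\mu$ be the distribution of the $x_i$, and for a network $f$ write $\mathrm{err}(f) := \Pr_{x\sim\mu}[f(x)\ne \tP(x)]$. For fixed $f$, the probability that $f$ interpolates the i.i.d. sample equals $(1-\mathrm{err}(f))^n$, which is at most $e^{-\e n}$ whenever $\mathrm{err}(f) > \e$. A union bound over $\mathcal F_m$ gives
\[
\Pr\big[\exists f \in \mathcal F_m:\ \mathrm{err}(f) > \e \ \text{and}\ f \text{ interpolates the data}\big] \;\le\; 2^{m+1}\,e^{-\e n},
\]
which is at most $\delta$ as soon as $n \ge \big((m+1)\ln 2 + \ln(1/\delta)\big)/\e$; this holds for the stated $n = \Theta\big((L^3 V^2 \ln B(N) + \ln(1/\delta))/\e\big)$ with a suitable constant. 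On the complementary event, of probability at least $1-\delta$, every interpolating network in $\mathcal F_m$ has error at most $\e$; since $\widehat f_{\MDL} \in \mathcal F_m$ interpolates the data, $\mathrm{err}(\widehat f_{\MDL}) \le \e$, which is precisely the claimed test-error bound on a fresh draw from $\mu$.

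\textbf{Main obstacle.} Conceptually the argument is short, and its force comes entirely from the two results being invoked; the remaining work is careful bookkeeping. The delicate points are: (i) verifying that the encoding underlying the description length --- including the run-length compression permitted in the body --- is uniquely decodable, so that the estimate $|\mathcal F_m| < 2^{m+1}$ is valid; (ii) confirming that $\widehat f_{\MDL}$ is well-defined (including tie-breaking among minimizers) and that no measurability issue arises, both of which follow from finiteness of $\mathcal F_m$; and (iii) pinning down where ``$N$ large enough'' is actually used --- it enters only through the hypotheses and implicit constants of Theorem~\ref{thm:sneuralprograms_to_neuralnetworks} and Proposition~\ref{prop:efficient_conversion_simp}, not through the probabilistic step --- and checking that the absolute constant hidden in $n = \Theta(\cdot)$ absorbs the $\ln 2$ factor above.
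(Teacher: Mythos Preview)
Your proposal is correct and follows essentially the same Occam's-razor argument as the paper: bound the description length of $F_{\tP,N}$ via Proposition~\ref{prop:efficient_conversion}, bound the number of short networks (the paper does this as Lemma~\ref{lemma:exponentially_many_sparse_nns}), and union-bound the probability that any bad short network interpolates. Two cosmetic differences: the paper phrases the union bound over \emph{pairs} $f_1,f_2\in\mathcal N_s$ that agree on the data but disagree on a set of measure $\ge\e$ (incurring a harmless $\binom{|\mathcal N_s|}{2}$ factor) rather than comparing each $f$ directly to $\tP$ as you do; and since the description alphabet $\scr A$ has more than two symbols, the correct count is $|\mathcal F_m|\le |\scr A|^m = e^{cm}$ rather than $2^{m+1}$, though this only shifts the absolute constant in $n$.
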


If there are multiple minimum description length interpolators, the theorem applies to all of them. For easier interpretation, the idea behind Theorem \ref{thm:generalization_simp} yields an simpler averaged generalization guarantee.

\begin{corollary}[Cor. \ref{cor:avg_gen}, simplified]
Consider a SNP $\tP$ and a dataset $\set{(X_i,Y_i)}_{i=1}^n$ as in Theorem~\ref{thm:generalization_simp}, where $n$ is now generic. Let $\widehat{f}_{\MDL}$ be an interpolating minimum-description length neural network and $x$  a new sample from $\mu$. Then
\[
\Prob\left( \widehat{f}_{\MDL}(x) \neq \tP(x) \right) = O\left(\frac{L^3V^2 \ln B(N)}{n} \right).
\]
\end{corollary}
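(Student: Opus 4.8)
The plan is to run the classical Occam's-razor argument for minimum-description-length estimators in the realizable (noise-free) setting: convert the description-length bound of Proposition~\ref{prop:efficient_conversion_simp} into a cardinality bound on the relevant hypothesis class, union-bound over the ``bad'' hypotheses in that class, and integrate the resulting tail bound. The first step is realizability by a short network: by Theorem~\ref{thm:sneuralprograms_to_neuralnetworks} together with Proposition~\ref{prop:efficient_conversion_simp}, there is a feedforward ReLU network $F_{\tP,N}$ that agrees with $\tP$ on all of $[N]^I$ and has description length at most $D := C\,L^3 V^2 \ln B(N)$ for an absolute constant $C$. Since $F_{\tP,N}$ interpolates every dataset of the form $(X_i,\tP(X_i))$, the MDL interpolator $\widehat f_{\MDL}$ also has description length at most $D$, no matter which minimizer is chosen when there are ties.

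Next I would bound the number of candidate networks. Let $\mathcal H_D$ denote the set of networks admitting a compressed description of at most $D$ bits; each is specified by a binary string of length at most $D$, so $\lvert \mathcal H_D\rvert \le 2^{D+1}$, and by the previous step $\widehat f_{\MDL}\in \mathcal H_D$ for every sample. Fix a threshold $t\in(0,1)$ and call $h\in\mathcal H_D$ \emph{bad} if $\mathrm{err}(h) := \Prob_{x\sim\mu}\big(h(x)\neq \tP(x)\big) > t$. For a fixed bad $h$, the probability that it interpolates the i.i.d.\ sample is $(1-\mathrm{err}(h))^n \le e^{-nt}$, so a union bound gives $\Prob\big(\text{some bad } h\in\mathcal H_D \text{ interpolates the data}\big) \le 2^{D+1}e^{-nt}$. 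Because $\widehat f_{\MDL}$ interpolates the data and lies in $\mathcal H_D$, off this event $\mathrm{err}(\widehat f_{\MDL})\le t$; hence $\Prob\big(\mathrm{err}(\widehat f_{\MDL})>t\big)\le 2^{D+1}e^{-nt}$ for all $t$.

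Finally, identify the averaged quantity with an expectation and integrate the tail: $\Prob\big(\widehat f_{\MDL}(x)\neq \tP(x)\big) = \mathbb{E}\big[\mathrm{err}(\widehat f_{\MDL})\big] = \int_0^1 \Prob\big(\mathrm{err}(\widehat f_{\MDL})>t\big)\,dt$, and splitting at $t_0 = (D+1)\ln 2/n$ yields
\[
\mathbb{E}\big[\mathrm{err}(\widehat f_{\MDL})\big] \le t_0 + \int_{t_0}^{\infty} 2^{D+1}e^{-nt}\,dt = \frac{(D+1)\ln 2 + 1}{n} = O\!\left(\frac{L^3 V^2\ln B(N)}{n}\right),
\]
which is the assertion. (The same computation, but union-bounding only over hypotheses of true error exceeding $\e$ and solving $2^{D+1}e^{-n\e}\le\delta$ for $n$, recovers the high-probability form of Theorem~\ref{thm:generalization_simp}.) I expect the only genuine obstacle to be making the counting step rigorous: one must verify that ``description length at most $D$'' corresponds to an essentially prefix-free binary encoding, so that the bound $\lvert\mathcal H_D\rvert\le 2^{O(D)}$ is legitimate — this needs care with the run-length compression scheme used to define description length, in particular that decoding is well-defined and that distinct bit strings decoding to the same network only help. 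A minor secondary point is checking measurability and well-definedness of $\widehat f_{\MDL}$ as a (possibly set-valued) function of the sample, which is why the argument above is phrased uniformly over the choice of minimizer.
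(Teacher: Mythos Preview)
Your proposal is correct and follows essentially the same approach as the paper: bound the description length of the interpolating class via Proposition~\ref{prop:efficient_conversion}, convert this into a cardinality bound (the paper does this as Lemma~\ref{lemma:exponentially_many_sparse_nns}, which handles your concern about the counting step by simply noting the alphabet $\scr{A}$ is finite so $|\cl{N}_K|\le |\scr{A}|^K=e^{cK}$, with no prefix-freeness needed), union-bound over bad hypotheses, and integrate the tail. The only cosmetic difference is that the paper phrases the union bound over \emph{pairs} $f_1,f_2\in\cl{N}_s$ that agree on the data but disagree on a set of $\mu$-measure $\ge\e$, whereas you union-bound directly over single bad interpolators; your formulation is the more standard Occam argument and is in fact slightly tighter by a factor of two in the exponent, but the resulting bounds are identical up to constants.
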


To demonstrate these results, consider the prime-checking program. Suppose we randomly choose $n$ many integers from $[N]$ and output whether the integer is prime or not. Then $\widehat{f}_{\MDL}$, with high probability, has error rate $ O\left(\frac{\ln N}{n}\right)$. Recall that the density of the primes among the first $N$ natural numbers is $(\ln N)^{-1}$ by the prime number theorem. Therefore, with $n \gg (\ln N)^2$, $\widehat{f}_{\MDL}$ on a typical dataset classifies both primes and non-primes correctly with high accuracy. For the details of this and other examples, see Section \ref{examples}.

To prove Theorem \ref{thm:generalization_simp}, we show that the number of neural networks of description length at most $s$ is at most exponentially large in $s$. A simple probabilistic argument then shows generalization of the minimum description length interpolator. Section \ref{sec:generalization} gives the proof of this theorem with applications to several examples. The proof strategy may be extended to other definitions of description length, letting us derive results similar to Theorem \ref{thm:generalization_simp} by considering variations of simple neural programs and the description length measure. In particular, different setups may be more natural for different neural network architectures beyond feedforward neural networks. Section \ref{sec:noisy_data} describes extensions to interpolation on noisy datasets.

\subsection{Related Work}
\label{sec:litreview}

\paragraph{Structured Data \& Neural Networks} Several works exploit structural assumptions on the data to provide generalization guarantees. In the setting of binary classification \cite{brutzkus2017sgd, li2018learning}, it is shown that the empirical risk minimizer of a two layer neural network trained with stochastic gradient descent generalizes; \cite{brutzkus2017sgd} assumes the data $\set{(x_i,y_i)}_{i=1}^n$ is linearly separable while \cite{li2018learning} assumes the supports of the features $x$ are disjoint. In a different direction, \cite{goldt2020modeling} analyze learning of two layer neural networks where the data is generated from a low dimensional manifold, with labels depending only on the position within the manifold. \cite{chen2022nonparametric} study deep ReLU networks for nonparametric regression tasks under similar setting, inspired by the manifold hypothesis, while \cite{malach2018provably, abbe2021staircase} utilize hierarchical assumptions on the data. \cite{abbe2021staircase} show that so called ``staircase functions" can be learned efficiently using stochastic coordinate descent, while \cite{malach2018provably} consider image-valued data generated by iteratively refining a coarse image and provide new algorithms for learning deep convolutional neural networks. See \cite{mezard2023spin} for further references and a connection with the spin glass literature. See also \cite{arora2018stronger} for a related compression approach to generalization.

In addition to statistical learning problems, neural networks can be used for data compression purposes. The goal is to use certain architectures, particularly those from generative models, to compress image, video, text or otherwise structured data with high fidelity.
\cite{yang2023introduction} provides an overview of this subfield; see the references therein for further details.

\paragraph{Low Complexity Assumptions in Learning} Theorem \ref{thm:generalization_simp} can be seen as a generalization guarantee for minimum description learning with neural network architectures and low-complexity data. Minimum Description Learning (MDL) \cite{rissanen1983universal, grunwald2007minimum, barron1998minimum} is a paradigm for inductive learning with relations to classical topics in computer science and learning theory, especially algorithmic probability and Solomonoff induction \cite{solomonoff1964formal, li2008introduction}. For prediction tasks, it suggests that the predictor which can be described in the least number of bits should be used. Several recent works have re-considered minimum description learning and related ``low-complexity" patterns in light of modern machine learning. The paper \cite{manoj2023interpolation} studies minimum description length rules for a universal description or programming language and its generalization properties when the rule is forced to interpolate the training data, which is similar to our setting. The paper shows MDL learning rules display \textit{tempered overfitting}, where the generalization error is suboptimal, but better than random guessing. The work does not specialize to neural networks, however. \cite{abbe2023generalization} show that certain neural networks trained to learn Boolean functions on strongly out-of-distribution data learn ``minimum degree interpolators''. Relatedly, \cite{grau2024learning} investigate the applications of Solomonoff induction for training neural networks in meta-learning tasks. A few papers combine MDL-type ideas and neural networks. \cite{schmidhuber1997discovering} discusses methods for learning neural networks with low Kolmogorov complexity and high generalization capability, based on universal priors. \cite{hochreiter1997flat} hypothesize that neural networks which are \textit{flat minima} of the loss landscape generalize well, using an argument based on MDL. \cite{hinton93keeping} propose practical methods to implement the MDL principle when training feedforward neural networks. In a similar direction to us, \cite{lan2022minimum} provide empirical results about minimum description length neural networks, for formal language data.

Several recent works have also demonstrated the role of low complexity in trained neural networks. \cite{valle2018deep, mingard2019neural, teney2024neural, bhattamishra2022simplicity, razin2024understanding} all show that certain randomly initialized neural networks are biased towards representing ``low complexity'' functions. For example, \cite{mingard2019neural} show that one layer perceptions are biased towards low entropy functions, while \cite{teney2024neural, bhattamishra2022simplicity} consider transformer architectures. \cite{mingard2023deep, goldblum2023no} also show similar empirical results for neural networks trained with gradient descent. Similarly to our work, \cite{mingard2023deep} considers the effect of low complexity data in their analysis.

\paragraph{Statistical Guarantees for Neural Networks on Noisy Data} As discussed in the introduction, statistical guarantees for neural network architectures, particularly on noisy data, are the subject of intense research. These questions are motivated by empirical observations like interpolation and double descent \cite{zhang2021understanding, belkin2019reconciling, nakkiran2019deep}. More recently, a growing literature seeks to characterize to what extent neural networks should generalize on noisy data, by defining three regimes: \textit{benign, tempered} and \textit{catastrophic overfitting} \cite{mallinar2022benign}. Theoretical results along these lines, although insightful, are generally restricted to special cases such as linear \& ridge regression, the kernel regime of neural networks, or two layer neural networks \cite{bartlett2020benign,tsigler2023benign, kornowski2023tempered, bartlett2021deep}. Our Theorem \ref{thm:generalization with noise} shows that minimum description length neural network interpolators display tempered overfitting on corrupted low complexity data. In particular, we show that the generalization error for the minimum description length interpolator on a dataset of size $n$, with $\rho$ fraction of the labels corrupted arbitrarily, behaves like $O(\rho) + O(1/n)$.

While completing this work, we were made aware of the recent paper \cite{harel2024provable} which proves neural network interpolators with minimum number of weights exhibit tempered overfitting. The results are of a similar flavor to Theorem \ref{thm:generalization with noise}. \cite{harel2024provable} considers a setting where the data is generated from a teacher model, a fixed neural network with noisy binary outputs. The interpolators considered are \textit{binary threshold networks}, which have binary parameters, binary inputs, and additional thresholding weights. \cite{harel2024provable} shows that the worst-case average generalization error of the minimum size network interpolator behaves approximately like $\rho \ln 1/\rho + o_n(1)$. With independent noise, the generalization error behaves like $\rho + o_n(1)$. See Theorem~\ref{thm:generalization with noise} and Remark~\ref{remark:harel_comparison} for further details about this work and comparison to ours.

\paragraph{Transformers as Algorithm Approximators} 

Transformers \cite{vaswani2017attention} are a neural network architecture behind much of the success of large language models. Language models based on transformers and similar architectures demonstrate a remarkable generalization ability called \textit{in-context learning}: the model can perform new tasks when given access to a small number of training and test examples \cite{brown2020language, garg2022can}. Similarly to our connection between feedforward neural networks encoding simple programs, \cite{bai2023transformers,lin2023transformers,mei2023deep, giannou2023looped} show that transformers can approximate certain types of algorithms in-context, including statistical algorithms such as least squares. Relatedly \cite{zhou2023algorithms} consider transformer performance for length-generalization tasks, such as training the transformer on 3 digit addition problems and testing it on 10 digit addition. Based on extensive empirical results, they conjecture that transformers tend to length- generalize on tasks that can be solved by a short programming language called RASP \cite{weiss2021thinking} which emulates a computational model of transformer architectures. \cite{lindner2024tracr} study the RASP language further, and show how simple RASP programs may be converted back into transformers. This is similar in spirit to Theorem \ref{thm:generalization_simp}, although our results do not apply to length-generalization.

\paragraph{Turing Completeness of Neural Networks and Related Results} 
Foundational results in the field of neural networks going back to \cite{mcculloch1943logical} demonstrate that NNs can not only universally approximate functions, but they can also emulate universal models of computation. \cite{siegelmann1992computational} showed that single-layer rational-weight recurrent neural networks (RNNs) can compute any computable function; similarly \cite{balcazar1997computational} shows the equivalence between some RNNs and Turing machines, expressing the computational power of RNNs using complexity of weights in terms of Kolmogorov complexity. Many recent papers improve on these results, and also demonstrate the ability of modern neural network architectures to represent Turing machines, automata, and similar computational models \cite{perez2019turing, perez2021attention, wei2022statistically, liu2022transformers, svete2024transformers}. \cite{wei2022statistically} show transformers can approximate Turing machines of bounded computation time, and establish bounds on the sample complexity of the problem. \cite{liu2022transformers} show similar approximation results for finite state automata. \cite{chen2017recurrent, mali2023computational} consider questions  on the computational complexity of using recurrent neural networks to represent computational models and formal languages, while \cite{stogin2024provably} consider other architectures to approximate push-down automata. \cite{clark2020transformers} also details a connection with logic. See \cite{sanford2023representational, strobl2024formal} for additional references.

\section{Defining a Programming Language}
\label{sec:defining_snps}

A \textit{simple neural program} (SNP) $\tP$ consists of a \textit{variable context}, specifying all the variables in the program, along with any sequence of statements to be described. Examples of the syntax are described below each of the statements. A \textit{variable context} for $\tP$ describes the set of variables to be manipulated in $\tP$. All variables in the program must be declared in the variable context. It is comprised of a sequence of statements of two types: 
\begin{itemize}
    \item {\bf Input statements.} These statements define a variable which is taken as input into the program, and do not have a defined value at the beginning of the SNP. The syntax is {\tt input <variable name>}. All variable names are distinct.
    \item {\bf Variable initialization statements.} All variables need to be either nonnegative integer valued or boolean valued (i.e., encoded by zero or one). {\it In particular, throughout the runtime of the program, all variables are enforced to be nonnegative integer valued.} Variables must be initialized with a fixed value. The syntaxes for the two types are {\tt int <variable name> = <value>} and {\tt bool <variable name> = <value>}.
\end{itemize}  
Here is an example. 
\begin{minted}[linenos]{python}
input x
int a = 5
bool b = 1
...
\end{minted}
Following the variable context is any sequence of the statements described below. The statements may only reference variables defined in the variable context of $\tP$. When referring to SNP commands and constructions, we will often write with the \texttt{monospace} font. Unless otherwise specified, all constants referred to below are integers.

\begin{enumerate}    
    
    \item \textit{Value assignment.} A given variable may be assigned a fixed nonnegative integer or the value of another variable in the  program. The syntax is {\tt <variable name> = <value or variable name>}.
    \begin{minted}{python}
    int x = 0
    int a = 0
    x = 1
    x = a
    ...
    \end{minted}
    \item {\it For loops.} {\tt For} loops increment an existing counter variable by $1$ in each repetition; the range of the loop may have a variable start and variable end. The syntax is: {\tt for <counter variable> = <initial value or variable>,...,<final variable or value>:}.
    Following a {\tt for} loop is a \textit{clause} $\tC$, i.e., a block of SNP statements. In the example below, lines $6$ and $7$ comprise $\tC$. Note that $\tC$ may be seen as an SNP with the same variable context as $\tP$. Clauses may not modify the counter variable or the final variable inside the clause. 
    \begin{minted}[linenos]{python}
    int s = 1
    int n = 10
    int i = 1
    int res = 0
    for i = s,...,n:
        res = 0
        res = res + 1
    ...
    \end{minted}
    Clauses may contain further {\tt for} loops. If the program $\tP$ nests $d$ {\tt for} loops, we say that $\tP$ has \textit{depth} $d$. For example, the following snippet contains a double {\tt for} loop. {\tt for} loops which are not contained in another {\tt for} loop are said to be top-level. Otherwise, the loop is nested. For example, the loop on line $5$ is top-level, while the one on line $6$ is nested. More generally, any SNP statement $\tS_i$ which is not contained in a {\tt for} loop is said to be \textit{top-level}.
    \begin{minted}[linenos]{python}
    int n = 10
    int i = 1
    int j = 1
    int res = 0
    for i = 1,...,n:
        for j = 1,...,n:
           res = i + j 
    ...
    \end{minted}    
    \item \textit{If statements.} {\tt if} statements must be of the following form: if a boolean variable is equal to $1$, update a variable $c$ with a quantity $a$; else, with another quantity $b$. The quantities may be variable or constant. The syntax should be clear from the example below. We do not allow for more complicated {\tt if} statements which have multiple lines within the if clause or else clause. 
    \begin{minted}[linenos]{python}
    int a = 2
    int b = 5
    int c = 3
    bool cond = 1
    c = a if cond else b
    ...
    \end{minted}
    
    \item \textit{Return statement.} This returns an existing variable in the program. The program ends with a return statement. The syntax is simply {\tt return <variable name>}.
    \begin{minted}{python}
    input x
    ...
    return x
    \end{minted}
    
    \item \textit{Basic operations.} We allow only two basic operations: Addition of a variable with a fixed integer, and multiplication of a variable by a fixed nonnegative integer. The output of every operation must be assigned to an existing variable in the program. The syntax is {\tt <output variable name> = <variable name> + <constant>} for addition, and \newline {\tt <output variable name> = <constant> * <variable name>} for multiplication.
    \begin{minted}{python}
    int a = 2
    int b = 3
    int c = 0
    c = a + b
    c = a + 2
    a = 2 + 3
    a = 2 * b
    ...
    \end{minted}
    
    \item \textit{Unary operators.} We allow the following unary operators: checking equality to a nonnegative constant, checking greater than a nonnegative constant, and checking less than a nonnegative constant. The syntax in each of these cases is given by {\tt <bool variable name> = (<int variable name> == <constant>)}, and similarly for {\tt <, $<=$, >, $>=$}.
    \begin{minted}{python}
    int a = 1
    int c = 3
    bool b = 0
    b = (a == 0)
    b = (c > 3)
    b = (c < 4)
    ...
    
    \end{minted}
    
    \item \textit{Binary operators.} We allow the addition and subtraction of two numbers (as long as the output is nonnegative), along with comparisons of two variables with $=,<,>, <=, >=$. The syntax is {\tt <output variable name> = <variable1 name> + <variable2 name>}, {\tt <output variable name> = (<variable1 name> == <variable2 name>)}, and similarly for the other operations.
    \begin{minted}{python}
    input x
    int a = 1
    int c = 3
    bool b = 0
    int d = 0
    b = (a == c)
    d = x + c
    ...
    \end{minted}
\end{enumerate}
We define the \textit{length} of a simple neural program to be the number of statements in the program (not including input or variable initialization statements). This also counts every statement in the clause of every {\tt for} loop in the program. When the variables and constants of the SNP do not exceed a constant $B$ throughout the runtime of the program, we say the program is $B$-bounded.

\paragraph{Composing programs} Let $\mathbb{N}_0$ be the set of natural numbers including zero. A SNP with an input $\mathbf{x} \in \mathbb{N}_0^I$ can be thought as a function with domain $\mathbb{N}_0^I$, where $I$ is the dimensionality of the input to $\tP$. Thus, it is possible to compose SNPs together: given a program $\tP_1(i_1,\dots,i_k)$ with inputs $i_1,\dots,i_k$, we can define another program $\tP_2$ with variable context $\tV_2$, one of whose statements is given by  
\[
\tt{y} = \tP_1(\tt{x}_1,\dots, \tt{x}_k), 
\]
for $\tt{x}_1,\dots, \tt{x}_k \in \tV_2, y \in \tV_2$. Call a program $\tP$ \textit{composite} if any of its statements is a call to another SNP. {\it We disallow recursive calls in composite programs.} Specifically, a program $\tP_2$ is allowed to call a program $\tP_1$ only if $\tP_1$ has been defined \textit{prior} to $\tP_2$. If all the statements of a program $\tP$ are primitive (one of the 7 types referenced above), it is called \textit{atomic}. All results in this paper will apply to atomic programs. 

Occasionally, it is simpler to write a program $\tP$ as a composite program. Example \ref{ex:primenumber_program} below shows such a case. However, it is easy to reduce a composite program to an atomic one, by expanding out the lines of the subprograms, and combining variable contexts. Consider a composite program $\tP_2$ with variable context $\tV_2$, and an atomic program $\tP_1(i_1,\dots,i_k)$ with variable context $\tV_1$, where $i_1,\dots,i_k$ are input variables contained in $\tV_1$. Suppose that one of the lines of $\tP_2$ is
\[
\tt{y} = \tP_1(\tt{x}_1,\dots, \tt{x}_k)
\]
for $\tt{x}_1,\dots, \tt{x}_k \in \tV_2, y \in \tV_2$. We may consider an atomic program $\tP_{\atomic}$ which is equal to $\tP_2$ as a function, defined as follows.
\begin{itemize}
    \item The variable context of $\tP_{\atomic}$ is $\tV_2 \cup \tV_1\backslash \set{i_1,\dots,i_k}$. It has the same inputs as $\tV_2$.
    \item Replace the line  $\tt{y} = \tP_1(\tt{x}_1,\dots, \tt{x}_k)$ by the following sequence of lines:
    \begin{enumerate}
        \item The variable initialization statements of $\tP_1$ (which are included at the start of $\tP_{\atomic})$.
        \item The non-\texttt{return} statements of $\tP_1$, substituting all input variables $i_1,\dots,i_k$ by $\tt{x}_1,\dots, \tt{x}_k$,
        \item {\tt y = result}, where \texttt{result} denotes the return variable of $\tP_1$.
    \end{enumerate} 
\end{itemize}
See Ex.~\ref{ex:primenumber_program} for an example of this reduction. Henceforth, only atomic SNPs are considered.

\begin{example}[Integer multiplication]
Multiplying two integer variables is not a primitive in the programming language, but it can be easily implemented using a {\tt for} loop. We can think of this as a function $\texttt{multiply}(x,y)$ which takes in two inputs $x,y \in \bb{N}.$

\begin{minted}{python}
input x
input y
int i = 0
int res = 0
for i = 1,...,x:
    res = res + y
return res
\end{minted}

\end{example}

\begin{example}[Primality testing]
\label{ex:primenumber_program}
Let $N$ be fixed. For any $n \leq N$, checking whether $n$ is a prime number can be expressed as an SNP. 

\begin{minted}[linenos]{python}
input n
int i = 2
int j = 2
int res = 0
int prod = 0
int t = 0
bool output = 0
bool prod_equals = 0
for i = 2,...,n:
    for j = 2,...,n:
        prod = multiply(i,j)
        prod_equals = (prod == n)
        res = res + prod_equals
output = (res > 0)
return output
\end{minted}
This is a composite SNP, since we call the non-primitive function \texttt{multiply} on line $11$. See Section \ref{sec:full_programs} for the program written out fully as an atomic program. 


\end{example}

\begin{example}[Fibonacci numbers]
\label{ex:fib}
Outputting the $n$th Fibonacci number is also simple. 

\begin{minted}[linenos]{python}
input n
int x = 0
int y = 1
int temp = 0
int i = 1
int loop_var = 0
loop_var = n-1
for i = 1,...,loop_var:
    temp = y
    y = y + x
    x = temp
return y
\end{minted}
\end{example}
The program has a variable context of size $6$, with length $6.$

\subsection{A nested representation of simple neural programs}

A SNP $\tP$ with a variable context $\tV$ can be written as a sequence of statements $(\tS_1,\dots,\tS_L)$. Another way to describe the program is to enumerate all the top-level statements, those that are not contained within a {\tt for} loop.

\begin{definition}[Top-level representation of $\tP$]
Given an SNP $\tP = (\tS_1,\dots,\tS_L)$ with variable context $\tV$, enumerate all top level {\tt for} loops and their clauses by $(\tS_{n_i},\tC_i)$ for some subsequence $\set{n_i}$ indicating the locations of top-level {\tt for} loops. 
The top-level representation of $\tP$ is the unique sequence $(\tO_1,\dots,\tO_k)$ where each $\tO_i$ is either a top-level statement or a {\tt for} loop clause pair $(\tS_{n_j},\tC_j)$ and for $i < j, \tO_i$ appears before $\tO_j$ in the program. 
\end{definition}

Consider Example \ref{ex:fib}. The program can be written as the sequence $(\tO_1, \tO_{2}, \tO_3)$ where $\tO_1$ is the statement on line $7,$ $\tO_3$ is the statement on line $12$, and $\tO_2$ is the tuple $(\tS_2, \tC_2)$ where $\tS_2$ is the {\tt for} loop on line $8$ and $\tC_2$ is its clause comprising lines 9-11. In Example \ref{ex:primenumber_program}, the top-level representation is $((\tS_{9},\tC),\tS_{14},\tS_{15})$ where $\tC$ is the clause comprising statements 10-13.

\section{Encoding SNPs by Feedforward Neural Networks}
\label{sec:snps_to_neuralnetworks}

The fundamental result for our simple neural programming language is that any atomic program can be converted into a fully-connected feedforward neural network with ReLU nonlinearity. This is perhaps unsurprising given the literature outlined in Section \ref{sec:litreview} on the Turing completeness of neural networks, but the encoding of the program by a neural network is efficiently describable in a way we outline in a Section \ref{sec:description_complexity}. We consider feedforward neural network architectures $F_{\bm{\theta}}$ which are compositions of affine functions and the ReLU nonlinearity $\sigma(x) = \max(x,0)$,
\begin{align*}
F_{\bm{\theta}}(\bx) & = g_D \circ \sigma \circ g_{D-1} \circ\sigma \circ\cdots \circ g_2 \circ \sigma \circ g_1(\bx) \\
g_i(\bx) & = W_i\bx + b_i,
\end{align*}
which may be parametrized by its sequence of layer weights and biases $\bm{\theta} = (\theta_1,\dots,\theta_D),$ where $\theta_k = (W_k,b_k)$. 

We will construct an encoding of SNPs as such networks. Every variable in the program is stored as a unique node in the neural network, and every statement of the simple neural program corresponds to a sequence of consecutive layers in the network. The ordering of the layers of the neural network reflects the ordering of the statements in the simple neural program. The construction will be inductive on the depth of the program; recall that the \textit{depth} of a program is maximum number of times that {\tt for} loops are nested within each other. Consider first the case of depth zero programs.

\subsection{Base case: depth zero SNP conversion}
\label{sec:base_case_conversion}


Consider a depth-zero SNP $\tP = (\tS_1,\dots,\tS_L)$ with a variable context \texttt{V} of size $V,$ indexed by $\mathbf{x} = (x_1,\dots,x_V).$ Let $\tP$ take inputs in $[N]^I.$ Throughout this section, assume that the maximum possible value of a variable during the program is bounded by $B := B(N).$ Note that $B$ \textit{does not} bound the time complexity of the algorithm. It just controls the values of the variables throughout the program. Each statement $\tS_i$ in the program will be encoded as a composition of layers $f_{\tS_i} = g_{i,k_i} \circ g_{i,k_i-1}\circ \cdots \circ g_{i,2} \circ g_{i,1}$, where $g_{i,l}(\mathbf{y}) = \sigma(W^{(i,l)} \mathbf{y}  + b^{(i,l)})$, and the non-linearity $\sigma$ acts component-wise. Each sequence of layers $f_{\tS_i}$ is a map from $\bb{R}^V$ into  $\bb{R}^V$. We will occasionally write $f_{\tS_i,B}$ to emphasize the dependence of the parameters on $B$. The $f_{\tS_i}$ are strung together to act on $\mathbf{x}$, so that the program $\tP$ corresponds to the neural network
\[
f_{\tS_L} \circ f_{\tS_{L-1}} \circ \dots \circ f_{\tS_1}(\mathbf{x}).
\]
The individual layers $g_{i,l}$ which define $f_{\tS_i}$ may change dimension, depending on the statement $\tS_i.$ The next section will explicitly define the individual layers, with the goal of showing that the sequence of layers $f_{\tS_i}$ agrees with the statement $\tS_i$ as functions $\bb{N}_0^{V} \rightarrow \bb{N}_0^{V}.$

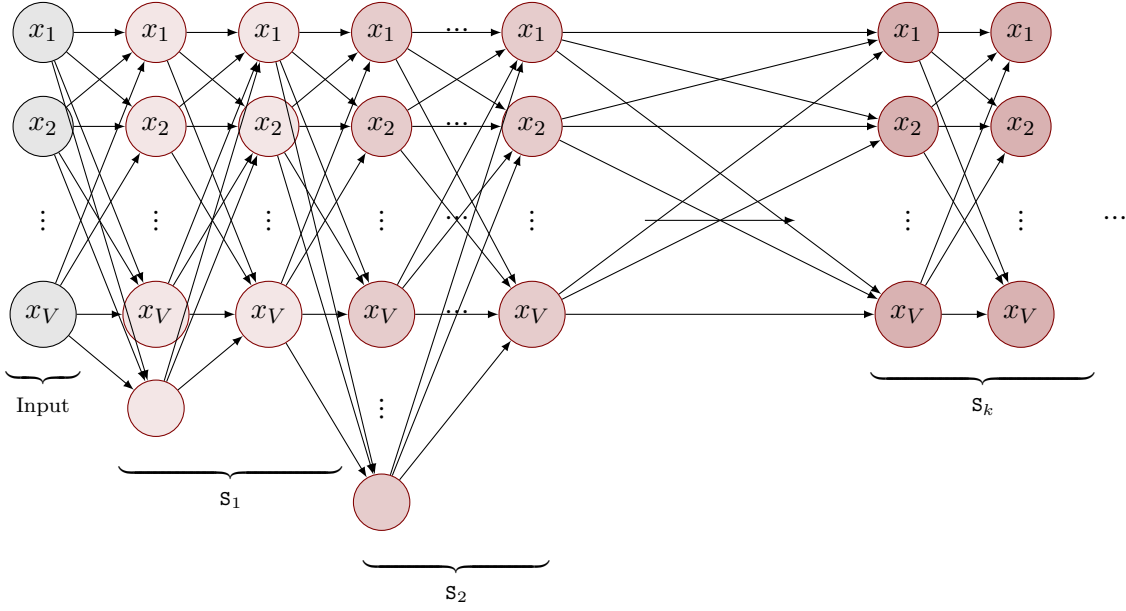
\begin{figure}[t]
\centering

\begin{tikzpicture}[node distance=1.25cm, auto, align = center]

\tikzstyle{input} = [circle, draw=black, fill=gray!20, minimum size = 0.75cm]
\tikzstyle{hidden} = [circle, draw=maroon, fill=maroon!10, minimum size = 0.75cm]
\tikzstyle{hidden2} = [circle, draw=maroon, fill=maroon!20, minimum size = 0.75cm]
\tikzstyle{output} = [circle, draw=maroon, fill=maroon!30, minimum size = 0.75cm]

\node[input] (x1) {$x_1$};
\node[input, below of=x1] (x2) {$x_2$};
\node[below of=x2] (vdots) {$\vdots$};
\node[input, below of=vdots] (x3) {$x_V$};

\node[hidden, right of=x1, node distance=1.5cm] (h1) {$x_1$};
\node[hidden, below of=h1] (h2) {$x_2$};
\node[below of=h2] (vdots2) {$\vdots$};
\node[hidden, below of=vdots2] (h3) {$x_V$};
\node[hidden, below of=h3] (h4) {};

\node[hidden, right of=h1, node distance=1.5cm] (s21) {$x_1$};
\node[hidden, below of=s21] (s22) {$x_2$};
\node[below of=s22] (vdots3) {$\vdots$};
\node[hidden, below of=vdots3] (s23) {$x_V$};

\node[hidden2, right of=s21, node distance=1.5cm] (s31) {$x_1$};
\node[hidden2, below of=s31] (s32) {$x_2$};
\node[below of=s32] (vdots4) {$\vdots$};
\node[hidden2, below of=vdots4] (s33) {$x_V$};
\node[below of=s33] (vdots41) {$\vdots$};
\node[hidden2, below of=vdots41] (s34) {};

\node[right of=vdots4, xshift = -0.25cm] (s3s4dots) {$\hdots$};
\node[right of=s31, xshift = -0.25cm] () {$\hdots$};
\node[right of=s32, xshift = -0.25cm] () {$\hdots$};
\node[right of=s33, xshift = -0.25cm] () {$\hdots$};

\node[hidden2, right of=s31, node distance=2cm] (s41) {$x_1$};
\node[hidden2, below of=s41] (s42) {$x_2$};
\node[below of=s42] (vdots5) {$\vdots$};
\node[hidden2, below of=vdots5] (s43) {$x_V$};

\node[right of= vdots5, node distance=0.5cm] (arrow) {};
\draw[->] (arrow) ++(1,0) -- ++(2,0);

\node[output, right of=s41, node distance=5cm] (o1) {$x_1$};
\node[output, below of=o1] (o2) {$x_2$};
\node[below of=o2] (vdots4) {$\vdots$};
\node[output, below of=vdots4] (o3) {$x_V$};

\node[output, right of=o1, node distance=1.5cm] (o21) {$x_1$};
\node[output, below of=o21] (o22) {$x_2$};
\node[below of=o22] (vdots5) {$\vdots$};
\node[output, below of=vdots5] (o23) {$x_V$};

\node[right of=vdots5] (lastnode) {$\hdots$};

\foreach \i in {1,2,3}
  \foreach \j in {1,2,3,4}
    \draw[->, opacity = 0.2] (x\i) -- (h\j);

\foreach \i in {1,2,3,4}
  \foreach \j in {1,2,3}
    \draw[->, opacity = 0.2] (h\i) -- (s2\j);

\foreach \i in {1,2,3}
  \foreach \j in {1,2,3,4}
    \draw[->, opacity = 0.2] (s2\i) -- (s3\j);

\foreach \i in {1,2,3,4}
  \foreach \j in {1,2,3}
    \draw[->, opacity = 0.2] (s3\i) -- (s4\j);

\foreach \i in {1,2,3}
  \foreach \j in {1,2,3}
    \draw[->, opacity = 0.2] (s4\i) -- (o\j);

\foreach \i in {1,2,3}
  \foreach \j in {1,2,3}
    \draw[->, opacity = 0.2] (o\i) -- (o2\j);

\node[below of=x3, node distance=1cm] {$\underbrace{\hspace{1cm}}_{\text{Input}}$};
\node[below of=h4, node distance=1cm, xshift = +1cm] (brace1) {$\underbrace{\hspace{3.0cm}}_{\tS_1}$};
\node[below of=s34, node distance=1cm, xshift = +1.cm] (brace1) {$\underbrace{\hspace{2.5cm}}_{\tS_2}$};
\node[below of=o3, node distance=1cm, xshift = +1cm] (brace3) {$\underbrace{\hspace{3.0cm}}_{\tS_{k}}$};

\end{tikzpicture}

\caption{Neural network encoding a depth zero program $\tP$. The input vector $\mathbf{x}$ is described by the variable context of the program. Sequences of layers correspond to statements $\tS_i$ in $\tP$, which may have different widths. Additional nodes in the layers do not correspond to variables in $\tV$, and instead store the outputs of intermediate computations.}
\label{fig:neural_network}
\end{figure}

\paragraph{Statement encodings}




The variable context $\texttt{V}$ of $\tP$ defines the input vector $\mathbf{x}$ of the neural network. All variable declaration statements such as {\tt <var type> var = c} initialize the component of $\mathbf{x}$ corresponding to the variable $\tt{var}$ with the value $c$. \texttt{Input} statements define which components are free variables. 

\begin{itemize}

\item \textit{Value assignment.} To set the $i$th variable equal to a fixed constant $c \geq 0$, we use $W = I - e_ie_i^\top$ and $b = ce_i$, where $e_i$ denotes the column vector whose $i$th component is $1$ and the rest are $0$, and $e_i^\top$ is the transpose of $e_i$. For setting the $i$th variable equal to the $j$th variable, we use $W = I - e_ie_i^\top + e_ie_j^\top$ and $b = 0$.


\item \textit{Basic operations.} 
To sum the $j$th variable with a constant $c$ and assign the output to variable $i$, we use $W = I - e_ie_i^\top + e_ie_j^\top$ and $b = ce_i.$ Similarly, multiplying the $j$th variable with a nonnegative constant $c$ and assigning the output to variable $i$ can be encoded with $W = I - e_ie_i^\top + c e_i e_j^\top$ and $b = 0$.


\item \textit{Unary operations.} Consider first the operation which assigns a variable $x_i$ the value $\mathbf{1}\set{x_j = c}$ for another variable $x_j$ and a constant $c \geq 0.$ Encoding this statement and similar statements relies on the identity
\begin{equation}
\label{eq:relu_identity}
\mathbf{1}\set{x = 0} = \relu(x+1) + \relu(x-1) - 2\relu(x)    
\end{equation}
that holds for all $x \in \bb{Z}$. Because $x_j$ is an integer, 
\begin{equation}
\label{eq:equality_check}
\mathbf{1}\set{x_j = c} = \relu(x_j - c + 1) + \relu(x_j - c - 1) - 2\relu(x_j - c).
\end{equation}
This can be expressed in two layers $W^{(1)},b^{(1)}$ followed by $W^{(2)},b^{(2)}$. The first layer creates three temporary variables, which will be indexed at $V+1$, $V+2$, and $V+3$, to store the three numbers $\relu(x_j - c + 1)$, $\relu(x_j - c - 1)$, and $\relu(x_j - c)$. The second layer updates $x_i \leftarrow \relu(\relu(x_j - c + 1) + \relu(x_j - c - 1) - 2\relu(x_j - c))$, which equals $\mathbf{1}\set{x_j = c}$, and deletes the temporary variables. In the following, $W^{(1)}_{r, \cdot}$ denotes the $r$th row of $W^{(1)}$, and so on:
\begin{align*}
W^{(1)},\, b^{(1)} & = 
\begin{cases}
    W^{(1)}_{r,\cdot} = e_r^\top,\, b^{(1)}_r = 0 \quad &\text{for } r=1,\dots,V, \\
    W^{(1)}_{r,\cdot} = e_{j}^\top,\, b^{(1)}_r = -c+1 \quad & r=V+1, \\
    W^{(1)}_{r,\cdot} = e_{j}^\top,\, b^{(1)}_r = -c-1 \quad & r=V+2, \\
    W^{(1)}_{r,\cdot} = e_{j}^\top,\, b^{(1)}_r = -c \quad & r=V+3, \\
\end{cases} \\
W^{(2)} & = 
\begin{cases}
    W^{(2)}_{r,\cdot} = e_{V+1}^\top + e_{V+2}^\top - 2e_{V+3}^\top, \quad &\text{for } r= i, \\
    W^{(2)}_{r,\cdot} = e_{r}^\top, \quad & r = [V]\backslash i, 
\end{cases} \\
b^{(2)} &= 0.
\end{align*}
Next consider assigning $x_i$ the quantity $\mathbf{1}\set{x_j > c}$. This can be encoded via the identity $\relu(x-c) - \relu(x - c-1) = \mathbf{1}\set{x > c}$ that holds for all integers $x$. As before, two layers are required;  the first layer creates two temporary variables to  store $\relu(x_j-c)$ and $\relu(x_j-c-1)$. The second assigns $x_i \leftarrow \relu(\relu(x_j-c) - \relu(x_j - c-1))$ and deletes the temporary variables:
\begin{align*}
W^{(1)},\, b^{(1)} & = 
\begin{cases}
    W^{(1)}_{r,\cdot} = e_r^\top, \, b^{(1)}_r = 0 \quad &\text{for } r=1,\dots,V, \\
    W^{(1)}_{r,\cdot} = e_{j}^\top,\, b^{(1)}_r = -c \quad & r=V+1, \\
    W^{(1)}_{r,\cdot} = e_{j}^\top, \, b^{(1)}_r = -c-1 \quad & r=V+2, 
\end{cases} \\
W^{(2)} & = 
\begin{cases}
    W^{(2)}_{r,\cdot} = e_{V+1}^\top - e_{V+2}^\top, \quad &\text{for } r= i, \\
    W^{(2)}_{r,\cdot} = e_{r}^\top, \quad & r = [V]\backslash i, 
\end{cases}\\
b^{(2)} &= 0.
\end{align*}
The other cases are similar. For example, the case of assigning variable $x_i$ the quantity $\mathbf{1}\set{x_j < c}$ can be encoded using the integer identity
\[
\edits{\relu(c - x_j) - \relu(c - x_j-1)} = \mathbf{1}\set{x_j < c}.
\]

\item \textit{Binary numerical operations.} 
Consider adding/subtracting two variables $x_i,x_j$ and assigning them to variable $x_k$. This is encoded by one layer with parameters.
\[
W = 
\begin{cases}
    W_{r,\cdot} = e_i^\top \pm e_j^\top \quad &\text{for } r= k, \\
    W_{r,\cdot} = e_{r}^\top \quad &\text{otherwise,}
\end{cases}
\]
and $b=0$. Again, we consider only programs such that $x_i - x_j \geq 0$. 

\item {\it Binary logical operations.} Consider checking equality of two variables $x_i,x_j$ and assigning $x_k \leftarrow \mathbf{1}\set{x_i = x_j}.$ By the identity \eqref{eq:relu_identity}, this may be done by taking the difference $x_i - x_j$ and applying  $x \mapsto \relu(x+1)+\relu(x-1) - 2\relu(x)$. As before, this requires two layers. The first layer creates additional variables to store and compute $\relu(x_i - x_j)$, $\relu(x_i - x_j + 1)$, and $\relu(x_i - x_j - 1).$ The second layer calculates $\relu(\relu(x_i - x_j+1)+\relu(x_i - x_j-1) - 2\relu(x_i - x_j))$. Because the argument is a nonnegative integer, the result is the same as $\relu(x_i - x_j+1)+\relu(x_i - x_j-1) - 2\relu(x_i - x_j)$. Explicitly, we use:
\begin{align*}
W^{(1)},\, b^{(1)} & = 
\begin{cases}
    W^{(1)}_{r,\cdot} = e_r^\top, \, b^{(1)}_r = 0 \quad &\text{for } r=1,\dots,V, \\
    W^{(1)}_{r,\cdot} = e_i^\top - e_{j}^\top, \, b^{(1)}_r = 1 \quad &r=V+1, \\
    W^{(1)}_{r,\cdot} = e_i^\top - e_{j}^\top, \, b^{(1)}_r = -1 \quad & r=V+2, \\
    W^{(1)}_{r,\cdot} = e_i^\top - e_{j}^\top, \, b^{(1)}_r = 0 \quad & r=V+3, \\
\end{cases} \\
W^{(2)} & = 
\begin{cases}
    W^{(2)}_{r,\cdot} = e_{V+1}^\top + e_{V+2}^\top - 2e_{V+3}^\top, \quad &\text{for } r= k, \\
    W^{(2)}_{r,\cdot} = e_{r}^\top, \quad & r = [V]\backslash k, 
\end{cases} \\
b^{(2)} &= 0.
\end{align*}
The other cases are similar. For example, checking if a variable $x_i$ is strictly greater than $x_j$ and storing the result in $x_k$ can be done by applying the transformation $\relu(\relu(x) - \relu(x-1))$, using: 

\begin{align*}
W^{(1)},\, b^{(1)} & = 
\begin{cases}
    W^{(1)}_{r,\cdot} = e_r^\top, \, b^{(1)}_r = 0 \quad &\text{for } r=1,\dots,V, \\
    W^{(1)}_{r,\cdot} = e_i^\top - e_{j}^\top, \, b^{(1)}_r = 0 \quad & r=V+1, \\
    W^{(1)}_{r,\cdot} = e_i^\top - e_{j}^\top, \, b^{(1)}_r = -1 \quad & r=V+2, 
\end{cases} \\
W^{(2)} & = 
\begin{cases}
    W^{(2)}_{r,\cdot} = e_{V+1}^\top - e_{V+2}^\top, \quad &\text{for } r= k, \\
    W^{(2)}_{r,\cdot} = e_{r}^\top, \quad & r = [V]\backslash k, 
\end{cases}\\
b^{(2)} &= 0.
\end{align*}
    
\item \textit{If statements}. Suppose the \texttt{if} condition is given by the boolean variable $x_c$. When the \texttt{if} statement is true, suppose the variable $x_i$ is updated by a variable $x_j$; otherwise it is updated by another variable $x_k$. Then we can encode the {\tt if} statement as
\[
x_i \leftarrow x_cx_j + (1-x_c)x_k.
\]
Using the assumption that $x_j,x_k \leq B$, we claim that this transformation is equal to
\[
\relu( (2x_c-1)B + x_j ) + \relu( (2(1-x_c)-1)B + x_k ) - B.
\]
To see this, note that when $x_c = 1$, the expression evaluates to $\relu(B + x_j) + \relu(x_k - B) - B = \relu(B + x_j) - B = x_j$. When $x_c = 0$ the expression evaluates to $\relu(x_j - B) + \relu(x_k + B) - B = \relu(x_k + B) - B = x_k.$ We may also update $x_i$ with constants instead of variables. The formulas below adapt in a straightforward way: 
\begin{align*}
W^{(1)},\, b^{(1)} & = 
\begin{cases}
    W^{(1)}_{r,\cdot} = e_r^\top,\, b^{(1)}_r = 0 \quad &\text{for } r=1,\dots,V, \\
    W^{(1)}_{r,\cdot} = 2Be_c^\top + e_j^\top, \, b^{(1)}_r = -B \quad & r=V+1, \\
    W^{(1)}_{r,\cdot} = -2Be_c^\top \edits{+} e_{k}^\top,\, b^{(1)}_r = B \quad & r=V+2,
\end{cases} \\
W^{(2)}, \, b^{(2)} & = 
\begin{cases}
    W^{(2)}_{r,\cdot} = e_{V+1}^\top + e_{V+2}^\top, \, b^{(2)}_r = -B \quad &\text{for } r= i, \\
    W^{(2)}_{r,\cdot} = e_{r}^\top,\, b^{(2)}_r = 0 \quad & r = [V]\backslash i.
\end{cases}
\end{align*}
    
\item \textit{Return}. This is the last layer in the network, which has one node. Supposing that the desired output is the $i$th variable $x_i,$ take $W = e_i^\top, b = 0.$ 
\end{itemize}

\subsection{Inductive step: For loops}

Now, consider encoding a general depth $d$ SNP $\tP$ with a feedforward network. Suppose $\tP$ has top-level representation $(\tO_1,\dots,\tO_k)$. Assume again that the runtime values of variables are bounded by $B := B(N)$. Each object $\tO_i$ can be mapped to a sequence of neural network layers $f_{\tO_i,B}.$
\begin{itemize}
    \item If $\tO_i$ is an SNP statement, $f_{\tO_i,B}$ is the corresponding layer defined in Section \ref{sec:base_case_conversion}.
    \item If $\tO_i$ is a {\tt for} loop with clause $\tC$, $f_{\tO_i,B}$ is the sequence of layers described in the remainder of this section.
\end{itemize}
Finally, define $F_{\tP,N}$ to be the composition
\[
f_{\tO_k,B} \circ f_{\tO_{k-1},B} \dots \circ f_{\tO_1,B}.
\]

\paragraph{For loop encoding} 
Consider a {\tt for} loop with clause $\tC$, which increments a counter variable $x_i$ from $x_{s}$ to $x_{e}$. The start and endpoints of the loop may also be constants; the layer constructions below adapt straightforwardly. By the inductive hypothesis, $\tC$ is a depth $d-1$ SNP with variable context $\tV$, so there exists a neural network $F_{\tC}$ encoding the program. The prescription involves the following sequence of layers:

    \begin{enumerate}
        \item The first layer ($L_1$) sets the counter variable $x_i$ to the specified start $x_{s}$, and initializes $c \leftarrow 0$, which is not contained inside the variable context $\tV$. Storing the variable $c$ as the $(V + 1)$th variable, ($L_1$) has weight and bias parameters
        \[
        W,\, b = 
        \begin{cases}
            W_{r,\cdot} = e_s^\top,\, b_r = 0 \quad &\text{for } r= i, \\
            W_{r,\cdot} = 0,\, b_r = 0 \quad &\text{for } r=V + 1,  \\
            W_{r,\cdot} = e_{r}^\top, \, b_r = 0 \quad &\text{otherwise.}
        \end{cases}
        \]
        \item Repeat the following layers $B+1$ times: \begin{enumerate}

            \item[($L_2$, $L_3$)] Assign $c \gets \mathbf{1}\set{x_i \leq x_e}$, a binary operator which requires two layers of (output) widths $V + 3$ and $V + 1$.
            \item[($L_4$)] For each variable $x \in \texttt{V} \backslash \set{x_i}$, create a temporary node in the neural network $x_{\text{old}}$ storing the current value of $x$. This layer has output width $\leq 2V$.
            \item[(Clause)]  To the variables in $\texttt{V} \backslash \set{x_i}$, apply $F_{\tC}$. Recall that the clause may not add variables or modify the loop counters. To the temporary nodes and $c$, apply the identity transformation. Explicitly, suppose $W$ and $b$ are the parameters of any layer in $F_{\tC}.$ Supposing the temporary nodes are indexed by the last $V$ variables, create a similar layer in $F_{\tP,N}$ that has parameters $\overline{W}$ and $\overline{b}$, given by
            \begin{equation}
            \label{eq:augmented_layer}
            \overline{W} = 
            \begin{bmatrix}
            W & 0 \\
            0 & I \\
            \end{bmatrix},\ \ 
            \overline{b} = 
            \begin{bmatrix}
            b \\
            0
            \end{bmatrix},
            \end{equation}
            where the $I$ is of order $V\times V$ and the $0$ vector in $\overline{b}$ has length $V$.
            \item[($L_5$, $L_6$)] Using the \texttt{if} construction, update each variable $x$ in $\tV\backslash \set{x_i}$ by 
                \[
                cx + (1-c)x_{\text{old}},
                \]
           simultaneously to all variables in $\tV\backslash \set{x_i}$. The \texttt{if} transformation creates two temporary variables for every variable to be updated. Hence the layer has width $\leq 4V.$ 
            \item[($L_7$)] Delete the temporary variable copies from ($L_4$), and set $x_i \gets x_i + 1$.  This layer has width $V + 1$.
        \end{enumerate}
        \item The final layer ($L_8$) deletes the variable $c$.
    \end{enumerate}

The encoding of the {\tt for} loop repeats the block of layers inside the {\tt for} loop $B+1$ times, but ensures that the clause is only applied $x_{e} - x_{s} + 1$ times, by keeping track of a counter variable. The advantage of this construction is that the layers applied to encode the {\tt for} loop are exactly the same copies of each other, repeated $B+1$ times. This is important so that the structure of the network does not depend so much on the input. 

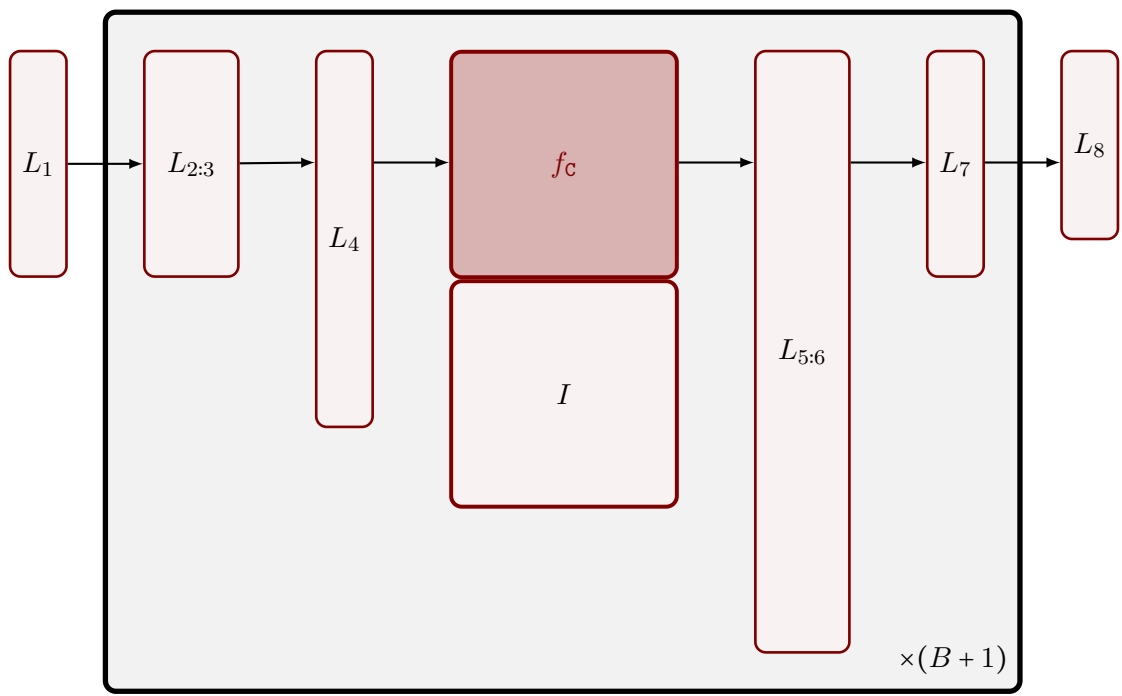
\begin{figure}[H]
\centering
\begin{tikzpicture}[
    align = center,
    box/.style={rectangle, draw, thick, rounded corners, minimum height=3cm, minimum width=0.75cm, anchor = north, draw = maroon, fill = maroon!5, line width = 1pt}
]


\node[box] (L1) at (0,0) {$L_1$};
\node[box, anchor=north west, minimum width = 1.25cm] (L2) at ($(L1.north east)+(1,0)$) {$L_{2:3}$};
\node[box, anchor=north west, minimum height = 5cm] (L4) at ($(L2.north east)+(1,0)$) {$L_4$};
\node[box, anchor=north west,
        color = maroon, fill = maroon!30, minimum width = 3cm, line width = 1.5pt] (clause) at ($(L4.north east)+(1,0)$) {$f_{\tC}$};
\node[box, anchor=north west, minimum width = 3cm, line width = 1.5pt] (id) at ($(clause.south west)+(0,0)$) {$I$};
\node[box, anchor=north west, minimum height = 8cm, minimum width = 1.25cm] (L5) at ($(clause.north east)+(1,0)$) {$L_{5:6}$};
\node[box, anchor=north west] (L7) at ($(L5.north east)+(1,0)$) {$L_7$};
\node[box, anchor=north west, minimum height = 2.5cm] (L8) at ($(L7.north east)+(1,0)$) {$L_8$};

\draw[->, thick] (L1) -- (L2);
\draw[->, thick] (L2) -- ($(L4.north west)+(0,-1.5)$);
\draw[->, thick] ($(L4.north east)+(0,-1.5)$) -- ($(clause.north west)+(0,-1.5)$);
\draw[->, thick] ($(clause.north east)+(0,-1.5)$) -- ($(L5.north west)+(0,-1.5)$);
\draw[->, thick] ($(L5.north east)+(0,-1.5)$) -- ($(L7.north west)+(0,-1.5)$);
\draw[->, thick] ($(L7.north east)+(0,-1.5)$) -- ($(L8.north west)+(0,-1.5)$);

\begin{pgfonlayer}{background}

\draw[box, thick, draw = black, fill = gray!10, line width = 2pt] ($(L2.north west)+(-0.5,0.5)$) rectangle ($(L5.south east)+(2.25,-0.5)$);

\node[below right, at={(L5.south east)}, xshift= +0.5cm, yshift=+0.25cm] (xB) {$\times (B+1)$};
\end{pgfonlayer}

\end{tikzpicture}
\caption{Schematic of the {\tt for} loop construction with clause $\tC$. Gray rectangle denotes a repetition of the layers contained within it $B+1$ times. }
\label{fig:forloop}
\end{figure}

\subsection{Proof of encoding}
The following theorem gives a formal proof that our scheme successfully encodes an SNP by a feedforward neural network.

\begin{theorem}
\label{thm:sneuralprograms_to_neuralnetworks}
Let $\tP$ be an SNP with variable context $\tV = (x_1,\dots,x_{V})$, indexed by the statements $(\tS_1,\dots,\tS_L)$. Let $\tP$ take in inputs $(x_1,\dots,x_I) \in [N]^{I}$ and be $B := B(N)$-bounded. Then for each $N,$ there is a feedforward neural network $F_{\tP,N}$ with ReLU nonlinearity, which agrees with the program for all inputs in $[N]^{I}.$ Further, all parameters of the neural network are bounded by $B$, and all {\tt for} loop layers in $\tP$ repeat $B+1$ times.
\end{theorem}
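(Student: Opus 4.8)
The plan is to prove Theorem~\ref{thm:sneuralprograms_to_neuralnetworks} by induction on the depth $d$ of the SNP $\tP$, following exactly the construction laid out in Sections~\ref{sec:base_case_conversion} and the inductive step. For the base case $d=0$, I would verify statement by statement that each sequence of layers $f_{\tS_i,B}$ agrees, as a function $\bb{N}_0^V \to \bb{N}_0^V$, with the corresponding primitive statement $\tS_i$, provided the input vector has all coordinates in $\set{0,1,\dots,B}$. Each of these checks is a routine computation using the explicit weight/bias matrices given and the integer ReLU identities such as \eqref{eq:relu_identity} and \eqref{eq:equality_check}; the one point that genuinely needs the boundedness hypothesis is the \texttt{if} statement, where the identity $x_cx_j + (1-x_c)x_k = \relu((2x_c-1)B + x_j) + \relu((2(1-x_c)-1)B + x_k) - B$ requires $x_j,x_k\le B$ (and $x_j,x_k\ge 0$) to make the ``wrong'' ReLU branch vanish. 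I would also note the invariant that must be maintained throughout: every intermediate value produced (including the temporary nodes at indices $>V$) is a nonnegative integer bounded by $B$, so that the outer $\relu(\cdot)$ wrappers act as the identity on the final results and the composition of statement-blocks is well-defined. Composing $f_{\tS_L}\circ\cdots\circ f_{\tS_1}$ then agrees with $\tP$ on all of $[N]^I \subseteq \set{0,\dots,B}^I$, and the \texttt{return} layer extracts the output coordinate. Since all weights and biases appearing in these blocks are among $\set{-2,-1,0,1,2,c,\pm B}$ with $c$ a program constant bounded by $B$, the parameter bound holds, and no {\tt for} loops occur, so the ``repeat $B+1$ times'' clause is vacuous.

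For the inductive step, assume the claim for all SNPs of depth $<d$ and let $\tP$ have depth $d$ with top-level representation $(\tO_1,\dots,\tO_k)$. Top-level statements are handled by the base case. For a top-level {\tt for} loop $\tO_i$ with clause $\tC$, the clause $\tC$ is an SNP of depth $\le d-1$ with the same variable context $\tV$ and the same bound $B$, so by the inductive hypothesis there is a network $F_{\tC}$ with the stated properties computing $\tC$ on $\set{0,\dots,B}^V$. The core of the argument is to show that the block of layers $(L_1)$; $[(L_2,L_3),(L_4),(\text{Clause}),(L_5,L_6),(L_7)]^{\times(B+1)}$; $(L_8)$ computes exactly the loop semantics: it sets $x_i \gets x_s$, then performs $B+1$ iterations, in the $t$-th of which it computes the guard bit $c = \mathbf{1}\set{x_i \le x_e}$, saves the old variable values, applies $F_{\tC}$, and then conditionally commits the clause's output via $x \gets cx + (1-c)x_{\text{old}}$ and increments $x_i \gets x_i+1$. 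I would prove by an inner induction on $t$ that after $t$ iterations the non-counter variables hold the value they would have after $\min(t,\, x_e - x_s + 1)$ real clause executions, and the counter holds $\min(x_s + t,\, x_e+1)$: while $x_i \le x_e$ the guard is $1$ so the clause output is committed and the counter advances exactly as a real loop iteration; once $x_i = x_e+1$ the guard becomes $0$ forever, so every subsequent iteration restores $x_{\text{old}}$ (a genuine no-op) and leaves $x_i$ fixed at $x_e+1$ (the increment $L_7$ is still applied, but — wait, one must check this carefully: the counter is \emph{not} guarded, so $L_7$ would keep incrementing it; re-reading the construction, the guard must also protect the counter, i.e. $x_i$ is among the variables updated by $cx + (1-c)x_{\text{old}}$ in $(L_5,L_6)$, or equivalently $L_7$ increments then $L_5$--$L_6$ reverts — I would state the precise bookkeeping so that $x_i$ saturates at $x_e+1$). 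Since a real loop runs $x_e - x_s + 1 \le B - 0 + 1 = B+1$ times (as $x_e \le B$ and $x_s \ge 0$), $B+1$ padded iterations suffice to reach the final state. The augmented-layer device \eqref{eq:augmented_layer} is what lets $F_{\tC}$ act on the live variables while the saved copies $x_{\text{old}}$ and the guard $c$ ride along untouched.

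Assembling: $F_{\tP,N} = f_{\tO_k,B}\circ\cdots\circ f_{\tO_1,B}$ agrees with $\tP$ on $[N]^I$ by composing the per-object correctness statements, using throughout the maintained invariant that all values (including temporaries $c$, $x_{\text{old}}$, and the base-case auxiliary nodes inside $F_{\tC}$) stay nonnegative integers $\le B$ — this is exactly where $B$-boundedness of $\tP$ is used, and it is needed both for the \texttt{if}/guard identities and to ensure the $\relu$ wrappers are benign. The parameter bound is inherited: $(L_1)$--$(L_8)$ use entries in $\set{-1,0,1,2}$ together with $\pm B$, and the block $\overline{W},\overline{b}$ reuses the parameters of $F_{\tC}$, which are $\le B$ by the inductive hypothesis. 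Finally, every {\tt for} loop — whether top-level or, recursively, inside some $F_{\tC}$ — has its layer block repeated exactly $B+1$ times by construction, giving the last assertion. The main obstacle I anticipate is getting the {\tt for}-loop bookkeeping exactly right: precisely specifying how the guard bit $c$ interacts with the counter increment so that the counter genuinely saturates at $x_e+1$ (rather than running away), confirming that $\tC$'s prohibition on modifying $x_i$ and $x_e$ makes the ``save/restore'' and ``conditional commit'' steps consistent, and checking that the padded no-op iterations truly leave the entire state — not just the variables in $\tV$ but also any transient nodes — unchanged.
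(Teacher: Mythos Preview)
Your approach is exactly the paper's: induction on the nesting depth $d$, with the base case handled by the per-statement layer constructions of Section~\ref{sec:base_case_conversion} and the inductive step invoking the hypothesis on the clause $\tC$ of each top-level \texttt{for} loop. The paper's argument is much terser than your outline --- it does not run an inner induction on iterations but simply asserts that the loop block ``applies the clause $\tC$ to $\tV$ exactly $x_e-x_s+1$ times'' and hence agrees with $\tO_i$.

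The concern you flag about the counter is precisely the kind of bookkeeping the paper's short proof leaves implicit. In the construction as written, $(L_5,L_6)$ restores only $\tV\setminus\{x_i\}$ and $(L_7)$ increments $x_i$ unconditionally, so the counter does \emph{not} saturate. Note also that making the counter saturate would not by itself settle the deeper issue you raise at the end: the \texttt{if} identity in $(L_5,L_6)$ requires the clause output $x$ to lie in $\{0,\dots,B\}$, and $B$-boundedness of $\tP$ only guarantees this for states on the actual execution path, not for the extra pass of $F_{\tC}$ applied after the loop has logically terminated (e.g.\ if the clause is \texttt{res = res + 1} and the last committed value of \texttt{res} is $B$, the dead pass produces $B+1$ and the restore step leaks). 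Your instinct that verifying ``the padded no-op iterations truly leave the entire state unchanged'' is the main obstacle is correct; the paper does not spell out how this is handled.
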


\begin{proof}
The proof proceeds by induction on the nested depth of SNPs with a fixed variable context $\tV$. For the base case, consider a program $\tP$ of nested depth $0$. By the conversion described in Section \ref{sec:base_case_conversion}, there is a neural network $F_{\tP,N}$ which exactly agrees with the output of $\tP$ for every choice of input in $[N]^I$, where the maximum parameter in the network is bounded by $B.$\footnote{Only the \texttt{if} layer construction has parameters that depend on $B$.}

For the inductive step, consider a program $\tP$ of nested depth $d$, with top level representation $(\tO_1,\dots,\tO_k).$ We must prove that the neural network defined by the composition
\[
f_{\tO_k,B} \circ f_{\tO_{k-1,B}} \dots \circ f_{\tO_1,B}
\]
agrees with the program $\tP$ for all inputs $\in [N]^{I}.$ If $\tO_i$ is an SNP statement, $f_{\tO_i, B}$ is equivalent to $\tO_i$ as a function $\bb{N}_0^V \rightarrow \bb{N}_0^V$. Consider the case where $\tO_i$ is a {\tt for} loop with clause $\tC$. The clause $\tC$ is also an SNP with variable context $\tV$, of nested depth equal to $d-1.$ By the inductive hypothesis, there exists a neural network $F_{\tC}$ on the variables $x_1,\dots,x_V$ which encodes the program $\tC$, agreeing on all values of the variables $x_1,\dots,x_V$ less than $B.$ The inductive hypothesis also guarantees all parameters of $F_{\tC}$ are bounded above by $B$, and all {\tt for} loop constructions in $\tC$ iterate at most $B+1$ times. The {\tt for} loop construction for $\tO_i$ with $B+1$ repetitions applies the clause $\tC$ to $\tV$ exactly $\edits{x_e - x_s} + 1$ times, since $\edits{x_e - x_s} + 1 \leq B+1$, and so agrees with $\tO_i$ as functions $\bb{N}_0^V \rightarrow \bb{N}_0^V$. When creating layers $(L_5,L_6)$ of $\tO_i$'s \texttt{for} loop construction, we use $B$ for the parameters of the \texttt{if} statements. This ensures all parameters in $F_{\tP,N}$ are bounded above by $B(N)$.
\end{proof}

\subsection{Maximum width of the neural network} 

The construction of $F_{\tP}$ has some additional properties which we record here. Firstly, the width of the neural network is controlled by the length of the SNP. Let $W_{\max}(F)$ be the maximum width of any feedforward neural network $F$.

\begin{lemma}[Bounding the maximum width of the neural network]
\label{lemma:maxwidth}
Consider an SNP $\tP$ with variable context $\tV$ of size $V$, length $L$, taking inputs $[N]^I$. Then
\[
W_{\max}(F_{\tP,N}) \leq 4VL.
\]
\end{lemma}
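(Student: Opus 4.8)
The plan is to induct on the nested depth $d$ of $\tP$ (with its fixed variable context $\tV$ of size $V$), following exactly the recursive structure of the construction of $F_{\tP,N}$ in Sections~\ref{sec:base_case_conversion}--\ref{sec:snps_to_neuralnetworks}. For the base case $d=0$, the network is $f_{\tS_L}\circ\cdots\circ f_{\tS_1}$ acting on the input layer of width $V$, so it suffices to bound the width of each $f_{\tS_i}$ by reading off the explicit layer formulas: value assignment, basic operations and binary numerical operations preserve width $V$; unary operations and binary logical operations introduce at most three temporary nodes (width $\le V+3$); the \texttt{if} construction introduces two (width $\le V+2$); and \texttt{return} has width $1$. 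Since $V\ge 1$ and $L\ge 1$, every layer has width at most $V+3\le 4V\le 4VL$, establishing the base case.

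For the inductive step, write the top-level representation of $\tP$ as $(\tO_1,\dots,\tO_k)$, so $F_{\tP,N}=f_{\tO_k,B}\circ\cdots\circ f_{\tO_1,B}$ and $W_{\max}(F_{\tP,N})=\max_i W_{\max}(f_{\tO_i,B})$. If $\tO_i$ is a single statement, the base-case analysis gives $W_{\max}(f_{\tO_i,B})\le V+3\le 4VL$. If $\tO_i$ is a \texttt{for} loop with clause $\tC$, then $\tC$ is an SNP with the same variable context $\tV$, of nested depth $d-1$, and of length $L_{\tC}\le L-1$ — because $L$ counts the \texttt{for} statement together with all statements inside $\tC$. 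By the inductive hypothesis $W_{\max}(F_{\tC})\le 4VL_{\tC}$. Now inspect the layers of the \texttt{for} loop wrapper: $L_1$, $L_{2:3}$, $L_7$, $L_8$ have width $\le V+3$; $L_4$ has width $\le 2V$; the \texttt{if}-based update layers $L_{5:6}$ have width $\le 4V$; and the augmented clause layers \eqref{eq:augmented_layer} have width $W_{\max}(F_{\tC})+V$ (the block $I$ of order $V\times V$ accounting for the counter $c$ and the $x_{\mathrm{old}}$ copies). Hence, using $L_{\tC}\ge 1$ for any nonempty clause,
\[
W_{\max}(f_{\tO_i,B}) \;\le\; \max\bigl\{\, 4V,\ W_{\max}(F_{\tC})+V \,\bigr\} \;\le\; 4VL_{\tC}+V \;\le\; 4V(L-1)+V \;\le\; 4VL .
\]
Taking the maximum over $i$ completes the induction. (If $\tC$ is empty so $L_{\tC}=0$, the bound $\max\{4V,V\}=4V\le 4VL$ still holds.)

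The only real content beyond reading off widths is the bookkeeping $L_{\tC}\le L-1$ together with checking that, among all the layer types in the \texttt{for} loop wrapper, it is the augmented clause layer \eqref{eq:augmented_layer} of width $W_{\max}(F_{\tC})+V$ that dominates, every other layer contributing only $O(V)$; I expect matching up these widths to be the main, though entirely routine, obstacle.
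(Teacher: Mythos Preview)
Your proof is correct and follows the same induction-on-depth strategy as the paper, reading off the layer widths from the explicit constructions and identifying $W_{\max}(F_{\tC})+V$ as the dominant term in the \texttt{for} loop wrapper. The only difference is the choice of inductive invariant: the paper proves the slightly sharper intermediate claim $W_{\max}(F_{\tP,N})\le 4V\max(1,d)$ in terms of the nested depth $d$ and then uses $d\le L$ at the end, whereas you carry the length directly through the induction via the bookkeeping $L_{\tC}\le L-1$; both routes are equally routine and yield the same final bound.
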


\begin{proof}
We prove this statement again by inducting on the nested depth $d$ of the program. The inductive claim will be $\maxWidth(\tP) \leq 4V\max(1,d)$. For the base case, consider $d = 0$ (so that there are no {\tt for} loops.) Then $\maxWidth(\tP) \leq V+3$ since there are $V$ variables in the program and all non-\texttt{for} loop SNP operations temporarily increase the width of the neural network by at most $3$.

Now, consider any program $\tP$ with length $L$ and maximum nested depth $d \geq 1$, and write its top-level representation as $(\tO_1,\dots,\tO_k).$ If $f_{\tO_i}$ denotes the sequence of neural network layers corresponding to $\tO_i$ in $F_{\tP,N}$, then
\[
W_{\max}(F_{\tP,N}) = \max_{i=1}^k(W_{\max}(f_{\tO_i}))
\]
If $\tO_i$ is an SNP statement, then $W_{\max}(f_{\tO_i}) \leq V+3$. Otherwise, consider when $\tO_i$ is a \texttt{for} loop with clause $\tC_i.$
Notice that $\tC_i$ is also an SNP, where the maximum nested depth is $d-1$. By the inductive hypothesis, $\maxWidth(F_{\tC_i}) \leq 4V\max(d-1,1)$. By inspecting the {\tt for } loop construction, we can bound the widths of the layers. Layers $L_1,L_2,L_3$ have widths at most $V+3$; $L_4, L_5,L_6$ have widths at most $4V$. The layers encoding the clause have widths at most $V + \maxWidth(F_{\tC_i})$, since $F_{\tC_i}$ is a mapping from $\bR^V \rightarrow \bR^V$. Finally, $L_7,L_8$ have widths at most $V+1$. As a result,
\begin{equation}
\label{eq:maxwidth_eq}
\maxWidth(f_{\tO_i}) \leq \max(V+3, 4V, V + \maxWidth(F_{\tC_i})) \leq  4V + \maxWidth(\tC_i)    
\end{equation}
 By equation~\eqref{eq:maxwidth_eq} and the inductive claim, we conclude that $\maxWidth(\tP) \leq 4V + 4V\max(d-1,1) \leq 4V\max(d,1)$. To deduce  the original claim, note that the maximum nested depth is at most~$L.$ 
\end{proof}

\subsection{Compressibility of the neural network} 

Secondly, the sequence of layers of the neural network $F_{\tP,N}$ are compressible, since \texttt{for} loops are encoded by repetitions of the same layers. To explicitly capture this, consider a $B$-bounded SNP $\texttt{P}$ with a fixed variable context $\tV$. We will define its \textit{repetition-compressed representation}, which will be a string using exponentiation to capture repetition of parameters. For example, if $\tP$ has a parameter representation $\theta_1\theta_2\theta_3\theta_3\theta_2\theta_3\theta_3$, we can express this as
\[
\theta_1(\theta_2(\theta_3)^2)^2
\]
where the two representations are equal when interpreted as words of the free algebra generated by all possible parameters.

To formally define the repetition-compressed representation of $\tP,$ first note that any SNP statement $\tS_i$ which is not a \texttt{for} loop  maps to a sequence of layers $f_{\tS_i,B} = g_{i,l_i} \circ \dots \circ g_{i,1}$. Each layer $g_{i,j}$ is parametrized by its weight matrix and bias vector $\theta_{i,j} = (W^{(i,j)},b^{(i,j)})$. Denote by $\Theta(f_{\tS_i,B})$ the sequence of parameters of the layers comprising $f_{\tS_i,B}$:
\[
\Theta(f_{\tS_i,B}) := \theta_{i,1}\theta_{i,2}\dots\theta_{i,l_i}, 
\]
interpreted as a word in the free algebra generated by all possible parameters. The \textit{repetition-compressed representation} of $\tP$, denoted $\cl{RC}(\tP)$, is defined inductively as follows. 

\begin{enumerate}
    \item  \textit{Base case.} Consider any program $\tP = (\tS_1,\dots,\tS_L)$ of nested depth $0$, so that there are no \texttt{for} loops. Define its compressed representation as
        \[
        \prod_{i = 1}^{L} \Theta(f_{\tS_i,B}),
        \]
    the concatenation of $\Theta(f_{\tS_i,B})$ for all $i$. This is also the same as $\Theta(F_{\tP,N}).$
    
    \item  \textit{Inductive step.} Now, consider any SNP of nested depth $d \geq 1$. Denote the top-level representation of $\tP$ by the sequence $(\tO_1,\dots,\tO_k)$. If $\tO_i$ represents the \texttt{for} loop statement $\tS_j$ with clause $\tC$, extend the map $\Theta$ by
        \[
        \Theta(\tO_i) = \theta_{i,1}(\theta_{i,2}\theta_{i,3}\theta_{i,4}\overline{\cl{RC}(\tC)}\theta_{i,5}\theta_{i,6}\theta_{i,7})^{B+1}\theta_{i,8}
        \]
    where $\theta_{i,\cdot}$ denote the layer parameters in the \texttt{for} loop construction, and $\overline{\cl{RC}(\tC)}$ is the repetition compressed representation, replacing every parameter $\theta = (W,b)$ with its augmented version $\overline{\theta} := (\overline{W},\overline{b})$ as in Eq.~\eqref{eq:augmented_layer}. Finally, define $\cl{RC}(\tP)$ to be the concatenation $\prod_{i=1}^k \Theta(\tO_i).$ 
\end{enumerate}

\begin{example}
Consider the following program, which has maximum bound $B(N) \leq 11.$ The variable context is:
\begin{minted}{python}
int i = 1
int j = 1
int res = 0
\end{minted}
The statements of the program are as follows:
\begin{minted}[linenos]{python}
for i = 1,...,10:
    res = 0
    for j = 1,...,10:
        res = res + 1
return res
\end{minted}
The program has only one top level \texttt{for} loop, on line $1$, with clause $\tC$ consisting of lines 2-4. It can be written as $\tO_1\tO_2$ where $\tO_1$ represents the \texttt{for} loop on line $1$ with its clause $\tC$, and $\tO_2$ represents $\tS_5$.
Then 
\[
\cl{RC}(\tC) = \theta_{2,1}\theta_{3,1}(\theta_{3,2}\theta_{3,3}\theta_{3,4}\overline{\theta}_{4,1}\theta_{3,5}\theta_{3,6} \theta_{3,7})^{B+1}\theta_{3,8}. 
\]
where $\overline{\theta}_{4,1}$ is the parameter representation of $\tS_4$, augmented by the \texttt{for} loop construction of line $3.$ Altogether,
\[
\cl{RC}(\tP) = \theta_{1,1} (\theta_{1,2}\edits{\theta_{1,3}}\theta_{1,4}\overline{\cl{RC}(\tC)}\theta_{1,5}\theta_{1,6}\theta_{1,7})^{B+1} \theta_{1,8} \theta_{5,1}.
\]
\end{example}
The main claim is that the resulting string, when interpreted as an element of the free algebra generated by all possible parameters $\theta$, is equal to the full parameter sequence of $F_{\tP,N}.$

\begin{proposition}[The layers of $F_{\tP,N}$ are efficiently describable]
\label{prop:nn_compressed_representation}
Consider an SNP $\tP$ of length $L$ with variable context $\tV$, bounded by $B := B(N)$, with inputs in $[N]^I$. Denote its neural network encoding by $F_{\tP,N}$. Let $\cl{RC}(\tP)$ denote the repetition-compressed layer representation of the SNP $\tP.$ Then:
\begin{itemize}
    \item $\cl{RC}(\tP)$ is equivalent to the sequence of parameters of $F_{\tP,N}.$ 
    \item The number of unique symbols $\theta$ in $\cl{RC}(\tP)$ is $\le 8L$.
    \item The number of parenthesis pairs $(\dots)^{B+1}$ in $\cl{RC}(\tP)$ is equal to the number of \textup{\texttt{for}} loops in $\tP$.
\end{itemize}

\end{proposition}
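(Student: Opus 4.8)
The plan is to prove all three bullet points by induction on the nested depth $d$ of $\tP$, closely mirroring the inductive construction of $F_{\tP,N}$ and the inductive definition of $\cl{RC}(\tP)$. The key observation is that the statement is really a bookkeeping claim: $\cl{RC}(\tP)$ is a compressed encoding of a concatenation of layer-parameter words, and we must check (a) that expanding the exponents recovers exactly the layer sequence that Theorem~\ref{thm:sneuralprograms_to_neuralnetworks} uses to build $F_{\tP,N}$, (b) a count of distinct symbols, and (c) a count of parenthesized blocks.

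For the base case $d=0$, the program has no \texttt{for} loops, so $\cl{RC}(\tP) = \prod_{i=1}^L \Theta(f_{\tS_i,B})$ with no exponentiation. The first bullet is then immediate from the base-case construction in Section~\ref{sec:base_case_conversion}, since by definition $F_{\tP,N} = f_{\tS_L}\circ\cdots\circ f_{\tS_1}$ and $\Theta(f_{\tS_i,B})$ lists precisely the parameters of the layers of $f_{\tS_i,B}$. For the symbol count, I would note that each non-\texttt{for} statement $\tS_i$ compiles to at most $2$ layers (the equality/inequality and \texttt{if}-type statements use exactly $2$; assignments, basic operations, and \texttt{return} use $1$), so the total number of symbols — hence certainly the number of \emph{distinct} symbols — is at most $2L \le 8L$. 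The third bullet is trivial since there are no parentheses and no \texttt{for} loops.

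For the inductive step, write the top-level representation $(\tO_1,\dots,\tO_k)$ and recall $F_{\tP,N} = f_{\tO_k,B}\circ\cdots\circ f_{\tO_1,B}$ while $\cl{RC}(\tP) = \prod_{i=1}^k \Theta(\tO_i)$. It suffices to check, for each $\tO_i$ separately, that $\Theta(\tO_i)$ expands to the parameter sequence of $f_{\tO_i,B}$ and to track the two counts. If $\tO_i$ is a plain statement this reduces to the base case. If $\tO_i$ is a \texttt{for} loop with clause $\tC$, the construction in Section~5.2 produces layer $L_1$, then $B+1$ copies of the block $(L_2,L_3,L_4,\text{[clause layers]},L_5,L_6,L_7)$, then $L_8$; the clause layers inside are exactly the layers of $F_{\tC}$ augmented as in \eqref{eq:augmented_layer}. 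This matches $\Theta(\tO_i) = \theta_{i,1}(\theta_{i,2}\theta_{i,3}\theta_{i,4}\overline{\cl{RC}(\tC)}\,\theta_{i,5}\theta_{i,6}\theta_{i,7})^{B+1}\theta_{i,8}$ once we invoke the inductive hypothesis that $\cl{RC}(\tC)$ expands to the parameter sequence of $F_{\tC}$ — and crucially that augmentation commutes with the free-algebra expansion, i.e.\ $\overline{\cl{RC}(\tC)}$ (defined by replacing each $\theta$ with $\overline\theta$) expands to the augmented layer sequence of $F_{\tC}$, which I would note holds because augmentation is applied symbol-by-symbol and therefore distributes over both concatenation and exponentiation. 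For the symbol count: $\tO_i$ contributes $8$ new symbols $\theta_{i,1},\dots,\theta_{i,8}$ from its own \texttt{for}-loop scaffolding (note the repeated block reuses symbols, so repetition does not inflate the count), plus whatever symbols $\cl{RC}(\tC)$ contributes; summing over the top-level representation and using the inductive hypothesis for clauses gives at most $8 \cdot (\#\text{statements in }\tP) = 8L$ distinct symbols, since every statement of $\tP$ — whether top-level or inside some clause — is counted once in the length $L$ and contributes at most $8$ symbols. For the parenthesis count: $\tO_i$ contributes exactly one block $(\dots)^{B+1}$ if it is a \texttt{for} loop and zero otherwise, plus the blocks from $\cl{RC}(\tC)$; by the inductive hypothesis the latter equals the number of \texttt{for} loops strictly inside $\tC$, and summing over all $\tO_i$ yields exactly the total number of \texttt{for} loops in $\tP$.

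The main obstacle I anticipate is not any single hard step but rather stating the augmentation-commutes-with-expansion claim cleanly: one must be careful that $\overline{(\cdot)}$ acts on symbols of the free algebra (so it is well-defined on words and respects the product and power operations) and that the augmentation performed by the enclosing \texttt{for} loop's clause-layer construction is literally the same map $\theta \mapsto \overline\theta$. Once that is pinned down, everything else is routine induction and counting, and I would present it compactly rather than re-deriving the layer formulas.
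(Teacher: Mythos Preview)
Your proposal is correct and follows essentially the same approach as the paper: induction on the nested depth, with the base case giving $\le 2L$ symbols and the inductive step counting $8$ scaffolding symbols per \texttt{for} loop plus the inductive contribution from each clause, summing to $\le 8L$. The only difference is that you are more explicit about the first bullet (in particular the augmentation-commutes-with-expansion observation), which the paper treats as evident from the construction.
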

The first claim is evident from the induction and \texttt{for} loop construction in Theorem \ref{thm:sneuralprograms_to_neuralnetworks}. When $\tP$ is a depth zero program, $\cl{RC}(\tP)$ is exactly equal to the parameter sequence of $F_{\tP,N}.$ In the general case, consider a program $\tP$ of depth $d > 1$ with top-level representation $(\tO_1,\dots,\tO_k)$; then $\prod_{i=1}^k \Theta(\tO_i).$ If $\tO_i$ is a \texttt{for} loop with clause $\tC_i$, $\Theta(\tO_i)$ exactly encodes the elements of the \texttt{for} loop construction: (1) the $8$ additional layers in the \texttt{for} loop  construction, (2) the repetition of layers $B+1$ times, and (3) the augmenting of layers corresponding to $\tC_i$. 
\begin{proof}[Proof of Proposition \ref{prop:nn_compressed_representation}]
The proof of the second and third properties follows from induction on the depth of a program $\tP$. For the base case, consider a program $\tP$ of depth zero. In this case, the number of parameter symbols $\theta$ in $\cl{RC}(\tP)$ is at most $2L$, since every non \texttt{for} loop statement can be encoded in at most two layers. There are no \texttt{for} loops or parentheses in $\cl{RC}(\tP)$. This establishes the base case. 

For the inductive step, consider a program $\tP$ of depth $d > 1$ with top-level representation $(\tO_1,\dots,\tO_k).$ Recall that $\cl{RC}(\tP)$ is the concatenation $\prod_{i=1}^k \Theta(\tO_i).$ To show the second property, let $u(S)$ be the number of unique $\theta$ symbols in a string $S$ in the free algebra generated by all parameter values. Then the total number of unique symbols in $\cl{RC}(\tP)$ is at most
\[
\sum_{i=1}^k u(\Theta(\tO_i)).
\]
If $\tO_i$ is an SNP statement, then $ u(\Theta(\tO_i)) \leq 2$ as observed in the base case. If it is a \texttt{\texttt{for} loop} with clause $\tC_i$, the number of symbols is $8 + u(\cl{RC}(\tC))$, since the \texttt{\texttt{for} loop} construction creates $8$ additional layers. In this case, the inductive hypothesis gives $ u(\Theta(\tO_i)) \leq 8(\text{length}(\tC_i) + 1).$ The number of top-level statements plus the sum of lengths of all top-level clauses is equal to $L$, proving that $u(\cl{RC}(\tP)) \leq 8L$. A similar argument shows that the number of parenthesis pairs in $\cl{RC}(\tP)$ is equal to the number of \texttt{for} loops in the program.
\end{proof}

\section{A Measure of Description Length for Neural Networks}
\label{sec:description_complexity}

In this section, we introduce a description length measure for the encoded neural network. The measure roughly corresponds to the number of symbols needed to describe the parameters of the neural network. We will use the following alphabet $\scr{A}$ of symbols:
\begin{enumerate}
    \item A symbol $\cl{I}$ to represent a node which is an input into the neural network.
    \item $,$ to mark the start of a new number.
    \item $-,0,1$ to describe binary expansions of integers.
    \item $(\dots)^{*k}$ to describe $k$ fold repetition of a substring of symbols, with $k$ encoded in binary.\footnote{$*k$ is written as a superscript only for clarity; there is no distinction between symbols which are in superscript and those in normal font.}
    \item Symbols $\cl{W},\cl{B}$ to demarcate the weight matrix and the bias vector of a layer: following the symbols are the values of the weights and biases.
\end{enumerate}
There are a finite number of symbols in $\scr{A}$ given by $\set{\cl{I}, ``,",\edits{-,}0,1,*,\cl{W},\cl{B},``(",``)"}$. Every feedforward neural network can be converted to a sequence of symbols, by specifying the weights and biases of every layer using the symbols above. Let $\bin(n)$ for $n \in \bb{N}_0$ be the binary expansion of a number.

\begin{definition}[Full symbol encoding of a neural network]
\label{def:symbol_encoding}
Given a neural network $F$ of depth $d$, let $(\theta_1,\dots,\theta_d)$ be the sequence of parameters of the layers, which can be rewritten as 
\begin{equation}
\label{eq:parameter_seq}
(W^{(1)},b^{(1)},W^{(2)},b^{(2)},\dots,W^{(d)},b^{(d)}).
\end{equation}
Suppose the input to $F$ is a vector $\mathbf{x} \in \bR^n$ where some coordinates of $\mathbf{x}$ may be fixed, and some may be free variables. Define the string $S_1$ by replacing
\begin{enumerate}
    \item each weight matrix $W$ in Eq. \eqref{eq:parameter_seq} by its vectorization in binary, prefixed with the $\cl{W}$ symbol:
    \[
    \cl{W},\bin(W_{1,1}),\bin(W_{1,2}),\dots,\bin(W_{m,n}),
    \]
    where $W \in \mathbb{Z}^{m \times n}$, and 
    \item each bias vector by the symbol $\cl{B}$ and its entries encoded in binary, separated by commas:
    \[
    \cl{B},\bin(b_1),\dots,\bin(b_m),
    \]
    assuming $b \in \mathbb{Z}^m$.
\end{enumerate}
Secondly, define the string $S_2$ by replacing all free variables in the input vector $\mathbf{x}$ by the symbol $\cl{I}$, and the other coordinates by their binary representations. Then, the $\scr{A}$-symbol sequence encoding $F$ is the string obtained by concatenating $S_2$ followed by $S_1$.
\end{definition}

\begin{example}
Consider a neural network with input $\bR^2$ with two layers, 
\[
W^{(1)} = 
\begin{bmatrix}
1 & 1 \\
1 & 1
\end{bmatrix},
b^{(1)} = 
\begin{bmatrix}
5 \\
5
\end{bmatrix},
\quad 
W^{(2)} = 
\begin{bmatrix}
3 & 1 \\
\end{bmatrix},
b^{\edits{(2)}} = 
\begin{bmatrix}
2 \\
\end{bmatrix},
\]
operating on the vector $[x,1]$. The full symbol sequence associated with the neural network is 
\[
\cl{I},1,\cl{W},1,1,1,1,\cl{B},101,101,\cl{W},11,1,\cl{B},10.
\]
A shorter symbol sequence describing the same network is
\[
\cl{I},1,\cl{W},(1,)^{*100}\cl{B},(101,)^{*10}\cl{W},11,1,\cl{B},10.
\]
\end{example}

Conversely, not every sequence of symbols corresponds to a neural network. A symbol sequence $S$ is called \textit{valid} if after expanding $k$-fold repetitions of substrings to obtain $\tilde{S}$, there exists a neural network whose symbol description equals $\tilde{S}.$ In other words, a sequence of symbols is valid if one can define a sequence of neural network layers by reading off from the string.

\begin{definition}
Given a neural network $F$, a sequence of symbols $a_1,\dots,a_S$ in $\scr{A}$ \textit{describes} $F$ if the neural network generated by the sequence $(a_1,\dots,a_S)$ is exactly equal to $F$.\footnote{In the sense that the layer dimensions and parameters of the neural networks must agree} The \textit{description length} of $F$ is the minimum length over symbol sequences which describe $F$. 
\end{definition}

With this definition of description length, we can show that the neural network encoding a simple neural program $\tP$ has a description length controlled by the length of $\tP$ and the number of variables.

\begin{proposition}
\label{prop:efficient_conversion}
Consider a length $L$ SNP $\tP$ which takes inputs in $[N]^I$, is $B(N)$ bounded, and has variable context $\tV$ of size $V$. Then $F_{\tP,N}$ has description length at most $cL^3V^2\log_2B(N)$ for some universal constant $c$. 
\end{proposition}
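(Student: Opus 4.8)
The plan is to build a valid $\scr{A}$-symbol sequence describing $F_{\tP,N}$ directly out of the repetition-compressed representation $\cl{RC}(\tP)$ constructed in Proposition \ref{prop:nn_compressed_representation}, and then bound its length. Recall from that proposition that $\cl{RC}(\tP)$, viewed as a word in the free algebra on layer-parameters, equals the full parameter sequence of $F_{\tP,N}$; that it contains at most $8L$ distinct layer-parameter symbols $\theta$; and that it contains exactly one parenthesized block $(\cdots)^{B+1}$ per \texttt{for} loop in $\tP$, hence at most $L$ of them, nested to depth at most $d \le L$. The key observation is that the $(\cdots)^{*k}$ construct of the alphabet $\scr{A}$ is precisely designed to realize such repeated blocks: we transcribe $\cl{RC}(\tP)$ into $\scr{A}$ by writing out, for each distinct $\theta = (W,b)$, the symbols $\cl{W},\bin(W_{1,1}),\dots,\bin(W_{m,n}),\cl{B},\bin(b_1),\dots,\bin(b_m)$, and by converting each $(\cdots)^{B+1}$ into $(\cdots)^{*(B+1)}$ with $B+1$ written in binary, prepending the input string $S_2$ at the front. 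Because $\cl{RC}(\tP)$ is literally equal to the parameter sequence of $F_{\tP,N}$ after expansion, the resulting string is valid and describes $F_{\tP,N}$.

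It remains to bound the length of this string, which I would do factor by factor. First, the number of \emph{layers} that must be explicitly transcribed: the distinct $\theta$-symbols number at most $8L$ by Proposition \ref{prop:nn_compressed_representation}, and the parenthesis/repetition-count overhead contributes at most $O(L)$ symbols plus the bits needed to write $B+1$ in binary, namely $O(L\log_2 B(N))$ total for the repetition counts. Second, the size of each layer: by Lemma \ref{lemma:maxwidth} every layer of $F_{\tP,N}$ has width at most $4VL$, so each weight matrix $W$ has at most $(4VL)^2 = 16V^2L^2$ entries and each bias vector at most $4VL$ entries. Third, the number of bits per entry: all parameters of $F_{\tP,N}$ are bounded by $B(N)$ in absolute value by Theorem \ref{thm:sneuralprograms_to_neuralnetworks}, so each entry (with sign) costs $O(\log_2 B(N))$ symbols, including the comma delimiter. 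Multiplying these, one distinct layer costs $O(V^2L^2 \log_2 B(N))$ symbols, and there are $O(L)$ of them, giving $O(V^2 L^3 \log_2 B(N))$; the input string $S_2$ has length at most $n \le 4VL$ times $O(\log_2 B(N))$ and is absorbed, as is the $O(L\log_2 B(N))$ repetition-count overhead. This yields the claimed bound $cL^3V^2\log_2 B(N)$.

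The main thing to be careful about — and what I'd expect to be the only genuine obstacle — is correctly accounting for the augmented layers $\overline{\theta} = (\overline{W},\overline{b})$ introduced inside nested \texttt{for} loops via Eq.~\eqref{eq:augmented_layer}: each level of nesting can enlarge a clause layer's matrix, but since the final width is still globally capped at $4VL$ by Lemma \ref{lemma:maxwidth}, the per-entry and per-layer bounds above already cover the augmented versions, so nesting does not compound the size beyond the stated polynomial. One should also check that a distinct augmented layer $\overline{\theta}_{i,j}$ appearing at depth $t$ is still counted among the $\le 8L$ distinct symbols — this is exactly the content of the second bullet of Proposition \ref{prop:nn_compressed_representation}, since the counting there is done on $\cl{RC}(\tP)$ after all augmentation. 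A minor bookkeeping point is that the same physical layer repeated $B+1$ times inside a loop is written only once in $\cl{RC}(\tP)$ and hence only once in the $\scr{A}$-string, which is the whole source of the compression; the factor $\log_2 B(N)$ (rather than $B(N)$) in the final bound comes precisely from encoding the repetition count in binary rather than unary.
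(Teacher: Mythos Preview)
Your proposal is correct and follows essentially the same approach as the paper: transcribe $\cl{RC}(\tP)$ into the alphabet $\scr{A}$ by writing each of the $\le 8L$ layer-parameter symbols out as $O(L^2V^2\log_2 B(N))$ characters (using Lemma~\ref{lemma:maxwidth} for the width and Theorem~\ref{thm:sneuralprograms_to_neuralnetworks} for the entry bound) and each of the $O(L)$ parenthesized blocks as $(\cdots)^{*\bin(B+1)}$, then multiply. Your additional remarks on the augmented layers $\overline{\theta}$ and the input string $S_2$ are welcome care that the paper's proof leaves implicit.
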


\begin{proof}
Consider the repetition-compressed representation of $F_{\tP,N}$, $\cl{RC}(\tP)$. By Proposition \ref{prop:nn_compressed_representation}, replacing every parameter instance $\theta$ in $\cl{RC}(\tP)$ by its alphabet description as in Definition \ref{def:symbol_encoding}, and every parenthesis $(\dots)^{B+1}$ with $(\dots)^{*\bin(B+1)}$, results in a symbol sequence which describes $F_{\tP,N}.$ 

By Lemma \ref{lemma:maxwidth}, the maximum width of the neural network $F_{\tP,N}$ is $\le 4LV$. By Theorem~\ref{thm:sneuralprograms_to_neuralnetworks}, the maximum number appearing in the weights and biases of the encoded neural network is at most $\edits{2}B(N)$, which takes at most $\edits{O}\left(\log_2B(N)\right)$ symbols to encode. Each weight matrix has at most $O(L^2V^2)$ entries, each of which takes $\log_2B(N)$ $\scr{A}$-symbols to encode, while every bias vector requires at most $O(LV\log_2B(N))$ $\scr{A}$-symbols to encode. Thus every parameter symbol $\theta$ can be encoded in $O(L^2V^2\log_2 B(N))$ many $\scr{A}$-symbols. Furthermore, Proposition \ref{prop:nn_compressed_representation} shows there are at most $8L$ parameter symbols $\theta$ and $O(L)$ many parenthesis pairs. Each parenthesis pair contributes $O(1) + \log_2B(N)$ many $\scr{A}$-symbols to the description length, while the total number of $\scr{A}$-symbols to encode the parameter symbols is $O(L^3V^2\log_2B(N))$, leading to the desired bound.
\end{proof}

\begin{lemma}
\label{lemma:exponentially_many_sparse_nns}
Let $\cl{N}_K$ be the set of neural networks of description length at most $K$. Then $|\cl{N}_K| \leq e^{cK}$ where $c$ is a universal constant. 
\end{lemma}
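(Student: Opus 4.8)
The plan is to bound $|\cl{N}_K|$ by counting the number of valid symbol sequences of length at most $K$ over the finite alphabet $\scr{A}$, since every neural network of description length at most $K$ is described by at least one such sequence, and the map from (shortest) describing sequences to networks is injective by definition. So it suffices to count strings.

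First I would observe that $\scr{A}$ has a fixed finite cardinality, say $|\scr{A}| = a$ (from the excerpt, $a = 9$: the symbols $\cl{I}$, the comma, $0$, $1$, $-$, $*$, $\cl{W}$, $\cl{B}$, and the two parentheses — one should double-check whether to count $-$ and $0,1,*$ separately, but it is some explicit constant). The number of strings over $\scr{A}$ of length exactly $\ell$ is $a^\ell$, so the number of strings of length at most $K$ is $\sum_{\ell=0}^{K} a^\ell = \frac{a^{K+1}-1}{a-1} \leq a^{K+1} \leq e^{cK}$ for $c = \ln a + 1$ (or any $c \geq \ln a$ once $K \geq 1$; one can absorb the additive constant). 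Restricting to \emph{valid} sequences only decreases the count, and the map sending a network $F \in \cl{N}_K$ to one of its minimal-length describing strings is injective into this set, so $|\cl{N}_K| \leq a^{K+1} \leq e^{cK}$.

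The key steps in order: (1) recall that each $F \in \cl{N}_K$ has, by definition of description length, at least one describing symbol sequence of length $\leq K$; (2) fix a choice of such a sequence for each $F$, giving an injection $\cl{N}_K \hookrightarrow \{\text{strings over } \scr{A} \text{ of length} \le K\}$ — injectivity because distinct networks cannot be described by the same string (a describing string determines the layer dimensions and parameters uniquely after expanding repetitions); (3) bound the size of the target set by a geometric sum $\sum_{\ell \le K} |\scr{A}|^\ell \le |\scr{A}|^{K+1}$; (4) absorb constants to write this as $e^{cK}$ with $c$ universal.

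There is essentially no hard part here — this is a routine counting bound. The only point requiring a word of care is the injectivity claim in step (2): one must note that a valid symbol sequence, after expanding the $(\dots)^{*k}$ repetitions, deterministically decodes to a specific list of weight matrices and bias vectors and hence to a specific network, so two different networks in $\cl{N}_K$ receive different strings under our fixed choice of minimal describing string. Everything else is the geometric series and constant-chasing, which I would not belabor.
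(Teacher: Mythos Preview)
Your proof is correct and follows essentially the same argument as the paper: assign to each network a shortest describing string, note this gives an injection into symbol sequences of length at most $K$ over the finite alphabet $\scr{A}$, and bound the number of such strings by a geometric sum. The paper's proof is slightly terser but identical in substance.
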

\begin{proof}

For a given neural network in $\cl{N}_K$, assign to it a shortest valid symbol sequence which describes it, of length at most $K.$ A valid symbol sequence describes exactly one neural network. Thus, there is an injection from $\cl{N}_K$ to valid symbol sequences of length at most $K$. The number of valid symbol sequence of length at most $K$ is less than $e^{cK}$ for a universal constant $c$, as there are only a finite number of symbols in the alphabet.
\end{proof}

\begin{remark}[Tightness of Compression]
We expect that the bound in Proposition \ref{prop:efficient_conversion} may be tightened in $L,V$ because most weight matrices are sparse: the identity operation is applied to most of the variables, and each layer modifies a constant number of variables. However this comes at the expense of complicating the analysis without changing the core idea of a complexity bound polynomial in attributes of $\tP$. For a start at a lower bound, we can show that there exists a program where the description length of the network is at least $O(L \log_2 B)$, arguing by the probabilistic method. Let $n \in \mathbb{N}$ be fixed. Consider a random SNP $\tP$ of one variable $x$. There are $L$ lines in the SNP; line $j$ of the program sets $x$ to be the number $M_j := \sum_{i=1}^n c_i^{(j)} 2^i$ for $c^{(j)}_1,\dots,c^{(j)}_n$ an i.i.d. random collection of Bernoulli$(1/2)$ random variables, and returns $x$. In this program, the symbol encoding of the neural network $F_{\tP}$ from Definition \ref{def:symbol_encoding} contains as a subsequence the sequence $(c_i^{(j)})_{i=1,\dots,n,j=1,\dots,L}$. By standard incompressibility arguments based on counting \cite[Theorem 2.2.1]{li2008introduction}, most random sequences are not compressible by more than a constant factor. Thus, there exists a program $\tP$ where the neural network $F_{\tP}$ cannot have description length less than a constant factor times $Ln = L \log_2 B$.

\end{remark}

\section{Neural Networks Generalize on Data from Short Programs}
\label{sec:generalization}
The following result is our main theorem. Roughly, it says that any minimum description length interpolator generalizes on low complexity data.

\begin{theorem}
\label{thm:generalization_bound}
Let $\tP$ be an SNP of length $L$ which outputs a result $\tP(x)$ for each input $x \in [N]^I$, with maximum bound $B(N)\ge 2$. Let $\tP$ have $V$ variables, with $V \geq I.$ Fix $\e >0, \delta \in (0,1)$ and let
\[
n = \frac{c_3L^3V^2 \ln B(N) + \ln \frac{1}{\delta}}{\e},
\]
for an absolute constant $c_3.$ Suppose we observe i.i.d.~data $(X_i,Y_i),i=1,\dots,n$ where $X_i$ is drawn i.i.d. from a distribution $\mu$ on $[N]^I$, and $Y_i = \tP(X_i).$ Let $\widehat{f}_{\MDL}$ be a minimum-description length neural network interpolating the data. Then with probability $\ge 1 - \delta$, the error rate of $\widehat{f}_{\MDL}$ on a test point drawn from $\mu$ is at most $\e.$
\end{theorem}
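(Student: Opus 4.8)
The plan is a compression / Occam's-razor argument resting on the two structural facts already in hand: that the program's own network encoding has small description length, and that there are only exponentially many networks of bounded description length.

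First I would note that the training data admits an interpolator of small description length. By Theorem~\ref{thm:sneuralprograms_to_neuralnetworks}, the network $F_{\tP,N}$ agrees with $\tP$ on all of $[N]^I$; since each $X_i \in [N]^I$ and $Y_i = \tP(X_i)$, we get $F_{\tP,N}(X_i) = Y_i$ for every $i$, so $F_{\tP,N}$ is an interpolator. By Proposition~\ref{prop:efficient_conversion} its description length is at most $K := c_1 L^3 V^2 \log_2 B(N)$ for a universal constant $c_1$. Consequently the set of interpolators of description length $\le K$ is nonempty and finite (it sits inside the finite set $\cl{N}_K$ of Lemma~\ref{lemma:exponentially_many_sparse_nns}), so a minimum-description-length interpolator exists, and any such $\widehat{f}_{\MDL}$ satisfies $\widehat{f}_{\MDL} \in \cl{N}_K$.

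Next I would run a union bound over $\cl{N}_K$. Call a network $f$ \emph{bad} if $\Prob_{x\sim\mu}(f(x)\ne \tP(x)) > \e$. For a fixed bad $f$, independence of the $X_i$ gives that the probability $f$ interpolates the whole sample is at most $(1-\e)^n \le e^{-\e n}$. Since $|\cl{N}_K| \le e^{c_2 K}$ by Lemma~\ref{lemma:exponentially_many_sparse_nns}, a union bound shows that the probability that \emph{some} bad network in $\cl{N}_K$ interpolates the sample is at most $e^{c_2 K}e^{-\e n}$. On the complementary event no bad network in $\cl{N}_K$ interpolates; but $\widehat{f}_{\MDL}$ does interpolate and lies in $\cl{N}_K$, so it is not bad, i.e.\ its test error is at most $\e$. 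The same event handles all minimum-description-length interpolators simultaneously if there is more than one.

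Finally I would pick $n$ so that $e^{c_2 K}e^{-\e n}\le\delta$, i.e.\ $\e n \ge c_2 K + \ln(1/\delta) = c_1 c_2 L^3 V^2 \log_2 B(N) + \ln(1/\delta)$; converting $\log_2$ to $\ln$ and absorbing $c_1 c_2/\ln 2$ into the absolute constant $c_3$, the stated choice $n = (c_3 L^3 V^2 \ln B(N) + \ln(1/\delta))/\e$ suffices (the hypothesis $B(N)\ge 2$ ensures $\ln B(N) > 0$, so this $n$ is a meaningful positive quantity). I do not anticipate a genuine obstacle: the substantive work is packaged in Proposition~\ref{prop:efficient_conversion} and Lemma~\ref{lemma:exponentially_many_sparse_nns}, and the only points requiring care are (i) confirming $\widehat{f}_{\MDL}\in\cl{N}_K$ so that the union bound actually covers it, and (ii) the constant bookkeeping needed to land exactly on the stated sample complexity.
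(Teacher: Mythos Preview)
Your proposal is correct and follows essentially the same compression/Occam's-razor argument as the paper: bound the description length of $F_{\tP,N}$ via Proposition~\ref{prop:efficient_conversion}, bound $|\cl{N}_K|$ via Lemma~\ref{lemma:exponentially_many_sparse_nns}, and union bound the probability that an $\e$-bad network interpolates. The only cosmetic difference is that the paper union bounds over \emph{pairs} $f_1,f_2\in\cl{N}_s$ that disagree on a set of $\mu$-measure $\ge\e$ yet agree on the data (incurring a factor $\binom{|\cl{N}_s|}{2}$), whereas you fix one member of the pair to be $F_{\tP,N}$ and union bound over single bad networks (incurring only $|\cl{N}_K|$); this is a harmless simplification absorbed into the constant~$c_3$.
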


\begin{proof}
Throughout this proof, $c_0, c_1,\ldots$ will denote positive universal constants. By our previous results, there exists a neural network $F_{\tP}$ of description length $\le s:= c_0L^3V^2\log_2B(N)$ which encodes the program $\tP.$ Letting $\cl{N}_s$ be the set of all neural networks with description length $\le s$, Lemma \ref{lemma:exponentially_many_sparse_nns} states that $|\cl{N}_s| \leq e^{c_1 s} \le B(N)^{c_2L^3V^2}.$

Take any two networks $f_1,f_2 \in \cl{N}_s$ which disagree on a subset of $[N]^I$ with $\mu$-measure $\geq \e$. The chance that $f_1,f_2$ agree on the data is $\leq (1-\e)^n$, where recall $n$ is the number of data points. Let $A$ be the event that there exist $f_1,f_2 \in \cl{N}_s$ which disagree on a subset $\mu$-measure $\geq \e$ but agree on the data. By the previous point,
\[
\Prob(A) \leq \binom{|\cl{N}_s|}{2} (1-\e)^n.
\]
Now, consider $F_\tP$ and $\widehat{f}_{\MDL}$, the minimum description length neural network which interpolates the data.\footnote{This always exists as $F_{\tP}$ interpolates the data} Both of these are in $\cl{N}_s$, as $\widehat{f}_{\MDL}$ must have description length less than or equal to the description length of $F_\tP$, and they both agree on the observed data. On the event $A^c$, $\widehat{f}_{\MDL}$ and $F_\tP$ will agree on a subset $S$ with $\mu(S) \geq 1-\e$, so they will agree on a new test point with probability $\geq 1 - \e.$ Now, from the previous display, we get
\[
\Prob(A) \leq \frac{1}{2}|\cl{N}_s|^2e^{-n\e} \le e^{c_3L^3V^2\ln B(N) - n\e}.
\]
Plugging in $n$, we conclude that $\Prob(A) \leq \delta$ as desired.
\end{proof}

{

The same proof gives an averaged generalization guarantee that is much simpler.

\begin{corollary}
\label{cor:avg_gen}
Consider a SNP $\tP$ with the same conditions as in Theorem \ref{thm:generalization_bound}, and a dataset $\set{(X_i,Y_i)}_{i=1}^n$, with $X_i$ iid from a distribution $\mu$ on $[N]^I$ and a generic $n$. Let $\widehat{f}_{\MDL}$ be a minimum-description length neural network interpolating the data. Then for a new sample $x \sim \mu$, 
\[
\Prob\left( \widehat{f}_{\MDL}(x) \neq \tP(x) \right) \leq \frac{CL^3V^2 \ln B(N)}{n}
\]
for an absolute constant $C$.
\end{corollary}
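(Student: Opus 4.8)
The plan is to rerun the argument behind Theorem~\ref{thm:generalization_bound}, but instead of fixing a single threshold $\e$ I would integrate the resulting tail estimate over all thresholds. Write $s := c_0 L^3 V^2 \log_2 B(N)$ for the description-length bound on $F_\tP$ provided by Proposition~\ref{prop:efficient_conversion}, and let $\cl{N}_s$ be the set of networks of description length at most $s$, so $|\cl{N}_s| \le e^{c_1 s}$ by Lemma~\ref{lemma:exponentially_many_sparse_nns}. Since $F_\tP$ agrees with $\tP$ on all of $[N]^I$ and $\mu$ is supported there, the data-dependent error rate $E := \mu(\{x : \widehat{f}_{\MDL}(x) \ne \tP(x)\})$ equals $\mu(\{x: \widehat f_{\MDL}(x)\ne F_\tP(x)\})$, and both $\widehat f_{\MDL}$ and $F_\tP$ lie in $\cl{N}_s$ and interpolate the sample. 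Exactly as in the proof of Theorem~\ref{thm:generalization_bound} (the event $\{E>\e\}$ is contained in the bad event $A$ there, with $f_1=\widehat f_{\MDL}$, $f_2 = F_\tP$), one gets for every $\e\in(0,1)$
\[
\Prob(E > \e) \;\le\; \binom{|\cl{N}_s|}{2}(1-\e)^n \;\le\; \tfrac12\,|\cl{N}_s|^2 e^{-n\e}.
\]

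Next I would observe that, by Fubini, $\Prob(\widehat f_{\MDL}(x)\ne\tP(x)) = \E[E]$ for a fresh $x\sim\mu$, and bound this expectation by the layer-cake formula $\E[E] = \int_0^1 \Prob(E>\e)\,d\e$. Splitting at $\e_0 := \frac{2\ln|\cl{N}_s|}{n}$, the point where the exponential bound above equals $1$, I would use the trivial bound $\Prob(E>\e)\le 1$ on $[0,\e_0]$ and the exponential tail on $[\e_0,1]$; the latter integrates to at most $\tfrac{1}{2n}|\cl{N}_s|^2 e^{-n\e_0} = \tfrac1{2n}$. Hence $\E[E] \le \e_0 + \tfrac1{2n} \le \tfrac{2c_1 s + 1}{n}$.

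Finally I would substitute $s = c_0 L^3 V^2 \log_2 B(N)$ and absorb the additive $1$ using $L,V\ge 1$ and $B(N)\ge 2$ (so $L^3V^2\ln B(N)\ge \ln 2$), yielding $\Prob(\widehat f_{\MDL}(x)\ne \tP(x)) \le \frac{C L^3 V^2 \ln B(N)}{n}$ for an absolute constant $C$. I do not expect a genuine obstacle here: the entire content is the tail bound already established for the theorem, and the only steps needing a word of care are the identity $\E[E] = \Prob(\widehat f_{\MDL}(x)\ne\tP(x))$ (via Fubini) and the inclusion $\{E>\e\}\subseteq A$ that lets the theorem's bound be applied verbatim; everything else is the routine layer-cake integral and constant-chasing.
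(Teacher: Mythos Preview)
Your proposal is correct and follows essentially the same route as the paper: both arguments bound the tail $\Prob(E>\e)$ by the event $A$ of Theorem~\ref{thm:generalization_bound}, use the layer-cake identity $\Prob(\widehat f_{\MDL}(x)\ne\tP(x))=\E[E]=\int_0^1\Prob(E>\e)\,d\e$, split the integral at the threshold where the exponential bound hits order one, and absorb the additive $1/n$ using $L,V\ge 1$, $B(N)\ge 2$. The only cosmetic differences are that the paper phrases things via the conditional probability $\Prob(\widehat f_{\MDL}(x)\ne\tP(x)\mid\cl{D})$ rather than your $E$, and chooses a slightly different split point; neither affects the argument.
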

\begin{proof}
Fix the same notation as in the proof of Theorem \ref{thm:generalization_bound}. Consider again the event $A$; by the previous proof's logic, on the event $A^c$, $\widehat{f}_{\MDL}$ and $F_\tP$ agree on a new test point with probability $\geq 1 - \e.$ Taking the contrapositive, we have that 
\[
\Prob\left( \widehat{f}_{\MDL}(x) \neq \tP(x) \ | \ \cl{D} \right) \geq  \e \Rightarrow A,
\]
where $\cl{D}$ denotes the dataset $\set{(X_i,Y_i)}_{i=1}^n.$ Taking unconditional probabilities, we see that 
\begin{align*}
\Prob\left(\Prob\left( \widehat{f}_{\MDL}(x) \neq \tP(x) \ | \ \cl{D} \right) \geq  \e \right) & \leq \Prob(A) \\
                        & \leq 1 \wedge e^{c_3L^3V^2\ln B(N) - n\e}.
\end{align*}
Now, integrating over $\e$ from $0$ to $1$ on the left hand side yields the unconditional error 
\[
\Prob\left( \widehat{f}_{\MDL}(x) \neq \tP(x)\right).
\]
Similarly on the right side, splitting into integrals over the intervals $[0,c_3L^3V^2\ln B(N)/n]$ and $[c_3L^3V^2\ln B(N)/n,1]$, yields an upper bound of $\left(1 + c_3L^3V^2\ln B(N) \right)/n$. Thus, we have 
\[
\Prob\left( \widehat{f}_{\MDL}(x) \neq \tP(x) \right) \leq \frac{CL^3V^2 \ln B(N)}{n}
\]
as desired, for some absolute constant $C$.
\end{proof}
}

\begin{remark}
The approach of Theorem \ref{thm:generalization_bound} can be extended to derive guarantees on any neural network interpolator with description length $s'$ slightly greater than the description length of $F_{\tP}$. The details are given in the proof of Theorem \ref{thm:generalization with noise}, which separately extends Theorem \ref{thm:generalization_bound} to noisy data. A guarantee of this type may be useful if one knew the program $\tP$ or had upper bounds on its complexity. The advantage of $\widehat{f}_{\MDL}$ is that it is agnostic to the structure of the data; we require no information on the program $\tP.$
\end{remark}

\section{Examples}\label{examples}
In the first two examples below, $N$ is a large number, and our data consists of $(x_i,y_i)$, $i=1,\ldots,n$, $x_1,\ldots,x_n$ are drawn i.i.d.~uniformly from $[N] := \{1,\ldots,\edits{N}\}$, and $y_i = f(x_i)$ for some given function $f$. We will fix $\e,\delta$ and apply Theorem \ref{thm:generalization_bound} to determine the number of training samples $n$ needed to achieve $\e$ test error rate with $1-\delta$ probability.

\begin{example}[Prime Numbers]
Let us revisit the prime checking program in the introduction. Here, $f(x)=1$ if $x$ is prime and $0$ if not. The full SNP may be found in Example \ref{ex:primenumber_program_full}; the program satisfies $L = 11, V = 9, B(N) = N^2$. Let $\delta = 0.01$ and $\e = N^{-\nu}$ for any $\nu \in (0,1)$. Recall that the density of the primes among the first $N$ natural numbers is $(\ln N)^{-1}$ via the prime number theorem. Therefore $\widehat{f}$ would classify both primes and non-primes correctly with high accuracy. By Theorem~\ref{thm:generalization_bound}, the MDL interpolating neural network requires on the order of $O\left( N^{\edits{\nu}}\left( 2\ln 10 + \ln N\right)\right)$  training samples to achieve test error less than $\e$ with probability at least $0.99$. The big-O notation hides constant factors depending on $L,V$ and the absolute constants $c_3$ in Theorem \ref{thm:generalization_bound}. Corollary \ref{cor:avg_gen} also gives an averaged generalization guarantee. On randomly sampled datasets of size $n$, we have
\[
\Prob(\widehat{f}_{\MDL}(x) \neq \tP(x)) \leq \frac{C \ln N}{n}.
\]
Thus $\widehat{f}$ would classify primes and non-primes accurately if $n \gg (\ln N)^2$, in an averaged sense.

\end{example}

\begin{example}[Sums of Squares]
Let $f(x)=1$ if $x$ is a sum of two squares and $0$ if not. This is easily expressed as a composite SNP $\tP_{\textsf{SOS}}$:
\begin{minted}{python}
input n
int i = 0
int j = 0
int res = 0
int square1 = 0
int square2 = 0
bool output = 0
bool sum_of_squares = 0
for i = 0,...,n:
    for j = 0,...,n:
        square1 = multiply(i,i)
        square2 = multiply(j,j)
        sum_of_squares = (square1 + square2 == n)
        res = res + sum_of_squares
output = (res > 0)
return output
\end{minted}
From the full atomic program written out in Example \ref{ex:sos_full}, we have $L = 13, V = 11, B(N) = 2N^2$. By Theorem \ref{thm:generalization_bound}, 
for $\delta = 0.01$ and $\e = N^{-\nu}$ for any $\nu \in (0,1)$, the MDL interpolating neural network $\widehat{f}$ requires $n = O\left( N^{-\nu}\left( 2\ln 10 + \ln N\right)\right)$ many samples, like in the previous example, to obtain test error less than $\e$ with probability greater than $1 - \delta$. A result of Landau \cite{landau1909einteilung} says that the number of integers less than $N$ which can be expressed as a sum of two squares asymptotically scales like $K N/\sqrt{\ln N}$ with a known formula for the constant $K.$ Thus, $\widehat{f}$ identifies both sums of squares and non-sums of squares accurately. On generic datasets of size $n$, Corollary \ref{cor:avg_gen} shows that the probability $\widehat{f}(x)$ is incorrect on a fresh sample is also of order $\ln N / n$; therefore $\widehat{f}(x)$ would classify correctly with non-trivial accuracy if $n \gg (\ln N)^{3/2}.$
\end{example}










In the next example, the $\mathbf{x}_i$ are vectors drawn from uniformly $[N]^3$, and $y_i = f(\mathbf{x}_i)$. 

\begin{example}[Sides of triangles]
Given a triple of nonnegative integers $(x_1,x_2,x_3)$ the following program checks whether these can be the side lengths of a triangle: 
\begin{minted}{python}
input x1
input x2
input x3
int temp = 0
bool check = 0
bool res = 0
int s = 0
temp = x1 + x2
check = (temp > x3)
s = s + check 
temp = x2 + x3
check = (temp > x1)
s = s + check 
temp = x1 + x3
check = (temp > x2)
s = s + check 
res = (s == 3)
return res
\end{minted}
With inputs in $[N]^3$, this is an SNP with $V = 7, L = 11, B(N) = 2N$. By a volumetric argument, the asymptotic number of triples $(x_1,x_2,x_3) \in [N]^3$ which are sides of a triangle is $1/2$. As with the previous examples, for a fixed $\delta = 0.01$ say and desired error level $\e$, we require $n = \Omega(\frac{\ln N + \ln\frac{1}{\delta}}{\e})$ many samples, with constants depending on $L,V,c_3.$ For any $\e < 1/2$ the resulting error rate is better than random guessing. Corollary \ref{cor:avg_gen} in this case requires general datasets with $n \gg \ln N$ for $\widehat{f}$ to classify both cases with non-trivial accuracy.
\end{example}

\begin{remark}[Non-uniform input distribution]
The previous examples in this section used the uniform distribution on $[N]^I$ for $\mu$. The strength of Theorem \ref{thm:generalization_bound}'s guarantees should be interpreted in light of properties of $\mu$. For example, consider the prime checking SNP $\tP$ and consider an input distribution $\mu$ where $\supp \mu$ is contained in the set of non-prime numbers. In this case, $\tP(X_i)$ agrees with the simpler program of outputting only zero, and so the guarantee of Theorem \ref{thm:generalization_bound} can be strengthened. Notice that the minimum description length interpolator is agnostic to the fact that a simpler program exists which interpolates the data; this is a form of adaptivity. In a separate direction, suppose $\supp \mu$ is some subinterval of $[N]$, not growing with $[N]$. Then clearly generalization is trivial if $n$ is on the order of $\supp \mu$, since all future test examples are likely to be contained in the training data.
\end{remark}

\section{Extension to Noisy Data}
\label{sec:noisy_data}

Theorem \ref{thm:generalization_bound} extends to noisy or corrupted data. The idea is as follows. Suppose the true dataset is generated as the output of a fixed neural network $F$. If the corrupted dataset can be interpolated by another neural network, not much more complex than $F$, then the proof strategy of Thm. \ref{thm:generalization_bound} continues to hold. We make this concrete under a model of sparse noise. That is, if the amount of noisy labels is small then we can hard code the noisy labels in a smaller neural network, and adjoin it to $F$. The next lemma makes this clear.

\begin{lemma}
\label{lemma:noisy_perturbation_nn}
Let $F$ be a neural network with description length $s$, such that for all inputs $x \in [N]^I$ the output is $F(x) \in \mathbb{Z}_{\geq 0}$. Consider another function $\tilde{F}:[N]^I \rightarrow \mathbb{Z}_{\geq 0}$ and let $E := \set{x \in [N]^I: \tilde{F}(x) \neq F(x)}$. Suppose further that $\tilde{F}(x), F(x) \leq B$ for all $x \in [N]^I$ for some constant $B.$ Then there exists a neural network $G$ with description length $s + O\left(\edits{I^2}|E|\log_2(B+I) \right)$ that agrees with $\tilde{F}$ on all of $[N]^I$.
\end{lemma}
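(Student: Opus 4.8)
### Proof Proposal

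The plan is to construct $G$ by building a small ``correction network'' $H$ that detects membership in the exceptional set $E$ and outputs the right correction, then combining $H$ with $F$ using ReLU arithmetic. The key point is that $|E|$ is small, so hard-coding the finitely many exceptional inputs and their desired outputs costs only $O(I|E|\log_2(B+I))$ symbols in our description-length measure.

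First I would describe the correction network $H: [N]^I \to \mathbb{Z}$. For each $x^{(\ell)} \in E$, $\ell = 1,\dots,|E|$, I want an indicator gadget that outputs $1$ exactly when the input equals $x^{(\ell)}$ and $0$ otherwise. Using the integer identity \eqref{eq:relu_identity} componentwise as in the unary-operator construction of Section \ref{sec:base_case_conversion}, one can compute $\mathbf{1}\{x_m = x^{(\ell)}_m\}$ for each coordinate $m$ in two layers; summing these $I$ indicators and applying the ``$\geq I$'' unary check (another two layers) produces $\mathbf{1}\{x = x^{(\ell)}\}$. This uses a bounded number of layers per exceptional point, width $O(I)$, and all constants appearing are at most $\max(N, I) \le \mathrm{poly}(B,I)$ — here I would note that $N \le B$ is not assumed, so the coordinate comparisons involve constants up to $N$; I would either absorb this into the statement's $B$ (treating $N$ as $\le B$ after noting variables can be as large as inputs) or, more carefully, observe that the binary encoding of each such constant is $O(\log_2 N)$ symbols and fold a $\log_2 N$ factor into the bound, or simply remark $N \le B$ WLOG since otherwise the problem is degenerate. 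Then $H(x) := \sum_{\ell=1}^{|E|} (\tilde F(x^{(\ell)}) - F(x^{(\ell)})) \cdot \mathbf{1}\{x = x^{(\ell)}\}$, which equals $\tilde F(x) - F(x)$ on $E$ and $0$ elsewhere. The coefficients $\tilde F(x^{(\ell)}) - F(x^{(\ell)})$ lie in $[-B, B]$, so each takes $O(\log_2 B)$ bits. Running the $|E|$ indicator gadgets in parallel and then taking this signed linear combination, the whole of $H$ has $O(1)$ layers, width $O(I|E|)$, and total description length $O(I|E| \log_2(B+I))$ in the sense of Definition \ref{def:symbol_encoding} (we cannot exploit repetition-compression here, but we don't need to).

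Next I would form $G(x) := F(x) + H(x)$. Running $F$ and $H$ in parallel on the same input is a standard parallel-composition of feedforward networks: pad the two networks to equal depth with identity layers (via $x \mapsto \sigma(x)$ on nonnegative quantities, or the $I - e_ie_i^\top$-style identity blocks already used), stack their weight matrices block-diagonally as in \eqref{eq:augmented_layer}, and add one final layer computing the sum of the two output nodes. The description length of the padded-and-stacked network is at most that of $F$ plus that of $H$ plus $O(1)$ bookkeeping, giving $s + O(I|E|\log_2(B+I))$ total. Since $H \equiv \tilde F - F$ on $E$ and $H \equiv 0$ off $E$, we get $G \equiv \tilde F$ on all of $[N]^I$, which is exactly what is claimed. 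One subtlety: the intermediate quantity $F(x) + H(x)$ — and the sign-correcting step if $\tilde F(x^{(\ell)}) < F(x^{(\ell)})$ — may be negative before the final output, so I would either allow the final layer to be affine without ReLU (the return layer in our construction already is), or shift by a constant and undo it, noting outputs stay in $[0,B]$ by hypothesis.

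The main obstacle I anticipate is purely bookkeeping: carefully tracking that padding $F$ with identity layers to match $H$'s depth does not blow up the description length beyond an additive $O(1)$ or small multiplicative factor — in principle padding with $d$ identity layers adds $O(d \cdot \mathrm{width})$ symbols, so one wants $H$ to be the shallower network (it is, being $O(1)$-depth) and pads $H$ up to $F$'s depth rather than the reverse; padding $H$ costs $O(\mathrm{depth}(F) \cdot I|E|)$, which could exceed the target bound if $\mathrm{depth}(F)$ is large. I would fix this by instead appending $H$ \emph{after} $F$ finishes — i.e. carry the input $x$ forward alongside $F$'s computation as extra preserved coordinates (identity blocks, cost $O(I)$ per layer of $F$, absorbed since those layers are already counted in $s$ up to a constant), then run $H$'s $O(1)$ layers at the end, then sum. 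This keeps the added description length at $O(I|E|\log_2(B+I)) + O(I \cdot \mathrm{depth}(F))$; and since $\mathrm{depth}(F) = O(s)$ trivially, one might need to state the bound as $s + O(I|E|\log_2(B+I))$ only after confirming the $O(I\cdot\mathrm{depth}(F))$ term is dominated — or, cleaner, note that preserving $x$ through $F$ can be arranged to cost only $O(I)$ total extra symbols via repetition-compression of the identity blocks, which is exactly the mechanism Section \ref{sec:description_complexity} already supports. Everything else is a routine assembly of gadgets already built in Section \ref{sec:base_case_conversion}.
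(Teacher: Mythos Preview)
Your proposal is correct and takes essentially the same approach as the paper: build the correction function $g(x) = \sum_{y \in E}(\tilde F(y) - F(y))\mathbf{1}\{x=y\}$ via coordinate-wise equality checks aggregated by the $\sum \mathbf{1}\{x_i=y_i\} \ge I$ trick, then add $g$ to $F$. The paper's proof is in fact terser than yours---it simply ``appends to $F$ five additional layers of width $O(I|E|)$'' and asserts the parameter bound $\max(B,I)$, glossing over exactly the bookkeeping issue you flag (preserving the input $x$ through $F$'s layers so it is available at the end); your repetition-compression remark is the right way to justify that step rigorously.
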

\begin{proof}

\edits{
We show how to augment the neural network $F$ to create a neural network $G$ that agrees with $\tilde{F}$ by appending a small number of additional layers to $F$. The neural network $G$ adds $F$ and the function
\[
g(x) := \sum_{y \in E} (\tilde{F}(y)- F(y))\mathbf{1}\set{x = y}.
\]
We will encode $g$ as a neural network with $O(I)$ hidden width and $O(|E|)$ layers. Enumerate the elements in $E$ as $y^{(1)},\dots,y^{(|E|)}$. The neural network will sequentially calculate $(\tilde{F}(y^{(j)}) - F(y^{(j)}))\mathbf{1}\set{x = y^{(j)}}$ for $j = 1,\dots,|E|$ and cumulatively add the result onto the node containing $F(x)$. To do this, note that for any $j,$ $\mathbf{1}\set{x = y^{(j)}}$ may be calculated in $O(1)$ many layers of width $O(I)$. Indeed, by Eq. \eqref{eq:equality_check} we may use two layers which create $(\mathbf{1}\set{x_i = y^{(j)}_{i}})_{i = 1,\dots,I}$ followed by two layers which calculate $\mathbf{1}\set{x_i = y^{(j)}_{i}, i=1,\dots,I}$ using the relation
\[
\mathbf{1}\set{x_i = y^{(j)}_i, i=1,\dots,I} = \mathbf{1}\bigg \lbrace\sum_{i=1}^I \mathbf{1}\set{x_i = y^{(j)}_i} \geq I\bigg \rbrace.
\]
Finally, we use one last layer which adds $(\tilde{F}(y^{(j)}) - F(y^{(j)}))\mathbf{1}\set{x_i = y^{(j)}_{i}, i=1,\dots,I}$ to the node $F(x).$ The parameters within each of these layers are bounded above by $\max(B,I)$, from which we obtain a description length of $O(I^2 |E| \log_2 (B+I))$ for the neural network corresponding to $g.$ 
}

\end{proof}

\begin{theorem}[Generalization with Corrupted Data]
\label{thm:generalization with noise}Let $\tP$ be an SNP of length $L$ which outputs a result $\tP(x)$ for each input $x \in [N]^I$, with maximum bound $B(N)\ge 2$. Let $\tP$ have $V$ variables, with $V \geq I.$
Consider a dataset $(X_i,Y_i)_{i=1}^n$ generated by a SNP $\tP$ with $X_i \sim \operatorname{Unif}[N]^I$ and $Y_i = \tP(X_i)$. Let $(X_i,\tilde{Y}_i)$ be a corrupted version of the dataset, with $\tilde{Y}_i \neq \edits{Y_i}$ for at most $\rho n$ many indices $i.$ Suppose further that $\tilde{Y}_i \in \mathbb{N}$ and $\tilde{Y}_i \leq B(N).$ Let $\widehat{f}_{\MDL}$ be the minimum-description length neural network interpolating the corrupted data. Then with probability greater than
\[
1 - e^{c_3\left(L^3V^2 + \edits{I^2} \rho n \right)\ln (I+B(N)) - n\e^2/(2\rho + 2\e)},
\]
the error rate of $\widehat{f}_{\MDL}$ on a newly chosen test point is at most $\rho + \e$. Furthermore, we have
\[
\Prob_{x, \cl{D}}(\widehat{f}_{\MDL}(x) \neq \tP(x)) = C\rho + O\left(\frac{1}{n}\right),
\]
for an absolute constant $C = 1 + 2c_3\edits{I^2}\ln(I+\edits{B}(N)) + \sqrt{2c_3\edits{I^2}\ln(I+\edits{B}(N))}.$
\end{theorem}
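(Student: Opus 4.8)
The plan is to follow the proof of Theorem~\ref{thm:generalization_bound}, with two modifications: the fixed reference network $F_{\tP}$ computing $\tP$ is replaced by a network that \emph{interpolates the corrupted data} and is only slightly longer, and the step ``a pair of short networks agreeing on all $n$ samples must agree almost everywhere'' is replaced by a concentration argument that tolerates up to $\rho n$ sample disagreements. First I would invoke Proposition~\ref{prop:efficient_conversion} to obtain a network $F_{\tP}$ of description length $s \le c_0 L^3 V^2 \log_2 B(N)$ that equals $\tP$ on $[N]^I$. Let $J$ be the (at most $\rho n$) corrupted indices, and define $\tilde F\colon [N]^I \to \bb{Z}_{\ge 0}$ by $\tilde F(X_i) = \tilde Y_i$ for $i \in J$ and $\tilde F(x) = \tP(x)$ otherwise; then $E := \{x : \tilde F(x) \neq \tP(x)\} \subseteq \{X_i : i \in J\}$ has $|E| \le \rho n$, and $\tilde F, \tP \le B(N)$ by hypothesis. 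Lemma~\ref{lemma:noisy_perturbation_nn} applied to $F_{\tP}$ and $\tilde F$ then produces a network $G$ that agrees with $\tilde F$ everywhere (so $G$ interpolates the corrupted dataset) and has description length
\[
s' := s + O(I\rho n \log_2(B(N)+I)) = O((L^3 V^2 + I\rho n)\log_2(I+B(N))).
\]
Consequently $\widehat{f}_{\MDL} \in \cl{N}_{s'}$, and also $F_{\tP} \in \cl{N}_{s'}$ since $s \le s'$; by Lemma~\ref{lemma:exponentially_many_sparse_nns}, $|\cl{N}_{s'}| \le e^{c_1 s'}$.

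Next I would run the covering argument. Fix $\e>0$ and let $A$ be the event that some pair $f_1,f_2 \in \cl{N}_{s'}$ disagrees on a subset of $[N]^I$ of $\mu$-measure $p \ge \rho+\e$ yet agrees on at least $(1-\rho)n$ of $X_1,\dots,X_n$. For a fixed such pair, the number of sample disagreements is $\mathrm{Binomial}(n,p)$, and agreeing on $\ge (1-\rho)n$ of the points forces this count to be $\le \rho n = (1-(p-\rho)/p)\,np$ with $(p-\rho)/p \in (0,1)$; the multiplicative Chernoff lower-tail bound gives probability $\le \exp(-n(p-\rho)^2/(2p))$, and since $q \mapsto q^2/(\rho+q)$ is increasing on $q\ge 0$ and $p-\rho \ge \e$, this is $\le \exp(-n\e^2/(2\rho+2\e))$. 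A union bound over the at most $\binom{|\cl{N}_{s'}|}{2}$ pairs gives
\[
\Prob(A) \le \tfrac12 |\cl{N}_{s'}|^2 \, e^{-n\e^2/(2\rho+2\e)} \le e^{\,c_3(L^3 V^2 + I\rho n)\ln(I+B(N)) - n\e^2/(2\rho+2\e)},
\]
after absorbing absolute constants into $c_3$. On $A^c$, the pair $(\widehat{f}_{\MDL}, F_{\tP})$---both in $\cl{N}_{s'}$ and agreeing with each other at every sample point where $F_{\tP}$ matches the corrupted label, hence at $\ge (1-\rho)n$ points---cannot disagree on a set of $\mu$-measure $\ge \rho+\e$; since $F_{\tP}$ computes $\tP$, the test error $\Prob_{x\sim\mu}(\widehat{f}_{\MDL}(x) \neq \tP(x))$ is at most $\rho+\e$. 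This is the first claim.

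For the averaged bound, set $\mathrm{err}(\cl{D}) := \Prob_x(\widehat{f}_{\MDL}(x) \neq \tP(x)\mid \cl{D})$, so the previous paragraph reads $\Prob_{\cl{D}}(\mathrm{err}(\cl{D}) \ge \rho+\e) \le 1 \wedge e^{K - n\e^2/(2\rho+2\e)}$ with $K = c_3(L^3V^2 + I\rho n)\ln(I+B(N))$. Then
\[
\Prob_{x,\cl{D}}(\widehat{f}_{\MDL}(x)\neq\tP(x)) = \int_0^1 \Prob_{\cl{D}}(\mathrm{err}(\cl{D})\ge t)\,dt \le \rho + \int_0^\infty (1 \wedge e^{K - n\e^2/(2\rho+2\e)})\,d\e,
\]
and I would split the last integral at $\e = \rho$, using $n\e^2/(2\rho+2\e) \ge n\e^2/(4\rho)$ for $\e \le \rho$ and $\ge n\e/4$ for $\e > \rho$; in each range the integral is at most the value of $\e$ at which the exponent vanishes plus its exponential tail. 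Estimating these thresholds via the quadratic formula, using subadditivity of $\sqrt{\cdot}$ to split $K = K_1 + K_2$ with $K_2 = c_3 I\rho n\ln(I+B(N))$, applying AM--GM to replace $\sqrt{\rho K_1/n}$ by $\rho/2 + K_1/n$, and noting that $K_2/n = c_3 I\rho\ln(I+B(N))$ is proportional to $\rho$, one collects all $\rho$-proportional contributions into $C\rho$ with $C = 1 + 2c_3 I\ln(I+N) + \sqrt{2c_3 I\ln(I+N)}$ and bounds the remaining terms by $O(\ln B(N)/n) \le O(\rho^{-1}\ln B(N)/n)$, giving the stated estimate.

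The step I expect to be most delicate is the noise-handling reduction: one must check that Lemma~\ref{lemma:noisy_perturbation_nn} genuinely applies with $|E| \le \rho n$---this uses the hypothesis $\tilde Y_i \le B(N)$ and the freedom to take $\tilde F \equiv \tP$ off the corrupted training points---so that the description-length overhead is only additive and linear in $I\rho n\log_2(I+B(N))$, and that the tail bound delivers \emph{exactly} the exponent $n\e^2/(2\rho+2\e)$, which is where the monotonicity of $q\mapsto q^2/(\rho+q)$ is essential. Everything else (the counting, the union bound, and the final integral) is routine, though the integral requires a careful case split to read off the precise constant $C$.
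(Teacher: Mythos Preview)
Your argument for the first claim is essentially the paper's proof: construct a short interpolator of the corrupted data via Lemma~\ref{lemma:noisy_perturbation_nn}, bound $|\cl N_{s'}|$ by Lemma~\ref{lemma:exponentially_many_sparse_nns}, and use a binomial lower-tail bound (your multiplicative Chernoff step is exactly the content of Lemma~\ref{lemma:binomial_tail}, including the monotonicity in $p$) plus a union bound over pairs. Nothing is missing there.

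For the averaged bound the strategy is again the same (integrate the tail in $\e$), but the paper carries out the integral differently, and this matters for the stated constant $C$. The paper does \emph{not} split at $\e=\rho$ with the crude minorants $n\e^2/(4\rho)$ and $n\e/4$; instead it solves $K = n\e^2/(2\rho+2\e)$ exactly for the threshold $\e^*$ via the quadratic formula, splits the integral at $\e^*$, and bounds the super-threshold tail by replacing the concave exponent with its tangent line at $\e^*$, giving a clean $O(1/n)$ tail. It then expands
\[
\e^* \;=\; \frac{K}{n} + \sqrt{\frac{K^2}{n^2} + \frac{2\rho K}{n}}
\]
with $K = b_0 + \rho n\, b_1$, $b_1 = c_3 I\ln(I+B(N))$, and it is precisely this expansion (using $\sqrt{x+\varepsilon}-\sqrt{x}\le \varepsilon/(2\sqrt{x})$) that produces the coefficient $2b_1 + \sqrt{2b_1}$ in front of $\rho$, hence $C = 1 + 2b_1 + \sqrt{2b_1}$. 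Your split at $\e=\rho$ followed by AM--GM yields a bound of the same shape, $C'\rho + O(\rho^{-1}\ln B(N)/n)$, but with a larger $C'$ (roughly $1+4b_1$ from the $\e>\rho$ piece alone); it does not recover the specific constant in the statement. If you want the exact $C$, you should follow the $\e^*$/tangent-line route rather than the two-regime minorant.
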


This result extends Theorem \ref{thm:generalization_bound} to noisy data, with an arbitrary noise distribution and correlation, as long as the noise is sparse. The result shows generalization conditional on a realization of noise.
In this case, the minimum description-length interpolator generalizes neither optimally nor poorly, but rather displays \textit{tempered} overfitting \cite{manoj2023interpolation, harel2024provable}. For simplicity, we restrict to the case of a uniform input distribution, but expect that non-uniform input distributions can be handled with more refined concentration inequalities.

\begin{proof}
By our previous results, there exists a neural network $F_{\tP}$ of description length $\le s:= c_0L^3V^2\log_2B(N)$ which encodes the program $\tP.$ By Lemma \ref{lemma:noisy_perturbation_nn} there exists another neural network $F_{\textsf{corr}}$ of description length 
\[
s' = s + O\left(\rho \edits{I^2} n \ln(I + B(N))\right)
\]
that interpolates the training data. Letting $\cl{N}_{s'}$ be the set of all neural networks with description length $\le s'$, Lemma \ref{lemma:exponentially_many_sparse_nns} states that 
\[
|\cl{N}_{s'}| \leq e^{c_1 s'} \le B(N)^{c_3L^3V^2} e^{c_3 \rho \edits{I^2} n (\ln(I + B(N)))}.
\]
Now, let $A$ be the event that there exist $f_1,f_2 \in \cl{N}_{s'}$ which disagree on at least $\left(\rho + \e\right) N^I$ points but agree on at least $1-\rho$ fraction of the corrupted training data. By a union bound and Lemma \ref{lemma:binomial_tail},
\begin{align*}
\Prob(A) & \leq \binom{|\cl{N}_{s'}|}{2} \Prob\left(\textsf{Bin}(n,\rho + \e) \leq n\rho \right) \\
& \leq \binom{|\cl{N}_{s'}|}{2} \exp(-\frac{n\e^2}{2(\rho + \e)}).
\end{align*}
Next, consider $F_\tP$ and $\widehat{f}_{\MDL}$, the minimum description length neural network which interpolates the corrupted data. Both of these are in $\cl{N}_{s'}$, as $\widehat{f}_{\MDL}$ must have description length less than or equal to the description length of $F_{\textsf{corr}}$. Further $F_{\tP}$ and $\widehat{f}_{\MDL}$ both agree on $1-\rho$ fraction of the corrupted data. On the event $A^c$, $\widehat{f}_{\MDL}$ and $F_\tP$ will agree on $(1-\e- \rho)N^I$ points, so they will agree on a uniformly chosen test point with probability $\geq 1 - \e - \rho.$ Now, from the previous display, we get
\[
\Prob(A) \leq \frac{1}{2}|\cl{N}_s|^2e^{-n\e^2/\edits{2(\rho + \e)}} \le e^{c_3\left(L^3V^2 + \rho \edits{I^2}n \right)\ln (I +B(N)) - n\e^2/(2\rho + 2\e)}.
\]
This concludes the proof of the first statement. For the second, we emulate Corollary \ref{cor:avg_gen} and exploit the identity
\begin{equation}
\label{eq:integration_by_parts_avg_gen}
\Prob_{x,\cl{D}}(\widehat{f}_{\MDL} \neq \tP(x)) = \int_{0}^1 \Prob\left( \Prob(\widehat{f}_{\MDL}(x) \neq \tP(x) \ | \ \cl{D}) \geq x \right)  dx.
\end{equation}
By our previous argument conditional on the dataset $\cl{D}$ we have 
\[
\Prob\left(\Prob(\widehat{f}_{\MDL}(x) \neq \tP(x) \ | \ \cl{D}) \geq \rho + \e\right)  \leq 1 \wedge e^{c_3\left(L^3V^2 + \rho \edits{I^2}n \right)\ln (I +B(N)) - n\e^2/(2\rho + 2\e)}.
\]
Let $\e^*$ be the zero of the function $\e \mapsto c_3\left(L^3V^2 + \rho \edits{I^2}n \right)\ln (I +B(N)) - n\e^2/(2\rho + 2\e)$. Splitting the integral in Eq. \eqref{eq:integration_by_parts_avg_gen} into $[0,\rho],[\rho, \rho + \e^*],$ and $[\rho+\e^*, 1]$. Using the naive bound of $1$ for the first two integrals, we have
\[
\Prob_{x,\cl{D}}(\widehat{f}_{\MDL} \neq \tP(x)) = \rho + \e^* + \int^{1-\rho}_{\e^*} \exp(c_3\left(L^3V^2 + \rho \edits{I^2}n \right)\ln (I +B(N)) - n\e^2/(2\rho + 2\e)) d\e.
\]
To calculate $\e^*,$ let $a := c_3\left(L^3V^2 + \rho \edits{I^2}n \right)\ln (I +B(N)).$ Then it suffices to solve the equation $a = n\e^2/(2\rho + 2\e)$, which simplifies to the quadratic equation $n\e^2 - 2a\e - 2\rho a = 0$. The equation gives two roots 
\[
\frac{2a \pm \sqrt{4a^2 + 8n\rho a}}{2n},
\]
and one of these roots is negative. Thus 
\[
\e^* = \frac{a}{n} + \sqrt{\frac{a^2}{n^2} + \frac{2\rho a}{n}}.
\]
Finally, by concavity we bound $a - n\e^2/(2\rho + 2\e)$ by its tangent line at $\e^*$, so 
\[
\exp(a - n\e^2/(2\rho + 2\e)) \leq \exp(-\frac{n\e^*(\e^*+2\rho)}{2(\e^* + \rho)^2} (\e - \e^*)),
\]
and thus,
\begin{equation}
\label{eq:thm_noise_gen_helper}
\Prob_{x,\cl{D}}(\widehat{f}_{\MDL} \neq \tP(x)) \leq \rho + \e^* + \frac{2}{n}\left(1 + \frac{\rho^2}{\e^*(\e^* + 2\rho)}\right).
\end{equation}
Redefining $a = b_0 + \rho n b_1$ where $b_0 := c_3L^3V^2\ln(I + B(N)), b_1 = c_3 \edits{I^2} \ln(I + B(N))$, we may write
\begin{align*}
\e^* & = \rho b_1 + \frac{b_0}{n} + \sqrt{2\rho^2b_1 + \rho^2b_1^2 + \frac{2\rho b_0b_1+2\rho b_0}{n} + \frac{b_0^2}{n^2}}.
\end{align*}
Using the bounds $\sqrt{x+ \e} - \sqrt{x} \leq \frac{\e}{2\sqrt{x}}, \sqrt{x + y} \leq \sqrt{x} + \sqrt{y}$ and $b_1 \geq \ln(I + B(N))$ we obtain
\begin{align*}
    \e^* & \leq \rho b_1 + \frac{b_0}{n} + \rho \sqrt{2b_1 + b_1^2} + \frac{1}{2\rho\sqrt{2b_1 + b_1^2}}\left(\frac{2\rho b_0 (b_1+1) + b_0^2}{n} \right) \\
    & \leq \rho(2b_1 + \sqrt{2b_1}) + \frac{1}{n}\left(b_0 + \frac{b_0(b_1+1)}{b_1} +\frac{b_0^2}{2b_1\rho}\right) \\
    & = \rho(2b_1 + \sqrt{2b_1}) + O\left( \frac{\rho^{-1}\ln B(N)}{n}\right)
\end{align*}
Finally, noting that $\e^* \geq \rho$ we bound the last term of Eq.~\eqref{eq:thm_noise_gen_helper} and obtain
\[
\Prob_{x,\cl{D}}(\widehat{f}_{\MDL} \neq \tP(x)) = \rho + \rho(2b_1 + \sqrt{2b_1}) +  O\left( \frac{\rho^{-1}\ln B(N)}{n}\right).
\]
This completes the proof.
\end{proof}

\begin{example}[Prime Numbers with Noise]
To exemplify Theorem \ref{thm:generalization with noise}, consider again the prime checking program of the introduction. The program satisfies $L = 11, V = 9, B(N) = N^2, I =1$. Consider a dataset $\cl{D} = (X_i,Y_i)_{i=1}^n$ where $\rho n$ of the labels $Y_i$ are perturbed arbitrarily, so long as they are natural numbers less than $B(N)$. Recall that $X_i$ are drawn iid uniformly from $[N]$. By Theorem \ref{thm:generalization with noise}, the MDL interpolating neural network satisfies
\[
\Prob_{x, \cl{D}}(\widehat{f}_{\MDL}(x) \neq \tP(x)) \leq \rho (1 + 8c_3\ln N) +  O\left( \frac{\rho^{-1}\ln N}{n}\right).
\]
Depending on $\rho, n, N$ the averaged error can be less than $\frac{1}{\ln N}$, which signals non-trivial generalization better than the prime number theorem. For example, if $n = \sqrt{N}$ and $\rho = 1/(8c_3(\ln N)^3),$ then by the bound above we have
\[
\Prob_{x, \cl{D}}(\widehat{f}_{\MDL}(x) \neq \tP(x)) \leq O\left( \frac{1}{(\ln N)^2} + \frac{(\ln N)^4}{\sqrt{N}} \right) \ll O\left( \frac{1}{\ln N}\right)
\]
which is smaller then the fraction of primes less than $N$, suggesting that the MDL interpolator does not trivially return the zero function.

\end{example}

The proof strategy of previous theorem extends to the case where the corruption pattern of the data is low complexity. One could prove a direct analog of Theorem \ref{thm:generalization with noise}, using an extension of Lemma \ref{lemma:noisy_perturbation_nn}, as long as the noise can be interpolated by a neural network of low description length. Because the description length of the perturbed neural network does not depend on the number of noisy inputs, the generalization result would be stronger than that of Theorem \ref{thm:generalization with noise}. For example, one can consider data which is corrupted by pseudorandom noise. Simple pseudorandom number generators such as the \texttt{xorshift} algorithm \cite{marsaglia2003xorshift} can be described by short programs; some of these can be expressed as SNPs. As this model of noise is different than that typically considered in measure-theoretic probability, we will not elaborate further.

\begin{remark}[Comparison to \cite{harel2024provable}]
\label{remark:harel_comparison}
As previously mentioned, \cite{harel2024provable} consider generalization for minimum size binary threshold neural networks interpolators, which we will denote $\widehat{f}_{\textsf{min-NN}}$, on datasets generated by a true neural network $f_{\textsf{true}}$ of a similar form. The outputs are binary and are corrupted with some probability $\rho$. Although the setting is different from ours, it is instructive to compare the forms of the tempered overfitting bounds. To facilitate comparison with our results, we focus on a subset of their results where $\widehat{f}_{\textsf{min-NN}}$ is trained on a noisy dataset with corruption rate $\rho$, and evaluated on a test point with a noiseless label; this is addressed in Theorem 4.2, Lemma A.9, and also Figure 1b of \cite{harel2024provable}. With arbitrary dependence of the noise and the inputs, \cite{harel2024provable} proves that
\begin{equation}
\label{eq:harel_worst_case}
\Prob\left(\widehat{f}_{\textsf{min-NN}}(x) \neq f_{\textsf{true}}(x) \right) = \frac{1 - \rho^\rho (1-\rho)^{1-\rho} - \rho}{1 - 2\rho} + o_n(1).
\end{equation}
In the case where the labels are independent of the data, it is shown that the error rate is $\rho + o_n(1)$. The results are obtained using information-theoretic techniques and an interesting novel construction using binary threshold neural networks to fit binary label noise. In comparison, Theorem \ref{thm:generalization with noise} proves the worst case error in our setting is $O(\rho) + o_n(1)$. For small corruption rates $\rho$, the error is smaller than in Eq.~ \eqref{eq:harel_worst_case}, which behaves like $\rho \ln 1/\rho + o_n(1).$ The approach for Theorem~ \ref{thm:generalization with noise} does not suggest a way to meaningfully take advantage of an independence assumption for the noise, whereas the information-theoretic approach in~ \cite{harel2024provable} allows for better results in the binary setting. In addition our constants could be improved with a tighter analysis or a smaller alphabet size for the notion of description length. We leave such improvements to future work.
\end{remark}

\section{Discussion}

Theorem \ref{thm:generalization_bound} provides no practical guidance on how to find the minimum description length neural network interpolating the data, beyond brute-force search. Notice that the architecture may change. \cite{lan2022minimum} give very interesting empirical results for a type of MDL network different from ours; they show genetic algorithms are useful for finding the MDL network. Our theorem also does not say anything about neural networks trained with gradient-based methods. Motivated by recent results \cite{mingard2023deep, mingard2019neural, goldblum2023no} outlined in Section \ref{sec:litreview}, proving a result that neural networks optimized through gradient-descent type methods are typically of low complexity could give practical generalization bounds. 



\paragraph{Limitations} The notion of SNPs is somewhat restricted. Although it accommodates many interesting examples, notice that the number of variables cannot scale with the inputs. Moreover, arrays and accessing arrays with variable locations is not allowed. Other natural expressions are disallowed, such as while loops. Furthermore, all variables must be positive integers, and must be bounded by an absolute constant $B := B(N)$. The way Theorem \ref{thm:generalization_bound} depends on $B$ precludes SNPs that do an exponential amount of computation in $N$. The choice of the ReLU function allowed us to encode programmatic statements as neural networks in a direct way, as described in Section \ref{sec:defining_snps}. Some of the constructions in Section \ref{sec:defining_snps} use special properties of the ReLU function, although we expect similar constructions to hold with the threshold activation function $\mathbf{1}\set{x > 0}.$ With different smooth activation functions like the sigmoid, the translation between networks and programs would be more complicated. If one could approximate the ReLU function or threshold units, results analogous to Proposition \ref{prop:nn_compressed_representation}, \ref{prop:efficient_conversion} should hold. 

Many of these limitations can be overcome by increasing the expressivity of SNPs as a programming language, while considering more expressive description measures. As long as there is a conversion between short programs and neural networks of low complexity, the generalization idea of Theorem \ref{thm:generalization_bound} carries through. By extending the programming language, other neural network architectures beyond feedforward networks may have to be considered. For example, can generalization guarantees be obtained for convolutional neural network architectures on structured image data? Can similar guarantees be obtained for recurrent architectures on structured sequence data? In particular, there has been much recent interest in the transformer architecture, in an attempt to explain various phenomena in large language models such as in-context learning, out-of-distribution generalization, and length generalization \cite{wang2024transformers, abbe2024far, ahuja2024provable}. Specializing our argument to transformers and minimum description learning would be of interest.

In some cases, the interpretability of $\widehat{f}_{\MDL}$ can be of interest. This relates to the mechanistic interpretability literature \cite{nanda2023progress}. For example, if the program $\tP$ were unknown, it would be interesting to investigate to what extent $\widehat{f}_{\MDL}$ describes the program $\tP$. Our results do not speak to this; we only provide conversions from simple neural programs to neural networks and not vice-versa. In some examples, we expect $\widehat{f}_{\MDL}$ to be quite different than $\tP.$ Consider Example \ref{ex:primenumber_program}. If the realized training data consisted of all composite numbers, $\widehat{f}_{\MDL}$ would just be the constant function $0$. Outside of special cases like this, the question appears to be difficult.

\section{Acknowledgments}
We thank two anonymous reviewers for helpful comments and suggestions which improved the paper. TS thanks Kevin Guo, Will Hartog, Michael Howes, Andrea Montanari, and Tselil Schramm for useful feedback and discussion. TS acknowledges support from the NSF Graduate Research Fellowship Program under Grant DGE-1656518. SC's research was partially supported by NSF grants DMS-2413864 and DMS-2153654.

\bibliographystyle{alpha}
\bibliography{main}

\newcommand{\etalchar}[1]{$^{#1}$}
\begin{thebibliography}{GMGH{\etalchar{+}}24}

\bibitem[ABAB{\etalchar{+}}21]{abbe2021staircase}
Emmanuel Abbe, Enric Boix-Adsera, Matthew~S Brennan, Guy Bresler, and Dheeraj Nagaraj.
\newblock The staircase property: How hierarchical structure can guide deep learning.
\newblock {\em Advances in Neural Information Processing Systems}, 34:26989--27002, 2021.

\bibitem[ABL{\etalchar{+}}24]{abbe2024far}
Emmanuel Abbe, Samy Bengio, Aryo Lotfi, Colin Sandon, and Omid Saremi.
\newblock How far can transformers reason? the locality barrier and inductive scratchpad.
\newblock {\em arXiv preprint arXiv:2406.06467}, 2024.

\bibitem[ABLR23]{abbe2023generalization}
Emmanuel Abbe, Samy Bengio, Aryo Lotfi, and Kevin Rizk.
\newblock Generalization on the unseen, logic reasoning and degree curriculum.
\newblock {\em arXiv preprint arXiv:2301.13105}, 2023.

\bibitem[AGNZ18]{arora2018stronger}
Sanjeev Arora, Rong Ge, Behnam Neyshabur, and Yi~Zhang.
\newblock Stronger generalization bounds for deep nets via a compression approach.
\newblock In {\em International conference on machine learning}, pages 254--263. PMLR, 2018.

\bibitem[AM24]{ahuja2024provable}
Kartik Ahuja and Amin Mansouri.
\newblock On provable length and compositional generalization.
\newblock {\em arXiv preprint arXiv:2402.04875}, 2024.

\bibitem[Bar93]{barron1993universal}
Andrew~R Barron.
\newblock Universal approximation bounds for superpositions of a sigmoidal function.
\newblock {\em IEEE Transactions on Information theory}, 39(3):930--945, 1993.

\bibitem[BCW{\etalchar{+}}23]{bai2023transformers}
Yu~Bai, Fan Chen, Huan Wang, Caiming Xiong, and Song Mei.
\newblock Transformers as statisticians: Provable in-context learning with in-context algorithm selection.
\newblock {\em arXiv preprint arXiv:2306.04637}, 2023.

\bibitem[BGMSS17]{brutzkus2017sgd}
Alon Brutzkus, Amir Globerson, Eran Malach, and Shai Shalev-Shwartz.
\newblock Sgd learns over-parameterized networks that provably generalize on linearly separable data.
\newblock {\em arXiv preprint arXiv:1710.10174}, 2017.

\bibitem[BGS97]{balcazar1997computational}
Jos{\'e}~L Balc{\'a}zar, Ricard Gavalda, and Hava~T Siegelmann.
\newblock Computational power of neural networks: A characterization in terms of kolmogorov complexity.
\newblock {\em IEEE Transactions on Information Theory}, 43(4):1175--1183, 1997.

\bibitem[BHMM19]{belkin2019reconciling}
Mikhail Belkin, Daniel Hsu, Siyuan Ma, and Soumik Mandal.
\newblock Reconciling modern machine-learning practice and the classical bias--variance trade-off.
\newblock {\em Proceedings of the National Academy of Sciences}, 116(32):15849--15854, 2019.

\bibitem[BLLT20]{bartlett2020benign}
Peter~L Bartlett, Philip~M Long, G{\'a}bor Lugosi, and Alexander Tsigler.
\newblock Benign overfitting in linear regression.
\newblock {\em Proceedings of the National Academy of Sciences}, 117(48):30063--30070, 2020.

\bibitem[BM03]{bartlett2003vapnik}
Peter~L Bartlett and Wolfgang Maass.
\newblock Vapnik-chervonenkis dimension of neural nets.
\newblock {\em The handbook of brain theory and neural networks}, pages 1188--1192, 2003.

\bibitem[BMR{\etalchar{+}}20]{brown2020language}
Tom Brown, Benjamin Mann, Nick Ryder, Melanie Subbiah, Jared~D Kaplan, Prafulla Dhariwal, Arvind Neelakantan, Pranav Shyam, Girish Sastry, Amanda Askell, et~al.
\newblock Language models are few-shot learners.
\newblock {\em Advances in neural information processing systems}, 33:1877--1901, 2020.

\bibitem[BMR21]{bartlett2021deep}
Peter~L Bartlett, Andrea Montanari, and Alexander Rakhlin.
\newblock Deep learning: a statistical viewpoint.
\newblock {\em Acta numerica}, 30:87--201, 2021.

\bibitem[BPKB22]{bhattamishra2022simplicity}
Satwik Bhattamishra, Arkil Patel, Varun Kanade, and Phil Blunsom.
\newblock Simplicity bias in transformers and their ability to learn sparse boolean functions.
\newblock {\em arXiv preprint arXiv:2211.12316}, 2022.

\bibitem[BRY98]{barron1998minimum}
Andrew Barron, Jorma Rissanen, and Bin Yu.
\newblock The minimum description length principle in coding and modeling.
\newblock {\em IEEE transactions on information theory}, 44(6):2743--2760, 1998.

\bibitem[CGM{\etalchar{+}}17]{chen2017recurrent}
Yining Chen, Sorcha Gilroy, Andreas Maletti, Jonathan May, and Kevin Knight.
\newblock Recurrent neural networks as weighted language recognizers.
\newblock {\em arXiv preprint arXiv:1711.05408}, 2017.

\bibitem[CJLZ22]{chen2022nonparametric}
Minshuo Chen, Haoming Jiang, Wenjing Liao, and Tuo Zhao.
\newblock Nonparametric regression on low-dimensional manifolds using deep relu networks: Function approximation and statistical recovery.
\newblock {\em Information and Inference: A Journal of the IMA}, 11(4):1203--1253, 2022.

\bibitem[CTR20]{clark2020transformers}
Peter Clark, Oyvind Tafjord, and Kyle Richardson.
\newblock Transformers as soft reasoners over language.
\newblock {\em arXiv preprint arXiv:2002.05867}, 2020.

\bibitem[Cyb89]{cybenko1989approximation}
George Cybenko.
\newblock Approximation by superpositions of a sigmoidal function.
\newblock {\em Mathematics of control, signals and systems}, 2(4):303--314, 1989.

\bibitem[GFRW23]{goldblum2023no}
Micah Goldblum, Marc Finzi, Keefer Rowan, and Andrew~Gordon Wilson.
\newblock The no free lunch theorem, kolmogorov complexity, and the role of inductive biases in machine learning.
\newblock {\em arXiv preprint arXiv:2304.05366}, 2023.

\bibitem[GMGH{\etalchar{+}}24]{grau2024learning}
Jordi Grau-Moya, Tim Genewein, Marcus Hutter, Laurent Orseau, Gr{\'e}goire Del{\'e}tang, Elliot Catt, Anian Ruoss, Li~Kevin Wenliang, Christopher Mattern, Matthew Aitchison, et~al.
\newblock Learning universal predictors.
\newblock {\em arXiv preprint arXiv:2401.14953}, 2024.

\bibitem[GMKZ20]{goldt2020modeling}
Sebastian Goldt, Marc M{\'e}zard, Florent Krzakala, and Lenka Zdeborov{\'a}.
\newblock Modeling the influence of data structure on learning in neural networks: The hidden manifold model.
\newblock {\em Physical Review X}, 10(4):041044, 2020.

\bibitem[GRS{\etalchar{+}}23]{giannou2023looped}
Angeliki Giannou, Shashank Rajput, Jy-yong Sohn, Kangwook Lee, Jason~D Lee, and Dimitris Papailiopoulos.
\newblock Looped transformers as programmable computers.
\newblock In {\em International Conference on Machine Learning}, pages 11398--11442. PMLR, 2023.

\bibitem[Gr{\"u}07]{grunwald2007minimum}
Peter~D Gr{\"u}nwald.
\newblock {\em The minimum description length principle}.
\newblock MIT press, 2007.

\bibitem[GTLV22]{garg2022can}
Shivam Garg, Dimitris Tsipras, Percy Liang, and Gregory Valiant.
\newblock What can transformers learn in-context.
\newblock {\em A Case Study of Simple Function Classes}, 2022.

\bibitem[HHV{\etalchar{+}}24]{harel2024provable}
Itamar Harel, William Hoza, Gal Vardi, Itay Evron, Nati Srebro, and Daniel Soudry.
\newblock Provable tempered overfitting of minimal nets and typical nets.
\newblock {\em Advances in Neural Information Processing Systems}, 37:53458--53524, 2024.

\bibitem[Hoe94]{hoeffding1994probability}
Wassily Hoeffding.
\newblock Probability inequalities for sums of bounded random variables.
\newblock {\em The collected works of Wassily Hoeffding}, pages 409--426, 1994.

\bibitem[HS97]{hochreiter1997flat}
Sepp Hochreiter and J{\"u}rgen Schmidhuber.
\newblock Flat minima.
\newblock {\em Neural computation}, 9(1):1--42, 1997.

\bibitem[HSW89]{hornik1989multilayer}
Kurt Hornik, Maxwell Stinchcombe, and Halbert White.
\newblock Multilayer feedforward networks are universal approximators.
\newblock {\em Neural networks}, 2(5):359--366, 1989.

\bibitem[HvC]{hinton93keeping}
GE~Hinton and Drew van Camp.
\newblock Keeping neural networks simple by minimising the description length of weights. 1993.
\newblock In {\em Proceedings of COLT-93}, pages 5--13.

\bibitem[KYS23]{kornowski2023tempered}
Guy Kornowski, Gilad Yehudai, and Ohad Shamir.
\newblock From tempered to benign overfitting in relu neural networks.
\newblock {\em Advances in Neural Information Processing Systems}, 36:58011--58046, 2023.

\bibitem[LAG{\etalchar{+}}22]{liu2022transformers}
Bingbin Liu, Jordan~T Ash, Surbhi Goel, Akshay Krishnamurthy, and Cyril Zhang.
\newblock Transformers learn shortcuts to automata.
\newblock {\em arXiv preprint arXiv:2210.10749}, 2022.

\bibitem[Lan09]{landau1909einteilung}
Edmund Landau.
\newblock {\em {\"U}ber die Einteilung der positiven ganzen Zahlen in vier Klassen nach der Mindestzahl der zu ihrer additiven Zusammensetzung erforderlichen Quadrate}.
\newblock 1909.

\bibitem[LBM23]{lin2023transformers}
Licong Lin, Yu~Bai, and Song Mei.
\newblock Transformers as decision makers: Provable in-context reinforcement learning via supervised pretraining.
\newblock {\em arXiv preprint arXiv:2310.08566}, 2023.

\bibitem[LGCK22]{lan2022minimum}
Nur Lan, Michal Geyer, Emmanuel Chemla, and Roni Katzir.
\newblock Minimum description length recurrent neural networks.
\newblock {\em Transactions of the Association for Computational Linguistics}, 10:785--799, 2022.

\bibitem[LKF{\etalchar{+}}24]{lindner2024tracr}
David Lindner, J{\'a}nos Kram{\'a}r, Sebastian Farquhar, Matthew Rahtz, Tom McGrath, and Vladimir Mikulik.
\newblock Tracr: Compiled transformers as a laboratory for interpretability.
\newblock {\em Advances in Neural Information Processing Systems}, 36, 2024.

\bibitem[LL18]{li2018learning}
Yuanzhi Li and Yingyu Liang.
\newblock Learning overparameterized neural networks via stochastic gradient descent on structured data.
\newblock {\em Advances in neural information processing systems}, 31, 2018.

\bibitem[LSSS14]{livni2014computational}
Roi Livni, Shai Shalev-Shwartz, and Ohad Shamir.
\newblock On the computational efficiency of training neural networks.
\newblock {\em Advances in neural information processing systems}, 27, 2014.

\bibitem[LV{\etalchar{+}}08]{li2008introduction}
Ming Li, Paul Vit{\'a}nyi, et~al.
\newblock {\em An introduction to Kolmogorov complexity and its applications}, volume~3.
\newblock Springer, 2008.

\bibitem[Mar03]{marsaglia2003xorshift}
George Marsaglia.
\newblock Xorshift rngs.
\newblock {\em Journal of Statistical software}, 8:1--6, 2003.

\bibitem[M{\'e}z23]{mezard2023spin}
Marc M{\'e}zard.
\newblock Spin glass theory and its new challenge: structured disorder.
\newblock {\em Indian Journal of Physics}, pages 1--12, 2023.

\bibitem[MOKG23]{mali2023computational}
Ankur Mali, Alexander Ororbia, Daniel Kifer, and Lee Giles.
\newblock On the computational complexity and formal hierarchy of second order recurrent neural networks.
\newblock {\em arXiv preprint arXiv:2309.14691}, 2023.

\bibitem[MP43]{mcculloch1943logical}
Warren~S McCulloch and Walter Pitts.
\newblock A logical calculus of the ideas immanent in nervous activity.
\newblock {\em The bulletin of mathematical biophysics}, 5:115--133, 1943.

\bibitem[MRVPL23]{mingard2023deep}
Chris Mingard, Henry Rees, Guillermo Valle-P{\'e}rez, and Ard~A Louis.
\newblock Do deep neural networks have an inbuilt occam's razor?
\newblock {\em arXiv preprint arXiv:2304.06670}, 2023.

\bibitem[MS23]{manoj2023interpolation}
Naren~Sarayu Manoj and Nathan Srebro.
\newblock Interpolation learning with minimum description length.
\newblock {\em arXiv preprint arXiv:2302.07263}, 2023.

\bibitem[MSA{\etalchar{+}}22]{mallinar2022benign}
Neil Mallinar, James Simon, Amirhesam Abedsoltan, Parthe Pandit, Misha Belkin, and Preetum Nakkiran.
\newblock Benign, tempered, or catastrophic: Toward a refined taxonomy of overfitting.
\newblock {\em Advances in neural information processing systems}, 35:1182--1195, 2022.

\bibitem[MSS18]{malach2018provably}
Eran Malach and Shai Shalev-Shwartz.
\newblock A provably correct algorithm for deep learning that actually works.
\newblock {\em arXiv preprint arXiv:1803.09522}, 2018.

\bibitem[MSVP{\etalchar{+}}19]{mingard2019neural}
Chris Mingard, Joar Skalse, Guillermo Valle-P{\'e}rez, David Mart{\'\i}nez-Rubio, Vladimir Mikulik, and Ard~A Louis.
\newblock Neural networks are a priori biased towards boolean functions with low entropy.
\newblock {\em arXiv preprint arXiv:1909.11522}, 2019.

\bibitem[MW23]{mei2023deep}
Song Mei and Yuchen Wu.
\newblock Deep networks as denoising algorithms: Sample-efficient learning of diffusion models in high-dimensional graphical models.
\newblock {\em arXiv preprint arXiv:2309.11420}, 2023.

\bibitem[NCL{\etalchar{+}}23]{nanda2023progress}
Neel Nanda, Lawrence Chan, Tom Lieberum, Jess Smith, and Jacob Steinhardt.
\newblock Progress measures for grokking via mechanistic interpretability.
\newblock {\em arXiv preprint arXiv:2301.05217}, 2023.

\bibitem[NKB{\etalchar{+}}19]{nakkiran2019deep}
Preetum Nakkiran, Gal Kaplun, Yamini Bansal, Tristan Yang, Boaz Barak, and Ilya Sutskever.
\newblock Deep double descent: where bigger models and more data hurt (2019).
\newblock {\em arXiv preprint arXiv:1912.02292}, 6, 2019.

\bibitem[PBM21]{perez2021attention}
Jorge P{\'e}rez, Pablo Barcel{\'o}, and Javier Marinkovic.
\newblock Attention is turing-complete.
\newblock {\em Journal of Machine Learning Research}, 22(75):1--35, 2021.

\bibitem[PMB19]{perez2019turing}
Jorge P{\'e}rez, Javier Marinkovi{\'c}, and Pablo Barcel{\'o}.
\newblock On the turing completeness of modern neural network architectures.
\newblock {\em arXiv preprint arXiv:1901.03429}, 2019.

\bibitem[Raz24]{razin2024understanding}
Noam Razin.
\newblock Understanding deep learning via notions of rank.
\newblock {\em arXiv preprint arXiv:2408.02111}, 2024.

\bibitem[Ris83]{rissanen1983universal}
Jorma Rissanen.
\newblock A universal prior for integers and estimation by minimum description length.
\newblock {\em The Annals of statistics}, 11(2):416--431, 1983.

\bibitem[S{\etalchar{+}}98]{sontag1998vc}
Eduardo~D Sontag et~al.
\newblock Vc dimension of neural networks.
\newblock {\em NATO ASI Series F Computer and Systems Sciences}, 168:69--96, 1998.

\bibitem[SC24]{svete2024transformers}
Anej Svete and Ryan Cotterell.
\newblock Transformers can represent $ n $-gram language models.
\newblock {\em arXiv preprint arXiv:2404.14994}, 2024.

\bibitem[Sch97]{schmidhuber1997discovering}
J{\"u}rgen Schmidhuber.
\newblock Discovering neural nets with low kolmogorov complexity and high generalization capability.
\newblock {\em Neural Networks}, 10(5):857--873, 1997.

\bibitem[SHT23]{sanford2023representational}
Clayton Sanford, Daniel Hsu, and Matus Telgarsky.
\newblock Representational strengths and limitations of transformers.
\newblock {\em arXiv preprint arXiv:2306.02896}, 2023.

\bibitem[SMG24]{stogin2024provably}
John Stogin, Ankur Mali, and C~Lee Giles.
\newblock A provably stable neural network turing machine with finite precision and time.
\newblock {\em Information Sciences}, 658:120034, 2024.

\bibitem[SMW{\etalchar{+}}24]{strobl2024formal}
Lena Strobl, William Merrill, Gail Weiss, David Chiang, and Dana Angluin.
\newblock What formal languages can transformers express? a survey.
\newblock {\em Transactions of the Association for Computational Linguistics}, 12:543--561, 2024.

\bibitem[Sol64]{solomonoff1964formal}
Ray~J Solomonoff.
\newblock A formal theory of inductive inference. part i.
\newblock {\em Information and control}, 7(1):1--22, 1964.

\bibitem[SS92]{siegelmann1992computational}
Hava~T Siegelmann and Eduardo~D Sontag.
\newblock On the computational power of neural nets.
\newblock In {\em Proceedings of the fifth annual workshop on Computational learning theory}, pages 440--449, 1992.

\bibitem[SSBD14]{shalev2014understanding}
Shai Shalev-Shwartz and Shai Ben-David.
\newblock {\em Understanding machine learning: From theory to algorithms}.
\newblock Cambridge university press, 2014.

\bibitem[TB23]{tsigler2023benign}
Alexander Tsigler and Peter~L Bartlett.
\newblock Benign overfitting in ridge regression.
\newblock {\em Journal of Machine Learning Research}, 24(123):1--76, 2023.

\bibitem[TNHA24]{teney2024neural}
Damien Teney, Armand Nicolicioiu, Valentin Hartmann, and Ehsan Abbasnejad.
\newblock Neural redshift: Random networks are not random functions.
\newblock {\em arXiv preprint arXiv:2403.02241}, 2024.

\bibitem[VPCL18]{valle2018deep}
Guillermo Valle-Perez, Chico~Q Camargo, and Ard~A Louis.
\newblock Deep learning generalizes because the parameter-function map is biased towards simple functions.
\newblock {\em arXiv preprint arXiv:1805.08522}, 2018.

\bibitem[VSP{\etalchar{+}}17]{vaswani2017attention}
Ashish Vaswani, Noam Shazeer, Niki Parmar, Jakob Uszkoreit, Llion Jones, Aidan~N Gomez, {\L}ukasz Kaiser, and Illia Polosukhin.
\newblock Attention is all you need.
\newblock {\em Advances in neural information processing systems}, 30, 2017.

\bibitem[WCM22]{wei2022statistically}
Colin Wei, Yining Chen, and Tengyu Ma.
\newblock Statistically meaningful approximation: a case study on approximating turing machines with transformers.
\newblock {\em Advances in Neural Information Processing Systems}, 35:12071--12083, 2022.

\bibitem[WGY21]{weiss2021thinking}
Gail Weiss, Yoav Goldberg, and Eran Yahav.
\newblock Thinking like transformers.
\newblock In {\em International Conference on Machine Learning}, pages 11080--11090. PMLR, 2021.

\bibitem[WWHL24]{wang2024transformers}
Zixuan Wang, Stanley Wei, Daniel Hsu, and Jason~D Lee.
\newblock Transformers provably learn sparse token selection while fully-connected nets cannot.
\newblock {\em arXiv preprint arXiv:2406.06893}, 2024.

\bibitem[YMT{\etalchar{+}}23]{yang2023introduction}
Yibo Yang, Stephan Mandt, Lucas Theis, et~al.
\newblock An introduction to neural data compression.
\newblock {\em Foundations and Trends{\textregistered} in Computer Graphics and Vision}, 15(2):113--200, 2023.

\bibitem[ZBH{\etalchar{+}}21]{zhang2021understanding}
Chiyuan Zhang, Samy Bengio, Moritz Hardt, Benjamin Recht, and Oriol Vinyals.
\newblock Understanding deep learning (still) requires rethinking generalization.
\newblock {\em Communications of the ACM}, 64(3):107--115, 2021.

\bibitem[ZBL{\etalchar{+}}23]{zhou2023algorithms}
Hattie Zhou, Arwen Bradley, Etai Littwin, Noam Razin, Omid Saremi, Josh Susskind, Samy Bengio, and Preetum Nakkiran.
\newblock What algorithms can transformers learn? a study in length generalization.
\newblock {\em arXiv preprint arXiv:2310.16028}, 2023.

\bibitem[ZLH22]{zhu2022nearly}
Huangjun Zhu, Zihao Li, and Masahito Hayashi.
\newblock Nearly tight universal bounds for the binomial tail probabilities.
\newblock {\em arXiv preprint arXiv:2211.01688}, 2022.

\end{thebibliography}

\appendix

\section{Miscellaneous Results}
\label{sec:other_lemmas}

\begin{lemma}
\label{lemma:binomial_tail}
For all $\rho,\e \in [0,1]$ with $\rho + \e \in [0,1]$ we have
\[
\Prob(\textsf{Bin}(n,\rho+\e) \leq n\rho) \leq \exp(-\frac{n\e^2}{2(\rho + \e)}).
\]
\end{lemma}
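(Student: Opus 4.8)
The plan is to derive the claim from the standard exponential (Chernoff) bound for the lower tail of a binomial random variable. First I would dispose of the degenerate cases: if $\e = 0$ the right-hand side equals $\exp(0) = 1$ and the inequality is trivial, and if $n = 0$ both sides are again immediate; so henceforth assume $n \ge 1$ and $\e > 0$, which in particular forces $p := \rho + \e \in (0,1]$ and $\rho < p$. Let $X \sim \textsf{Bin}(n,p)$ with mean $\mu := np$.

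The core step is the exponential Markov inequality. For any $t > 0$,
\[
\Prob(X \le n\rho) = \Prob\bigl(e^{-tX} \ge e^{-tn\rho}\bigr) \le e^{tn\rho}\,\mathbb{E}\bigl[e^{-tX}\bigr] = \exp\!\Bigl(tn\rho + n\ln\bigl(1 - p + pe^{-t}\bigr)\Bigr).
\]
Minimizing the exponent over $t > 0$ — the optimizer is $e^{-t} = \tfrac{\rho(1-p)}{p(1-\rho)}$, which lies in $(0,1)$ precisely because $\rho < p$ — yields the relative-entropy bound
\[
\Prob(X \le n\rho) \le \exp\bigl(-n\,D(\rho \,\|\, p)\bigr), \qquad D(a\|b) := a\ln\tfrac{a}{b} + (1-a)\ln\tfrac{1-a}{1-b},
\]
with the usual convention $0\ln 0 = 0$. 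It then remains to check the elementary estimate $D(\rho\|p) \ge \e^2/(2p)$.

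For this last inequality I would invoke (or quickly verify by a second-derivative argument in the first variable, holding the second fixed) the standard one-sided bound $D(a\|b) \ge \tfrac{(a-b)^2}{2\max(a,b)}$ for $a \le b$; applying it with $a = \rho \le p = b$ gives $D(\rho\|p) \ge \tfrac{(p-\rho)^2}{2p} = \tfrac{\e^2}{2(\rho+\e)}$, which is exactly the claimed exponent. Equivalently, and perhaps more transparently, one can cite the multiplicative Chernoff bound $\Prob(X \le (1-\gamma)\mu) \le e^{-\mu\gamma^2/2}$ for $\gamma \in [0,1]$ and apply it with $\gamma = \e/p$, since then $(1-\gamma)\mu = n(p - \e) = n\rho$ and $\mu\gamma^2/2 = n\e^2/(2p) = n\e^2/(2(\rho+\e))$. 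The whole argument is routine; the only mild point requiring care is the sign/monotonicity bookkeeping that justifies using the lower-tail bound (i.e.\ that $\rho \le p$ throughout), together with the short convexity estimate for the divergence, and the handling of the boundary cases noted at the outset.
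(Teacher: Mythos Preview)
Your proposal is correct and follows essentially the same route as the paper: first reduce to the relative-entropy (Chernoff--Hoeffding) bound $\Prob(X\le n\rho)\le\exp(-nD(\rho\|\rho+\e))$, then lower-bound $D(\rho\|\rho+\e)$ by $\e^2/(2(\rho+\e))$. The only cosmetic difference is that the paper verifies the KL lower bound by a direct derivative comparison in $\e$, whereas you invoke the standard form $D(a\|b)\ge(a-b)^2/(2b)$ for $a\le b$ (equivalently, the multiplicative Chernoff bound); these are the same elementary inequality packaged differently.
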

\begin{proof}
A inequality of Hoeffding \cite{hoeffding1994probability} (see also \cite{zhu2022nearly}) shows that 
\[
\Prob(\textsf{Bin}(n,\rho+\e) \leq n\rho) \leq \exp(-n D(\rho \ || \  \rho+\e)).
\]
Here $D(p_0 \ || \ p_1)$ is relative entropy between two Bernoulli distributions:
\[
D(p_0 \ || \ p_1) = p_0 \ln \frac{p_0}{p_1} + (1-p_0) \ln \frac{1 - p_0}{1 - p_1}.
\]
It suffices to show that $D(\rho \ || \ \rho + \e) \geq \frac{\e^2}{2(\rho+\e)}$. At $\e = 0$, both quantities are zero. We will show that
\[
\frac{d}{d\e} \bigg|_{\e_0} D(\rho \ || \ \rho + \e) \geq \frac{d}{d\e}\bigg |_{\e_0} \frac{\e^2}{2(\rho+\e)}
\]
for all $\e_0$. Computing both derivatives and simplifying, we obtain
\begin{align*}
 -\frac{\rho}{\rho+\e_0} + \frac{1-\rho}{1-\rho-\e_0} & \geq -\frac{\e_0^2}{2(\rho+\e_0)^2} + \frac{\e_0}{\rho + \e_0} \\
\Leftrightarrow  -\rho + \frac{(1-\rho)(\rho + \e_0)}{1-\rho-\e_0} & \geq -\frac{\e_0^2}{2(\rho+\e_0)} + \e_0 \\
\Leftrightarrow  \frac{\e_0}{1 - \rho - \e_0} & \geq \frac{\e_0^2 + 2\rho \e_0}{2(\rho + \e_0)} \\
\Leftrightarrow   2(\rho + \e_0) & \geq (\e_0 + 2\rho)(1 - \rho - \e_0),
\end{align*}
which is equivalent to the trivial inequality $2\rho^2 + (3\rho+1) \e_0 + \e_0^2 \geq 0$.
\end{proof}

\section{Full Simple Neural Program Descriptions of Examples}
\label{sec:full_programs}

\begin{example}[Prime Number Checking]
\label{ex:primenumber_program_full}
Let $N$ be fixed. For any $n \leq N$, checking whether $n$ is a prime number can be expressed as an SNP. 

\begin{minted}{python}
input n
int i = 2
int j = 2
int i_mult = 0
int res_mult = 0
int prod = 0
int t = 0
int sum = 0
bool output = 0
bool prod_equals = 0
\end{minted}
\begin{minted}[linenos]{python}
for i = 2,...,n:
    for j = 2,...,n:
        res_mult = 0
        i_mult = 0
        for i_mult = 1,...,j:
            res_mult = res_mult + i
        prod = res_mult
        prod_equals = (prod == n)
        res = res + prod_equals
output = (res > 0)
return output
\end{minted}
\end{example}

\begin{example}[Sums of Squares]
\label{ex:sos_full}
Consider the sum of squares example from before. It has variable context
\begin{minted}{python}
input n
int i = 0
int j = 0
int res = 0
int idx1 = 1
int idx2 = 1
int square1 = 0
int square2 = 0
bool output = 0
bool sum_of_squares = 0
\end{minted}
with the full program stated as
\begin{minted}[linenos]{python}
for i = 0,...,n:
    for j = 0,...,n:
        square1 = 0
        for idx1 = 1,...,i:
            square1 = square1 + i
        square2 = 0
        for idx2 = 1,...,j:
            square2 = square2 + j
        sum = square1 + square2
        sum_of_squares = (sum == n)
        res = res + sum_of_squares
output = (res > 0)
return output
\end{minted}
\end{example}

\end{document}